\newcommand{\mycc}{\cellcolor{lightgray}}
\newlist{enumthm}{enumerate}{1}
\setlist[enumthm]{label=(\Alph*)}
\newtheorem{definition}{Definition}
\newcommand{\qualification}[1]{\ifthmt@thisistheone#1\fi}
\newtheorem{theorem}{Theorem}
\newtheorem{lemma}{Lemma}
\newtheorem{proposition}{Proposition}
\newtheorem{corollary}{Corollary}[theorem]
\newtheorem{remark}{Remark}
\newtheorem{assumption}{Assumption}
\DeclareMathOperator*{\argmin}{\arg\!\min}
\newcommand{\MMD}{\operatornamewithlimits{MMD}}
\title{Fast and Efficient MMD-based Fair PCA via Optimization over Stiefel Manifold}
\author {
    % Authors
    Junghyun Lee,\textsuperscript{\rm 1}
    Gwangsu Kim*,\textsuperscript{\rm 2}
    Matt Olfat,\textsuperscript{\rm 3,4}
    Mark Hasegawa-Johnson,\textsuperscript{\rm 5}
    Chang D. Yoo*\textsuperscript{\rm 2}
}
\begin{document}

\maketitle

\begin{abstract}
This paper defines fair principal component analysis (PCA) as minimizing the maximum mean discrepancy (MMD) between dimensionality-reduced conditional distributions of different protected classes.
The incorporation of MMD naturally leads to an exact and tractable mathematical formulation of fairness with good statistical properties.
We formulate the problem of fair PCA subject to MMD constraints as a non-convex optimization over the Stiefel manifold and solve it using the Riemannian Exact Penalty Method with Smoothing (REPMS; \citeauthor{repms}, \citeyear{repms}).
Importantly, we provide local optimality guarantees and explicitly show the theoretical effect of each hyperparameter in practical settings, extending previous results.
Experimental comparisons based on synthetic and UCI datasets show that our approach outperforms prior work in explained variance, fairness, and runtime.
\end{abstract}

\section{Introduction}
\label{sec:intro}
It has become increasingly evident that many widely-deployed machine learning algorithms are biased, yielding outcomes that can be discriminatory across key groupings such as race, gender and ethnicity \cite{MehrabiMSLG19}.
As the applications of these algorithms proliferate in protected areas like healthcare \cite{karan2012diagnosing}, hiring \cite{chien2008data} and criminal justice \cite{angwin2016machine}, this creates the potential for further exacerbating social biases.
To address this, there has recently been a surge of interest in ensuring fairness in resulting machine learning algorithms.

Working in high-dimensional spaces can be undesirable as 
%it may cause 
the curse of dimensionality manifests in the form of data sparsity and computational intractability.
Various dimensionality reduction algorithms are deployed to resolve these issues, and Principal Component Analysis (PCA) \cite{Pearson1901, Hotelling1933}, is one of the most popular methods \cite{JolliffeC16}.
One particular advantage of PCA is that there's no need to train a complex neural network.

In this work, fair PCA is defined as performing dimensionality reduction while minimizing the difference in the conditional distributions of projections of different protected groups.
Here, the projected data can be considered as a dimension-reduced fair representation of the input data \cite{ZemelWSPD13}.
We answer the questions of 1) how fairness should be defined for PCA and 2) how to (algorithmically) incorporate fairness into PCA in a {\it fast} and {\it efficient} manner.
This work takes a different approach from prior studies on PCA fairness ~\cite{SamadiTMSV18, OA19}, which is discussed in Section \ref{sec:fair-dim} and \ref{sec:related-works}.

% {\color{red}We proposed the use of MMD in measure(definition) for fair PCA, and provides the theoretical guarantees and empirical validation for the fair constraint optimization over Stiefel manifold (orthogonal vector space).} 

Our main contributions are as follows:

\begin{itemize}
    \item We motivate a new mathematical definition of fairness for PCA using the maximum-mean discrepancy (MMD), which can be evaluated in a computationally efficient manner from the samples while guaranteeing asymptotic consistency.
    Such properties were not available in the previous definition of fair PCA \cite{OA19}.
    This is discussed in detail in Section \ref{sec:fair-dim} and \ref{sec:statistical}.
    
    \item We formulate the task of performing MMD-based fair PCA as a constrained optimization over the Stiefel manifold and propose using REPMS \cite{repms}.
    For the first time, we prove two general theoretical guarantees of REPMS regarding the local minimality and feasibility.
    This is discussed in detail in Section \ref{sec:mbf-pca} and \ref{sec:repms}.
    
    \item Using synthetic and UCI datasets, we verify the efficacy of our approach in terms of explained variance, fairness, and runtime.
    Furthermore, we verify that using fair PCA does indeed result in a fair representation, as in \cite{ZemelWSPD13}.
    This is discussed in detail in Section \ref{sec:experiments}.
\end{itemize}

\begin{figure*}[!t]
	\begin{center}
		\begin{subfigure}[t]{0.24\linewidth}
			\includegraphics[width=\linewidth]{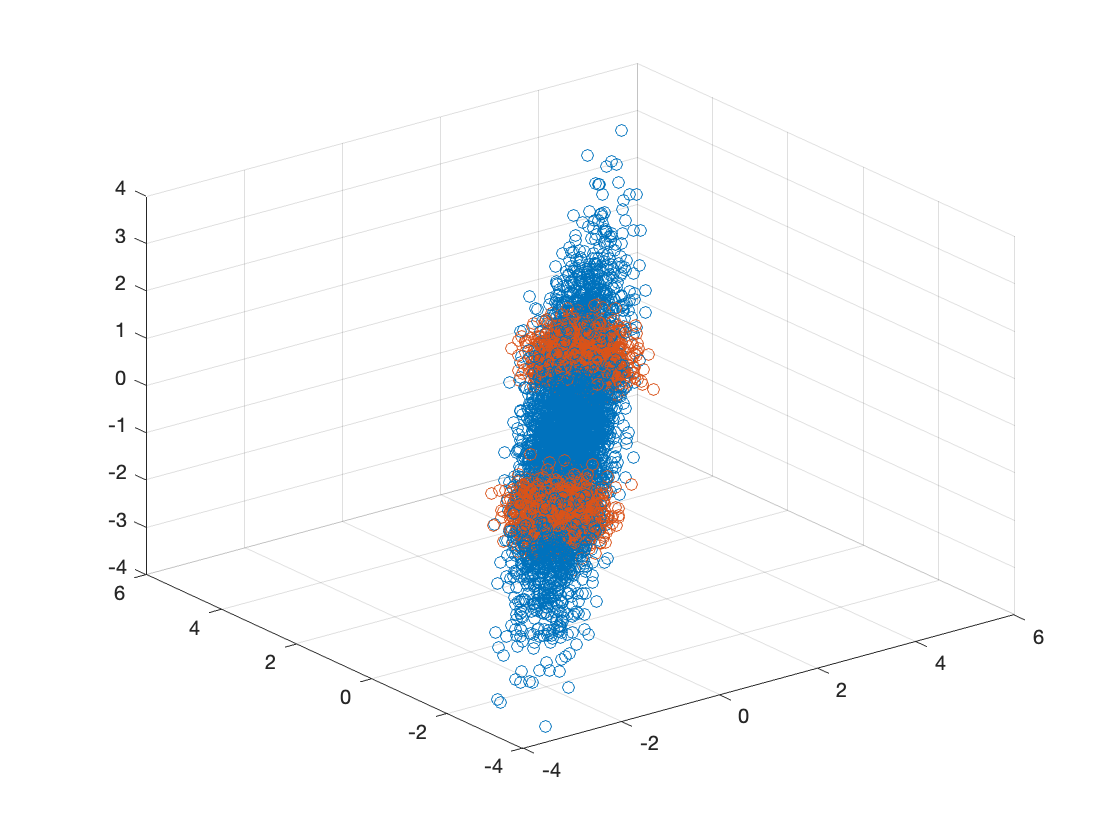}
			\caption{\label{fig:original} Original data}
		\end{subfigure}\hfill
		\begin{subfigure}[t]{0.24\linewidth}
			\includegraphics[width=\linewidth]{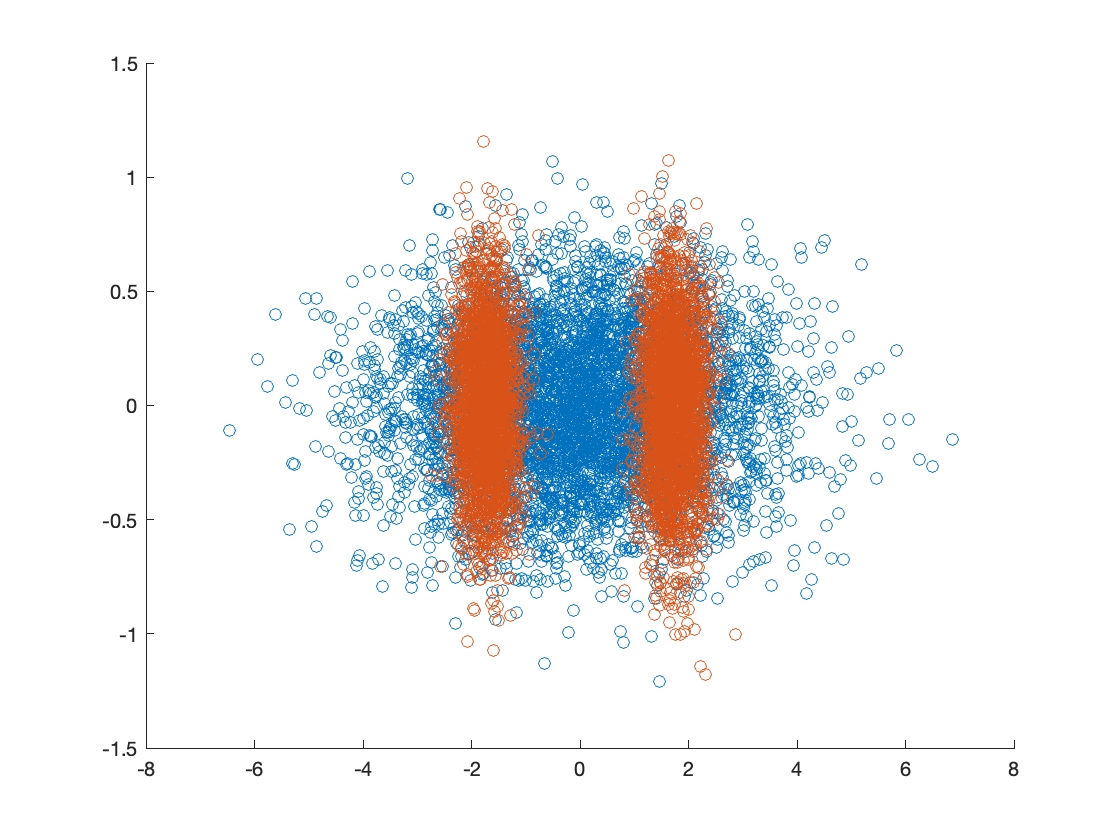}
			\caption{\label{fig:PCA} PCA}
		\end{subfigure}\hfill
		\begin{subfigure}[t]{0.24\linewidth}
			\includegraphics[width=\linewidth]{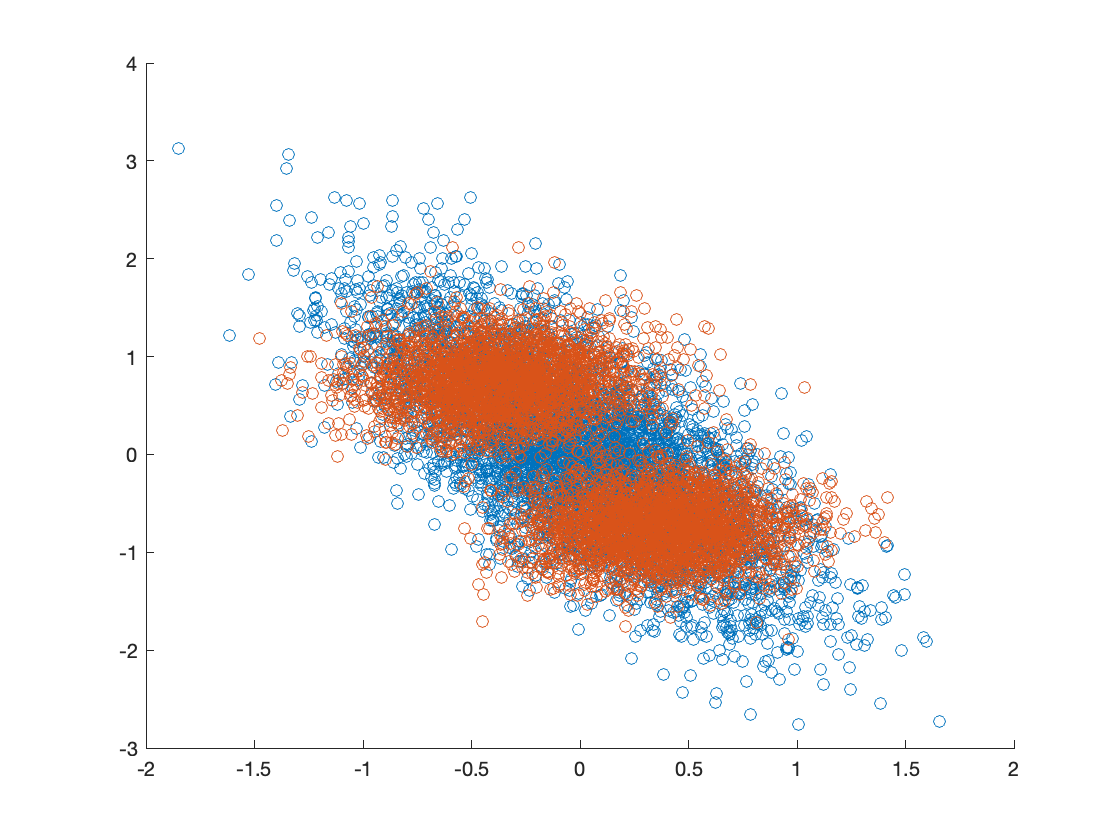}
			\caption{\label{fig:FPCA} \textsc{FPCA} \cite{OA19}}
		\end{subfigure}\hfill
		\begin{subfigure}[t]{0.24\linewidth}
			\includegraphics[width=\linewidth]{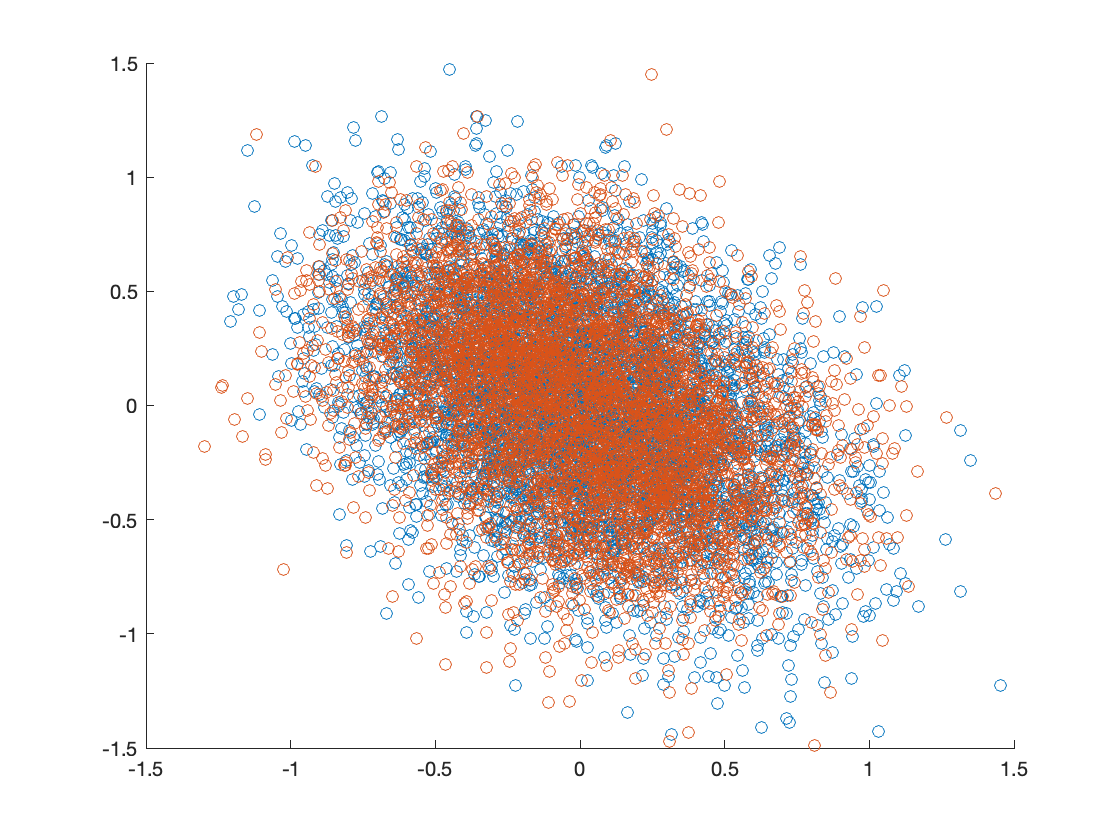}
			\caption{\label{fig:ours} \textsc{MbF-PCA} (ours)}
		\end{subfigure}
		\caption{\label{fig:exp1-1} Synthetic data \#1: Comparison of PCA, FPCA, and \textsc{MbF-PCA} on data composed of two groups with same mean and covariance, but different distributions. Blue and orange represent different protected groups.}
	\end{center}
\end{figure*}

\section{Preliminaries}
\label{sec:preliminaries}

\subsection{Notations}
For $b \geq 1$, let $\mathcal{P}_b$ be the set of all Borel probability measures defined on $\mathbb{R}^b$.
% $\mathcal{S}^b_+$ is the set of positive semi-definite $b \times b$ matrices.
For some measurable function $\Pi : \mathbb{R}^p \rightarrow \mathbb{R}^d$ and a measure $P \in \mathcal{P}_p$, the push-forward measure of $P$ via $\Pi$ is the probability measure $\Pi_\# P \in \mathcal{P}_d$, defined as $(\Pi_\# P)(S) = P(\Pi^{-1}(S))$ for any Borel set $S$.
Let $\bf{0}$ and $\bf{1}$ denote matrices (or vectors) of zeros and ones of appropriate size, respectively.
In this work, we focus on binary cases, i.e., we assume that the protected attribute $A$ and outcome $Y$ are binary ($A, Y \in \{0, 1\}$).
We abbreviate demographic parity, equalized opportunity, and equalized odds as DP, EOP, and EOD, respectively.
% Lastly, we assume that all distributions considered here have finite covariance, and that the diagonal elements of the covariance matrix are strictly positive.

\subsection{Maximum Mean Discrepancy (MMD)}

Let $k : \mathbb{R}^d \times \mathbb{R}^d \rightarrow \mathbb{R}$ be a positive-definite kernel function, and $\mathcal{H}_k$ be a unit-ball in the RKHS generated by $k$.
% Hmm... yes. i agree (with the changes as well)
% -  Mark comment: the following constraint is not really WLOG.
% You could call it WLOUG (without loss of useful generality),
% but there is no such acronym. 
We impose some regularity assumptions on $k$:
\begin{assumption}
    $k$ is measurable, and bounded i.e. $K := \sup_{x,y} k(x, y) < \infty$.
\end{assumption}

Then one can pseudo-metrize $\mathcal{P}_d$ by the following distance: 
\begin{definition}[\citeauthor{Gretton07a}, \citeyear{Gretton07a}]
    Given $\mu, \nu \in \mathcal{P}_d$, their {\bf maximum mean discrepancy (MMD)}, denoted as ${\MMD}_k(\mu, \nu)$, is a pseudo-metric on $\mathcal{P}_d$, defined as follows:
    \begin{equation}
        {\MMD}_k(\mu, \nu) := \sup_{f \in \mathcal{H}_k} \left| \int_{\mathbb{R}^d} f \ d(\mu - \nu) \right|
    \end{equation}
\end{definition}

As our fairness constraint involves exactly matching the considered distributions using MMD, we require the property of $\MMD_k(\mu, \nu) = 0$ implying $\mu = \nu$.
Any kernel $k$ that satisfies such property is said to be {\bf characteristic} \cite{Fukumizu08} to $\mathcal{P}_d$.
%This is formalized as follows:
%\begin{definition}[\citeauthor{Fukumizu08}, \citeyear{Fukumizu08}]
%	If $\MMD_k$ metrizes $\mathcal{P}_d$, then the kernel $k$ is said to be 
%\end{definition}
Furthermore, \citeauthor{Sriperumbudur08} (\citeyear{Sriperumbudur08}) defined and characterized {\it stationary} characteristic kernels and identified that well-known kernels such as RBF and Laplace are characteristic.
Based on this fact,
% in our experiments % <- would it be okay to comment out this part?
%MH comment: b/c I don't think this choice limits the theory, right?
% Yes, this choice was made because 1. rbf is widely used and 2. rbf is characteristic (or maybe switch 1 and 2, but yeah)
% And this choice applies to 
%Hm, but do the derivations depend on this choice?
% by derivations, if you mean the theories, then no. But, for the implementation, rbf did allow a closed form computation of its gradient.
% Right, exactly.  OK
we set $k$ to be the RBF kernel $k_{rbf}(x, y) := \exp(-\lVert x - y \rVert^2 / 2\sigma^2)$.

For the choice of bandwidth $\sigma$, the median of the set of pairwise distances of the samples after {\it vanilla PCA} is considered following the {\it median heuristic} of \cite{kernel-pca, Ramdas15a}.
For simplicity, we refer to $\MMD_{k_{rbf}}$ as $\MMD$.

\subsubsection{Benefits of MMD}
There are several reasons for using MMD as the distance on a space of probability measures. 
First, it can act as a distance between distributions with different, or even disjoint, supports.
This is especially crucial as the empirical distributions are often discrete and completely disjoint.
Such a property is not generally true, one prominent example being the KL-divergence.
Second,
%Last but not least, 
%MH personal preference: "Last" implies more than two...
since many problems in fairness involve comparing two distributions, $\MMD$ has already been used in much of the fairness literature as a metric \cite{Madras18a, Adel19} and as an explicit constraint/penalty \cite{Quadrianto17, Louizos16, Prost19, Oneto20, Jung21}, among other usages.

\subsection{Fairness for supervised learning}
The fair PCA discussed above should ultimately lead to fairness in supervised learning tasks based on the dimension-reduced data with minimal loss in performance.
Let us now review three of the most widely-used definitions of fairness in supervised learning, as formulated in \cite{Madras18a}.
Let $(Z, Y, A)\in \mathbb{R}^d \times \{0,1\} \times \{0, 1\}$ be the joint distribution of the dimensionality-reduced data, (downstream task) label, and protected attribute.
Furthermore, let $g : \mathbb{R}^d \rightarrow \{0, 1\}$ be a classifier that outputs prediction $\hat{Y}$ for $Y$ from $Z$.
We want to determine the fairness of a well-performing classifier $g$ w.r.t. protected attribute $A$.
% in the given joint distribution $(Z, Y, A)$.

First, let $D_s$ be the probability measure of $Z_s \triangleq Z | A = s$ for $s \in \{0, 1\}$:
\begin{definition}[\citeauthor{FeldmanFMSV15}, \citeyear{FeldmanFMSV15}]
	$g$ is said to satisfy {\bf demographic parity (DP) up to $\Delta_{DP}$} w.r.t. $A$ with 
	$\Delta_{DP} \triangleq \left| \mathbb{E}_{x \sim D_0}[g(x)] - \mathbb{E}_{x \sim D_1}[g(x)] \right|$.
\end{definition}

Now, let $D_{s, y}$ be the probability measure of $Z_s \triangleq Z | A = s, Y= y$ for $s, y \in \{0, 1\}$.

\begin{definition}[\citeauthor{HardtPNS16}, \citeyear{HardtPNS16}]
    $g$ is said to satisfy {\bf equalized opportunity (EOP) up to $\Delta_{EOP}$} w.r.t. $A$ and $Y$ with
    $\Delta_{EOP} \triangleq \left| \mathbb{E}_{x \sim D_{0, 1}}[g(x)] - \mathbb{E}_{x \sim D_{1, 1}}[g(x)] \right|$.
\end{definition}
\begin{definition}[\citeauthor{HardtPNS16}, \citeyear{HardtPNS16}]
    $g$ is said to satisfy {\bf equalized odds (EOD) up to $\Delta_{EOD}$} w.r.t. $A$ and $Y$ with
    $\Delta_{EOD} \triangleq \max_{y \in \{0, 1\}} \left| \mathbb{E}_{x \sim D_{0, y}}[g(x)] - \mathbb{E}_{x \sim D_{1, y}}[g(x)] \right|$.
\end{definition}
From hereon, we refer to such $\Delta_{f}(g)$ as the {\bf fairness metric of $f \in \{DP, EOP, EOD\}$ w.r.t. $g$}, respectively.

\section{New definition of fairness for PCA}
\label{sec:fair-dim}

For $p > d$, let $\mathbb{R}^d$ be the space onto which data will be projected.
A dimensionality reduction is a map $\Pi : \mathbb{R}^p \rightarrow \mathbb{R}^d$, and PCA is defined as $\Pi(x) = V^\intercal x$ for some $V \in \mathbb{R}^{p \times d}$ satisfying\footnote{The benefits of pursuing orthogonality in the loading matrix, and thus the resulting PCs, are already well-studied; for example, see \cite{QLZ13, BSBP16}.} $V^\intercal V = \mathbb{I}_d$ i.e. PCA is a linear, orthogonal dimensionality-reduction.
From hereon, we denote a linear PCA as the mapping $\Pi$.

Here, we consider a rather canonical notion of fairness; a projection (which includes PCA) is considered to be {\it fair} when the projected data distributions of two protected classes are the same, or at least similar.
This concept was first considered by \cite{OA19}.
As we discuss later (Section \ref{sec:relation-fpca} and \ref{sec:mbf-pca}), their definition does {\it not} directly translate into a tractable optimization formulation.
In fact, their proposed algorithm, FPCA, {\it necessarily} relies on the Gaussian assumption of the underlying data distributions i.e. FPCA only matches the means and covariances of the projected data distributions, which is problematic.
To illustrate this, consider a $3$-dimensional synthetic dataset with two protected classes colored as blue and orange, shown in Figure \ref{fig:original}.
We've designed the data distributions such that their means and covariances are almost the same.
We assume that we're reducing the dimensionality of the dataset to $2$.
Vanilla PCA results in Figure \ref{fig:PCA}, where the distribution dissimilarity persists.
Due to the Gaussian assumption, FPCA results in Figure \ref{fig:FPCA} where the dissimilarity also persists.

This calls for a new definition of fairness in PCA that can lead to a tractable form of optimization formulation that ensures fairness exactly in the sense that the equality of the two projected data distributions is guaranteed, resulting in Figure \ref{fig:ours}.
Recall how the necessity for such assumption was derived from the inherent complexity of their definition.
Inspired from our previous discussions on $\MMD$, we propose the following {\bf new} definition of fairness for PCA:
\begin{definition}[$\Delta$-fairness]
\label{def:fairness}
	Let $P_s$ be the probability measure of $X_s \triangleq X | A = s$ for $s \in \{0, 1\}$, and let $Q_s := \Pi_\# P_s \in \mathcal{P}_d$.
	Then $\Pi$ is said to be {\bf $\Delta$-fair} with $\Delta := \MMD(Q_0, Q_1)$, and we refer to $\Delta$ as the {\bf fairness metric}.
\end{definition}

In other words, low $\Delta$ means that the discrepancy between the projected conditional distributions of different protected classes, measured in a non-parametric manner using $\MMD$, while retaining as much variance as possible, is low.

Furthermore, Definition \ref{def:fairness} ensures that a downstream classification task using $\Delta$-fair dimensionality-reduced data will be fair, as formalized below\footnote{See Lemma 3 of \cite{Oneto20} for the precise statement.}:
\begin{proposition}[\citeauthor{Oneto20}, \citeyear{Oneto20}]
\label{prop:fair-representation}
	Up to a constant factor, $\MMD(Q_0, Q_1)$ bounds the MMD of the push-forward measures of $Q_0, Q_1$ via the weight vector of any given downstream task classifier $g$.
\end{proposition}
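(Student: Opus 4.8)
The plan is to exploit the dual, integral-probability-metric (IPM) representation of MMD together with its equivalent description as the RKHS distance $MMD(Q_0, Q_1) = \lVert \mu_{Q_0} - \mu_{Q_1} \rVert_{\mathcal{H}_k}$ between the kernel mean embeddings $\mu_{Q_s} = \int k(\cdot, z)\, dQ_s(z)$. Writing the downstream classifier's weight vector as $w$ and defining the linear map $T : \mathbb{R}^d \to \mathbb{R}$ by $T(z) = w^\intercal z$, the quantity to control is $MMD(T_\# Q_0, T_\# Q_1)$, where the target line carries its own RBF kernel $\tilde{k}$. First I would apply the change-of-variables identity $\int f\, d(T_\# Q_s) = \int (f \circ T)\, dQ_s$, valid for every $f$ in the unit ball $\mathcal{H}_{\tilde{k}}$, to obtain
\begin{equation}
MMD(T_\# Q_0, T_\# Q_1) = \sup_{f \in \mathcal{H}_{\tilde{k}}} \left| \int (f \circ T)\, d(Q_0 - Q_1) \right| .
\end{equation}
This recasts a statement about the projected one-dimensional laws as a statement about the original $d$-dimensional laws tested against the restricted class $\{ f \circ T : f \in \mathcal{H}_{\tilde{k}} \}$.

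The next step is to compare this restricted class with $\mathcal{H}_k$. I would introduce the composition (pullback) operator $C_T : \mathcal{H}_{\tilde{k}} \to \mathcal{H}_k$, $C_T f = f \circ T$, and argue it is bounded, writing $C := \lVert C_T \rVert_{\mathrm{op}}$. Granting this, Cauchy--Schwarz in $\mathcal{H}_k$ gives, for every unit-norm $f$,
\begin{equation}
\left| \int (f \circ T)\, d(Q_0 - Q_1) \right| = \left| \langle C_T f,\, \mu_{Q_0} - \mu_{Q_1} \rangle_{\mathcal{H}_k} \right| \le \lVert C_T f \rVert_{\mathcal{H}_k}\, MMD(Q_0, Q_1) .
\end{equation}
Since $\lVert C_T f \rVert_{\mathcal{H}_k} \le C$ for $\lVert f \rVert \le 1$, taking the supremum over $f$ yields the claimed bound $MMD(T_\# Q_0, T_\# Q_1) \le C \cdot MMD(Q_0, Q_1)$ with the explicit constant $C = \lVert C_T \rVert_{\mathrm{op}}$.

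The main obstacle is establishing boundedness of $C_T$ and controlling $C$ for the RBF kernel precomposed with the linear map $T$. Here I would pass to the Bochner/Fourier description of the Gaussian RKHS: membership and norm in $\mathcal{H}_k$ are governed by the (Gaussian) spectral density, and precomposition with $T(z) = w^\intercal z$ restricts the Fourier content to the one-dimensional subspace spanned by $w$. Verifying that the resulting reweighted integral stays finite, and bounding it, is the delicate computation; I expect $C$ to emerge as a function of $\lVert w \rVert$ and the ratio of the two bandwidths, so that under the usual normalization $\lVert w \rVert = 1$ and a matched bandwidth the factor is a clean constant. A cleaner but slightly lossier alternative that avoids computing the operator norm exactly is to note that a unit-ball RBF function is bounded by $\sqrt{K}$ and Lipschitz of order $1/\sigma$, hence $f \circ T$ is Lipschitz with constant $\lVert w \rVert / \sigma$; bounding the associated IPM then reduces to a comparison of the Lipschitz-ball IPM with $MMD(Q_0, Q_1)$, again up to a multiplicative constant.

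Finally, I would close the loop with fairness: since a thresholded linear classifier acts only through the score $w^\intercal Z$, closeness of $T_\# Q_0$ and $T_\# Q_1$ in MMD forces the group-conditional acceptance rates $\mathbb{E}_{Q_s}[g]$ to be close, so the displayed inequality transfers the $\Delta$-fairness of $\Pi$ (smallness of $MMD(Q_0, Q_1)$) into a guarantee on the downstream fairness metric $\Delta_{DP}(g)$.
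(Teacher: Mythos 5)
First, a point of reference: the paper does not prove this proposition at all --- it is imported from Lemma 3 of Oneto et al.\ (2020), with a footnote deferring the precise statement to that reference. So there is no in-paper argument to compare against, and your attempt must stand on its own. It does not: the central step fails. The pullback operator $C_T : \mathcal{H}_{\tilde{k}} \to \mathcal{H}_k$, $C_T f = f \circ T$ with $T(z) = w^\intercal z$, is not a bounded operator between Gaussian RKHSs when $d \geq 2$; in fact it does not map into $\mathcal{H}_k$ at all. Any nonzero $f \circ T$ is a ridge function, constant along the $(d-1)$-dimensional subspace orthogonal to $w$, whereas every element of the Gaussian RKHS on $\mathbb{R}^d$ lies in $L^2(\mathbb{R}^d)$ (its squared RKHS norm is, up to constants, $\int |\hat{u}(\omega)|^2 e^{\sigma^2 \lVert \omega \rVert^2/2}\, d\omega$, which dominates the $L^2$ norm since the Gaussian spectral density is bounded), and a nonzero ridge function is never square-integrable in dimension $d \geq 2$. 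Your own Fourier heuristic already reveals this: precomposition with $T$ concentrates all spectral content onto the line $\mathbb{R}w$ as a singular measure, so the weighted integral you hope to show is finite is in fact $+\infty$. Consequently the inner product $\langle C_T f, \mu_{Q_0} - \mu_{Q_1} \rangle_{\mathcal{H}_k}$ in your Cauchy--Schwarz step is undefined and the constant $C = \lVert C_T \rVert_{\mathrm{op}}$ does not exist. The fallback you sketch also points the wrong way: functions in the unit ball of a bounded Lipschitz kernel's RKHS are uniformly bounded and Lipschitz, which gives $MMD \lesssim W_1$, not the reverse; a bound on the Lipschitz-ball IPM by a constant multiple of $MMD(Q_0, Q_1)$ is false in general.

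A repair has to avoid pulling one-dimensional test functions back into $\mathcal{H}_k$. One workable route: by the change of variables you already wrote, $MMD^2_{\tilde{k}}(T_\# Q_0, T_\# Q_1) = \iint \tilde{k}(w^\intercal z, w^\intercal z')\, d(Q_0 - Q_1)(z)\, d(Q_0 - Q_1)(z')$, i.e.\ it equals $MMD^2_{k_w}(Q_0, Q_1)$ for the pulled-back positive-definite kernel $k_w(z, z') := \tilde{k}(w^\intercal z, w^\intercal z')$ on $\mathbb{R}^d$. The problem then becomes a direct comparison of the two kernels $k_w$ and $k$ --- for instance via a domination inequality $\iint (c\,k - k_w)\, d\mu\, d\mu \geq 0$ over the relevant class of signed measures, exploiting the tensor factorization of the Gaussian kernel along $w$ and its orthogonal complement --- rather than a comparison of function classes. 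This is where the bandwidth matching and the dependence of the constant on $\lVert w \rVert$ actually enter, and it is the part your sketch leaves unresolved; you should consult the cited Lemma 3 for the precise hypotheses under which the constant is finite.
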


% \begin{definition}[$\Delta$-fairness; EOP/EOD]
% \label{def:fairness1-2}
% 	Let $P_{s, y}$ be the probability measure of $X_s \triangleq X | A = s, Y= y$ for $s, y \in \{0, 1\}$, and let $Q_{s,y} = \Pi_\# P_{s,y} \in \mathcal{P}_d$.
% 	Then $\Pi$ is {\bf $\Delta$-fair w.r.t. EOD} if
% 	\begin{equation}
% 		\max_{y \in \{0, 1\}} MMD(Q_{0,y}, Q_{1,y}) \leq \Delta
% 	\end{equation}
% \end{definition}
\begin{remark}
    The above discussions easily generalize to equal opportunity and equalized odds.
\end{remark}

\subsection{Relation with other definitions of fair PCA}
\label{sec:relation-fpca}

The notion of fairness proposed by \citeauthor{OA19} (\citeyear{OA19}) is similar to ours in that it measures the predictability of protected group membership in dimensonality-reduced data.
However, unlike ours, their definition is explicitly adversarial, which can be a problem.
\begin{restatable}[$\Delta_A$-fairness; \citeauthor{OA19}, \citeyear{OA19}]{definition}{testdef}
	\label{def:fairness2}
	Consider a fixed classifier $h(u, t): \mathbb{R}^d \times \mathbb{R} \rightarrow \{0, 1\}$ that inputs features $u \in \mathbb{R}^d$ and a threshold $t$, and predicts the protected class $s \in \{0, 1\}$.
	Then, $\Pi$ is  $\Delta_A(h)$-fair if
	\begin{equation}
		\begin{aligned}
			&\sup_{t \in \mathbb{R}} \Big| \mathbb{P}\big[ h(\Pi(x), t) = 1 | s = 1 \big] \\
			& - \mathbb{P}\big[ h(\Pi(x), t) = 1 | s = 0 \big] \Big| \leq \Delta_A(h)
		\end{aligned}
	\end{equation}	
	Moreover, for a family of classifiers $\mathcal{F}_c$, if $\Pi$ is $\Delta_A(h)$-fair for $\forall h \in \mathcal{F}_c$, we say that $\Pi$ is $\Delta_A(\mathcal{F}_c)$-fair.
\end{restatable}

As $\Delta_A$ can't be computed exactly, an estimator of the following form was used:
\begin{equation}
	\widehat{\Delta}_A(\mathcal{F}_c) := \sup_{h \in \mathcal{F}_c} \sup_t \left| \frac{1}{|P|} \sum_{i \in P} I_i(\Pi, h_t) - \frac{1}{|N|} \sum_{i \in N} I_i(\Pi, h_t) \right|
\end{equation}
% \begin{eqnarray*} 
	% & & \widehat{\Delta(\mathcal{F})} = \sup_{h \in \mathcal{F}} \left| \frac{1}{|P|} \sum_{i \in P} I_i(\Pi, h) - \frac{1}{|N|} \sum_{i \in N} I_i(\Pi, h) \right|
	% \end{eqnarray*}
where $\{x_i\}_{i=1}^n$ are the data points, $(P, N)$ is a partition of the index set $\{1, 2, \dots, n\}$ into two protected groups, $I_i(\Pi, h_t) = {\bf 1}(h(\Pi(x_i), t) = +1)$, and ${\bf 1}(\cdot)$ is the indicator function.

\begin{remark}
    It can be argued that, for some choice of $\mathcal{F}_c$, Definition \ref{def:fairness} and \ref{def:fairness2} are equivalent: in effect, that these are dual notions. Recognizing this, we proceed with Definition \ref{def:fairness}, as it has two main advantages in the context of our work:
    
    \begin{itemize}
        \item It ties more directly and intuitively into our optimization formulation; see Section \ref{sec:mbf-pca}.
        \item It can be represented non-variationally %(having no supreme) ,
        which allows for tighter statistical guarantees.
    \end{itemize}
\end{remark}

\section{Statistical properties of $\Delta$}
\label{sec:statistical}

\subsection{Consistent and efficient estimation of $\Delta$}
\label{sec:mmd-estimator}
As defined in Definition \ref{def:fairness}, let $Q_0, Q_1 \in \mathcal{P}_d$ be the probability measures with respect to the samples of which we want to estimate $\MMD(\cdot, \cdot)$.
Let $\{X_i\}_{i=1}^m$ and $\{Y_j\}_{j=1}^n$ be these samples, respectively.
Accordingly, we consider the following estimator:
\begin{equation}
    \widehat{\Delta} := \MMD(\hat{Q}_0, \hat{Q}_1)
\end{equation}
where $\hat{Q}_s$ is the usual empirical distribution, defined as the mixture of Dirac measures on the samples.
% It can be easily seen that $\widehat{\Delta}_k$ is a non-negative (and thus biased) estimate\footnote{Non-negativity is crucial in simplifying our optimization algorithm; see Section \ref{sec:mbf-pca}} of $\Delta_k$.

Unlike other statistical distances (e.g. total variation distance), $\widehat{\Delta}_k$ has several desirable theoretical properties that have important practical implications, some of which we recall here.
(See \citeauthor{Sriperumbudur10a} (\citeyear{Sriperumbudur10a}) for more comprehensive discussions.)

First, $\widehat{\Delta}_k$ can be computed exactly and efficiently:
% \begin{lemma}[\citeauthor{Gretton07a}, \citeyear{Gretton07a}]
% \label{lem:mmd-estimator}
%     $\widehat{\Delta}$ is computed as follows:
% 	\begin{equation}
% 		\begin{aligned}
%         \widehat{\Delta} = \Bigg[ \frac{1}{m^2} \sum_{i,j = 1}^m k(X_i, X_j) &+ \frac{1}{n^2} \sum_{i,j = 1}^n k(Y_i, Y_j) \\
% 		&- \frac{2}{mn} \sum_{i, j = 1}^{m, n} k(X_i, Y_j) \Bigg]^{1/2}.
% 		\end{aligned}
% 	\end{equation}
% \end{lemma}
\begin{restatable}[\citeauthor{Gretton07a}, \citeyear{Gretton07a}]{lemma}{testlem}
	\label{lem:mmd-estimator}
    $\widehat{\Delta}$ is computed as follows:
	\begin{equation}
		\begin{aligned}
        \widehat{\Delta} = \Bigg[ \frac{1}{m^2} \sum_{i,j = 1}^m k(X_i, X_j) &+ \frac{1}{n^2} \sum_{i,j = 1}^n k(Y_i, Y_j) \\
		&- \frac{2}{mn} \sum_{i, j = 1}^{m, n} k(X_i, Y_j) \Bigg]^{1/2}.
		\end{aligned}
	\end{equation}
\end{restatable}

Moreover, it is asymptotically consistent with a convergence rate, depending only on $m$ and $n$:
% \footnote{good in the sense that the rate is 1) independent of neither the complexity of $\mathcal{F}$ nor the dimension $d$, and 2) at least $O(\sqrt{m + n})$ {\color{red} not clear}}:
\begin{proposition}[\citeauthor{Gretton07a}, \citeyear{Gretton07a}]
	For any $\delta > 0$, with probability at least $1 - 2\exp\left(-\frac{\delta^2 mn}{2(m + n)}\right)$ the following holds:
	\begin{equation}
	    \left| \Delta - \widehat{\Delta} \right| \leq 2 \left( \frac{1}{\sqrt{m}} + \frac{1}{\sqrt{n}} \right) + \delta
	\end{equation}
\end{proposition}

\subsection{Advantages over $\Delta_A$}
%(Olfat\&Aswani showed (Proposition 3; Olfat \& Aswani, 2019) that ..)
$\widehat{\Delta}_A$ is known to satisfy the following high probability bound:
\begin{proposition}[\citeauthor{OA19}, \citeyear{OA19}]
	Consider a fixed family of classifiers $\mathcal{F}_c$.
	Then for any $\delta > 0$, with probability at least $1 - \exp\left( - \frac{(n + m) \delta^2}{2} \right)$ the following holds:
	\begin{equation}
		\left| \Delta_A(\mathcal{F}_c) - \widehat{\Delta}_A(\mathcal{F}_c) \right| \leq 8 \sqrt{\frac{VC(\mathcal{F}_c)}{m + n}} + \delta
	\end{equation}
	
	where $VC(\cdot)$ is the VC dimension.
\end{proposition}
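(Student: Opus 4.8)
The plan is to treat this as a standard uniform-convergence result from VC theory, combining a symmetrization argument with a bounded-differences concentration inequality. First I would reduce the difference of the two suprema to a single uniform deviation. Writing $A_{h,t} := \{x : h(\Pi(x), t) = 1\}$ and letting $\mathbb{P}_s, \hat{\mathbb{P}}_s$ denote the true and empirical conditional laws given protected class $s \in \{0,1\}$, the elementary inequality $|\sup_\alpha f(\alpha) - \sup_\alpha g(\alpha)| \le \sup_\alpha |f(\alpha) - g(\alpha)|$ applied to the outer supremum over $(h,t)$, followed by the reverse triangle inequality $||a|-|b|| \le |a-b|$, gives
\begin{equation}
\left| \Delta_A(\mathcal{F}_c) - \widehat{\Delta}_A(\mathcal{F}_c) \right| \le \sup_{h \in \mathcal{F}_c,\, t} \left| \mathbb{P}_1[A_{h,t}] - \hat{\mathbb{P}}_1[A_{h,t}] \right| + \sup_{h \in \mathcal{F}_c,\, t} \left| \mathbb{P}_0[A_{h,t}] - \hat{\mathbb{P}}_0[A_{h,t}] \right|.
\end{equation}
Each summand is now a one-sided empirical-process deviation over a single fixed set system, which decouples the fairness-specific structure from the concentration analysis.

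Second, I would identify the set system $\{A_{h,t} : h \in \mathcal{F}_c,\ t \in \mathbb{R}\}$ and argue that its VC dimension is controlled by $VC(\mathcal{F}_c)$, since sweeping the threshold $t$ only composes each classifier with a monotone one-dimensional cut and does not inflate the shattering number beyond a constant factor. With the VC dimension in hand, the Sauer--Shelah lemma bounds the growth function polynomially, so a symmetrization step (introducing ghost samples and Rademacher signs) together with Massart's finite-class lemma bounds the \emph{expectation} of each uniform deviation by a term of order $\sqrt{VC(\mathcal{F}_c)/(m+n)}$. Summing the two group contributions and tracking the symmetrization factor is where the leading constant $8$ is produced.

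Third, I would upgrade the in-expectation bound to the stated high-probability statement via McDiarmid's bounded-differences inequality. The key observation is that replacing a single sample perturbs the relevant empirical conditional average --- and hence the supremum, which is $1$-Lipschitz in each coordinate --- by a $1/(m+n)$-scale amount, so the uniform deviation is a function of the data with small bounded differences. McDiarmid then yields a sub-Gaussian tail, contributing the additive slack $\delta$ and, after fixing the bounded-difference constants, the exponent $-(m+n)\delta^2/2$; combining this with the expectation bound from the previous step gives the claim.

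The main obstacle I anticipate is the bookkeeping needed to make everything collapse cleanly onto the combined sample size $m+n$ despite the two protected groups having different cardinalities $|P|$ and $|N|$. The symmetrization must be run simultaneously over the classifier family \emph{and} the continuous threshold, so Sauer--Shelah is essential to reduce the effective cardinality to a finite count before Massart's lemma applies, and the bounded-difference constants for the two groups must be aggregated carefully so that the final rate and the exponent match the stated form with the correct constant $8$.
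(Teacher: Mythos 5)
This proposition is imported verbatim from Olfat and Aswani (2019); the paper you are reading gives no proof of it (its appendix only proves the REPMS guarantees), so there is nothing internal to compare your argument against. Judged on its own terms, your outline is the standard and essentially correct template for such a bound --- split into two per-group uniform deviations, control each via symmetrization, Sauer--Shelah and Massart, then apply a bounded-differences inequality --- and that is surely the skeleton of the cited source's proof as well.

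However, the step you dismiss as ``bookkeeping'' is where the argument, as written, fails to produce the stated bound. Your decomposition yields two empirical processes normalized by the \emph{separate} group sizes, so symmetrization gives an expectation bound of order $\sqrt{VC(\mathcal{F}_c)/m} + \sqrt{VC(\mathcal{F}_c)/n}$, and McDiarmid applied to $\frac{1}{|P|}\sum_{i\in P}(\cdot) - \frac{1}{|N|}\sum_{i\in N}(\cdot)$ has bounded-difference constants $1/m$ and $1/n$, hence $\sum_i c_i^2 = 1/m + 1/n$ and a tail exponent proportional to $mn/(m+n)$ --- exactly the shape of the MMD concentration theorem stated just above this proposition. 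Neither quantity collapses to the claimed form: for unbalanced groups (say $m=1$, $n$ large) one has $\sqrt{VC/m} \gg 8\sqrt{VC/(m+n)}$, and $mn/(m+n) \ll (m+n)$, so no choice of constants rescues the rate $8\sqrt{VC(\mathcal{F}_c)/(m+n)}$ or the exponent $-(m+n)\delta^2/2$ from your route. To land on the stated form you would need either an implicit balancedness assumption ($m \asymp n$, under which $1/m$, $1/n$ and $1/(m+n)$ agree up to constants) or a genuinely different estimator/decomposition in which each of the $m+n$ samples perturbs the statistic by $O(1/(m+n))$; your sketch supplies neither. A secondary, more minor gap is the claim that thresholding does not inflate the VC dimension ``beyond a constant factor'' --- in the cited definition $h(u,t)$ already carries the threshold as an argument, so $VC(\mathcal{F}_c)$ should simply be read as the VC dimension of the set system $\{A_{h,t}\}$, and no composition argument is needed.
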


If $\mathcal{F}_c$ is too expressive in terms of VC-dimension, then the above bound may become void. This is the case, for instance, when $\mathcal{F}_c$ is the set of RBF-kernel SVMs.

In addition, computing $\widehat{\Delta}_A$ requires considering all possible classifiers in the designated family $\mathcal{F}_c$.
This is computationally infeasible, and it forces one to use another approximation (e.g. discretization of $\mathcal{F}_c$), which incurs additional error that may further inhibit asymptotic consistency.

As exhibited in the previous subsection, our $\MMD$-based approach suffers from none of these issues.

\section{Manifold optimization for \textsc{MbF-PCA}}
\label{sec:mbf-pca}

\subsection{Improvements over \textsc{FPCA}}
\citeauthor{OA19} (\citeyear{OA19}) proposed \textsc{FPCA}, an SDP formulation of fair PCA\footnote{See Section \ref{sec:fpca} of the supplementary material (SP) for its precise description.}, in which matching the first and second moments of the protected groups after dimensionality-reduction are approximated as convex constraints.
However, this has several shortcomings, which we discuss here and empirically exhibit in a later section.

First, there are cases in which matching the mean and covariance alone is not enough; this was explicitly shown and discussed in Section \ref{sec:fair-dim} where we've presented a simple ``counterexample'' in which two protected groups have different distributions with same mean and covariance. (Figure \ref{fig:exp1-1})
While this previous point may be countered by the application of the kernel trick to FPCA, this raises a second issue: their formulation requires solving \footnote{In their approach, the final solution $V$ is obtained by taking the first $d$ eigenvectors of $P$.} a $p \times p$-dimensional SDP, motivated by the reparameterization  $P = V V^\intercal$ \cite{Arora13}.
% For such a case, it is known \cite{bental-nemirovsky} that conventional interior-point algorithms require $\mathcal{O}(p^{6.5} \log (1 / \epsilon))$ time, where $\epsilon > 0$ is additive error in the solution.
% Note how the complexity depends {\it only} on $p$;
Since SDP is known to become inefficient (or even computationally infeasible) in high dimensions, this quickly becomes intractable for high-dimensional data (for linear or polynomial kernels) or for any moderate to large size datasets (for the RBF kernel).
Finally, their approach involves a relaxation of a rank constraint ($\rank(P) \leq d$) to a trace constraint ($\tr(P) \leq d$), yielding sub-optimal outputs in presence of (fairness) constraints, even to substantial order in some cases.
In Section \ref{sec:low-sdp} of the SP, we discuss in detail why \textsc{FPCA} may lead to such degraded explained variance.

\subsection{Formulating \textsc{MbF-PCA}}
Observing that the shortcomings of \textsc{FPCA} stem from the reparametrization of $P = VV^\intercal$, we propose a new formulation of fair PCA that solves {\it directly} for $V$.
This allows for an effective and efficient approach.

We start by noting that the set of all $V$ with orthonormal columns has the intrinsic geometric structure of a {\it manifold}:
\begin{definition}
	For $p > d$, the Stiefel manifold, denoted as $St(p, d)$, is an embedded Riemannian sub-manifold of $\mathbb{R}^{p \times d}$ such that each element of $St(p, d)$ has orthonormal columns i.e. $V^\intercal V = I_d$ for all $V \in St(p, d)$. 
\end{definition}

$St(p, d)$ has several desirable properties such as compactness, completeness and smoothness, which we present in Section \ref{sec:background} of the SP.
As $St(p, d)$ is prevalent in various fields of machine learning (most notably PCA), much work has been done that focuses on exploiting this geometric structure for efficient optimization \cite{HLWY20}.

%This is not necessary
%Lastly, the main objective of PCA is maximizing the explained variance, calculated as $\frac{\trace(V^\intercal \Sigma V)}{\trace(\Sigma)}$ where $V$ is to-be-found loading matrix and $\Sigma$ is the (sample) covariance matrix of the given data.
% (It is easy to see that this objective is equivalent to minimizing the reconstruction error.)

Based on our MMD-based formulation and letting $\Sigma$ be the sample covariance matrix of the full dataset, we formulate our fair PCA as follows, which we refer to as \textsc{MbF-PCA}:
\begin{equation}
\label{eq:MbF-PCA}
\begin{aligned}
	& \underset{V \in St(p, d)}{\text{minimize}}
	& &  f(V) := - \langle \Sigma, V V^\intercal \rangle\\
	& \text{subject to}
	& & h(V) := {\MMD}^2(Q_0, Q_1) = 0.
\end{aligned}
\end{equation}
Here, $Q_0$ and $Q_1$ are defined as in definition \ref{def:fairness}.
Observe how our definition of fairness {\it directly} incorporates itself into the optimization problem as a constraint.

\begin{remark}
    Under Gaussian assumption, our MMD-based formulation indeed amounts to the same constraints as \textsc{FPCA} since $\MMD^2(\cdot, \cdot)$ is a metric and Gaussian distribution is completely characterized by its first and second moments.
\end{remark}
% \subsection{Special case: Gaussian MbF-PCA}
% Indeed, as in \cite{OA19}, if we assume that $Q_s \sim \mathcal{N}(\mu_s, \Sigma_s)$, then above optimization simplifies to the following form:
% \begin{equation}
% \label{eq:MbF-PCA-gaus}
% \begin{aligned}
% 	& \underset{V \in St(p, d)}{\text{minimize}}
% 	& &  f(V) := - \langle \Sigma, V V^\intercal \rangle\\
% 	& \text{subject to}
% 	& & h_m(V) := \lVert V^\intercal f \rVert^2 = 0,\\
% 	&&& h_c(V) := \lVert V^\intercal Q V \rVert^2 = 0,
% \end{aligned}
% \end{equation}

% where $f = \mu_1 - \mu_0$ and $Q = \Sigma_1 - \Sigma_0$.
% Let us refer to this formulation derived from Gaussian assumption as \textsc{G-MbF-PCA}.

\section{REPMS for \textsc{MbF-PCA}, with new theoretical guarantees}
\label{sec:repms}

\subsection{Description of Algorithm \ref{alg:mbfpca}}
One crucial observation is that the constraint function $h$ is always non-negative\footnote{This follows from closed form of $\widehat{\Delta}_k$ (Lemma \ref{lem:mmd-estimator}). See Section 2 of \citeauthor{Gretton12a} (\citeyear{Gretton12a}) for more details.} and smooth.
This motivates the use of the exact penalty method \cite{exact-penalty}, recently extended to manifold optimization as the Riemannian Exact Penalty Method via Smoothing (REPMS; \citeauthor{repms}, \citeyear{repms}).
Note that smoothing tricks \cite{repms}, which were required to smooth out possible non-differentiable functions emerging from the $\ell_1$-penalty, are {\it not} necessary.
Moreover, by leveraging the kernel trick, there is a {\it closed} form for $\grad_V h(V)$, thus alleviating the need for computationally expensive numerical approximations; see Section \ref{sec:closed-form} of the SP for the derivation.
The pseudo-code for the algorithm is shown in Algorithm \ref{alg:mbfpca}.

\begin{algorithm}[!th]
	\SetAlgoLined
	\KwInput{$X$, $K$, $\epsilon_{min}, \epsilon_0 > 0$,  $\theta_\epsilon \in (0, 1)$, $\rho_0 > 0$, $\theta_\rho > 1$, $\rho_{max} \in (0, \infty)$, $\tau > 0$, $d_{min} > 0$.}
	
% 	Randomly pick $K_0$ elements of $St(p, d)$ and set $V_0$ to be the element with the least value of $h(\cdot)$\;
    Initialize $V_0$\;
    % with PCA of the given data i.e. the first $d$ eigenvectors of the covariance matrix as its columns.
    
	\For{$k = 0, 1, \dots, K$}{
% 		$multiply = False$\;
		
		Compute an approximate solution $V_{k+1}$ for the following sub-problem, with a warm-start at $V_k$, until $\lVert \mathrm{grad} \ \mathcal{Q} \rVert \leq \epsilon_k$:
		\begin{equation}
		\label{eq:subprob}
			\min_{V \in St(p, d)} \mathcal{Q}(V, \rho_k)
		\end{equation}
		where
		\begin{equation*}
			\mathcal{Q}(V, \rho_k) = f(V) + \rho_k h(V)
		\end{equation*}
% 		is precisely the manifold-version of the exact penalty function \cite{nocedal-wright}.
% 		Note that because $h_m$ and $h_c$ are both non-negative functions, the smoothness is preserved.\;
		
		\If{$\lVert V_{k+1} - V_k \rVert_F \leq d_{min}$ and $\epsilon_k \leq \epsilon_{min}$}{
			\If{ $h(V_{k+1}) \leq \tau$}{
				\Return{$V_{k+1}$}\;
		}
		}
		$\epsilon_{k+1} = \max\{ \epsilon_{min}, \theta_\epsilon \epsilon_k \}$\;
% 		, $u_{k+1} = \max\{ u_{min}, \theta_u u_k \}$\;
		
		\eIf{$h(V_{k+1}) > \tau$}{
				$\rho_{k+1} = \min(\theta_\rho \rho_k, \rho_{max})$\;
		}{
			$\rho_{k+1} = \rho_k$\;
		}
	}
	\caption{REPMS for MbF-PCA}
	\label{alg:mbfpca}
\end{algorithm}

For practical concerns that will be addressed in the following subsection, we've set the {\it fairness tolerance level}, $\tau$, to be a fixed and sufficiently small, non-negative value.
Formally, we consider the following definition:
\begin{definition}
\label{def:approx-fair}
	For fixed $\tau \geq 0$, $V \in St(p, d)$ is {\bf $\tau$-approximate fair} if it satisfies $h(V) \leq \tau$.
	If $\tau = 0$, we simply say that $V$ is {\bf fair}.
\end{definition}

\subsection{New theoretical guarantees for Algorithm 1}
We start by observing that Eq. \eqref{eq:subprob} in Algorithm 1 is smooth, unconstrained manifold optimization problem, which can be solved using conventional algorithms; these include first-order methods like line-search methods \cite{AMS07}, or second-order methods like the Riemannian Trust Region (RTR; \citeauthor{ABG07}, \citeyear{ABG07}) method.
It is known that, pathological examples excluded, most conventional {\it unconstrained} manifold optimization solvers produce iterates whose limit points are local minima, and not other stationary points such as saddle point or local maxima: see \cite{ABG07, AMS07} for more detailed discussions.

Motivated by this, we consider the following assumption:
\begin{assumption}[informal; locality assumption]
\label{assumption:1}
	Each $V_{k+1}$ is sufficiently close to a local minimum of Eq. \eqref{eq:subprob}.
\end{assumption}
% First, we introduce an sequence induced from each outputted solution of each Eq. \eqref{eq:subprob}, $V_{k+1}$:
% \begin{equation}
% 	W_{k+1} := \argmin_{\substack{W \in St(p, d) \\ \lVert \mathrm{grad} \mathcal{Q}(W, \rho_k) \rVert = 0}} \lVert W - V_{k + 1} \rVert.
% \end{equation}
% Motivated from above discussion, we consider the following assumption:
% \begin{assumption}[locality assumption]
% \label{assumption:1}
% 	There exists some $r > 0$ such that for each $k$, $\mathcal{Q}(W_{k+1}, \rho_{k+1}) \leq \mathcal{Q}(V, \rho_{k+1})$ for any $V \in St(p, d) \cap B_{r}(W_{k+1})$.
% \end{assumption}
Lastly, we consider the following auxiliary optimization problem:
\begin{equation}
	\label{eq:auxiliary}
		\min_{V \in St(p, d)} h(V)
\end{equation}

The following theorem, whose proof is deferred to Section \ref{sec:proofs} of the SP, provides an {\it exact} theoretical convergence guarantee of \textsc{MbF-PCA} under the {\it ideal} hyperparameter setting:
\begin{theorem}
\label{thm:mbfpca}
	Let $K = \infty$, $\rho_{max} = \infty$, $\epsilon_{min} = \tau = 0$, $\{V_k\}$ be the sequence generated by Alg. \ref{alg:mbfpca} under Assumption \ref{assumption:1}, and  $\overline{V}$ be any limit point of $\{V_k\}$, whose existence is guaranteed. Then the following holds:
	\begin{enumthm}
	    \item $\overline{V}$ is a local minimizer of Eq. \eqref{eq:auxiliary}, which is a necessary condition for $\overline{V}$ to be fair.
	    
	    \item If $\overline{V}$ is fair, then {$\overline{V}$ is a local minimizer of Eq. \eqref{eq:MbF-PCA}}
	\end{enumthm}
\end{theorem}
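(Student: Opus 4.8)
The plan is to exploit three structural facts: the compactness of $St(p,d)$, the non-negativity $h \ge 0$ together with the smoothness of both $f$ and $h$, and the penalty update rule, which drives $\rho_k$ purely by feasibility. First I would record that compactness of $St(p,d)$ guarantees that $\{V_k\}$ admits a convergent subsequence $V_{k_j} \to \overline{V}$, so a limit point exists. I would then split the analysis on the behavior of $\{\rho_k\}$: since $\rho_k$ is increased (by the factor $\theta_\rho > 1$) precisely when $h(V_{k+1}) > \tau = 0$, either (i) $\rho_k$ is eventually constant, which forces $h(V_{k+1}) = 0$ for all large $k$ and hence $h(\overline{V}) = 0$ by continuity, or (ii) $\rho_k \to \infty$.

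For part (A), the ``necessary condition'' clause is immediate: since $h \ge 0$ everywhere, any fair $\overline{V}$ (with $h(\overline{V}) = 0$) is in fact a global, hence local, minimizer of Eq. \eqref{eq:auxiliary}. The substantive content is that $\overline{V}$ is a local minimizer of $h$ even when it is not fair, and this is where case (ii) enters. Writing the stopping criterion as $\lVert \mathrm{grad}\, \mathcal{Q}(V_{k+1}, \rho_k) \rVert = \lVert \mathrm{grad}\, f(V_{k+1}) + \rho_k\, \mathrm{grad}\, h(V_{k+1}) \rVert \le \epsilon_k$ and dividing by $\rho_k$, the boundedness of $\mathrm{grad}\, f$ on the compact manifold together with $\epsilon_k / \rho_k \to 0$ yields $\mathrm{grad}\, h(\overline{V}) = 0$ in the limit. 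To upgrade stationarity to local minimality I would invoke Assumption \ref{assumption:1}: each $V_{k+1}$ lies near a local minimizer of $\mathcal{Q}(\cdot, \rho_k)$, so the second-order condition $\mathrm{Hess}\, f(V_{k+1}) + \rho_k\, \mathrm{Hess}\, h(V_{k+1}) \succeq 0$ holds approximately; dividing by $\rho_k$ and letting $\rho_k \to \infty$ gives $\mathrm{Hess}\, h(\overline{V}) \succeq 0$, and, more directly, since $\mathcal{Q}(\cdot, \rho_k)/\rho_k \to h$ on a neighborhood of $\overline{V}$, the local minimizers of $\mathcal{Q}(\cdot, \rho_k)$ converge to a local minimizer of $h$.

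For part (B), assume $\overline{V}$ is fair, so $h(\overline{V}) = 0$ and $\overline{V}$ is feasible for Eq. \eqref{eq:MbF-PCA}. I would compare against an arbitrary feasible competitor $V$ near $\overline{V}$, i.e. one with $h(V) = 0$. By the (approximate) local minimality of $V_{k+1}$ for $\mathcal{Q}(\cdot, \rho_k)$, one has $f(V_{k+1}) + \rho_k h(V_{k+1}) \le f(V) + \rho_k h(V) = f(V)$, and since $\rho_k h(V_{k+1}) \ge 0$ this gives $f(V_{k+1}) \le f(V)$; passing to the limit along the subsequence converging to $\overline{V}$ yields $f(\overline{V}) \le f(V)$, which is exactly local minimality of $\overline{V}$ for the constrained problem. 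Note this argument is uniform across cases (i) and (ii) and needs only local optimality of $V_{k+1}$ for $\mathcal{Q}(\cdot, \rho_k)$ plus feasibility of the competitor.

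The main obstacle is the uniformity of the neighborhoods: the inequality $f(V_{k+1}) \le f(V)$ in part (B) requires the fixed competitor $V$ to lie inside the (a priori shrinking) neighborhood on which $V_{k+1}$ is locally optimal for $\mathcal{Q}(\cdot, \rho_k)$. I would close this gap by appealing to the exactness of the $\ell_1$ penalty underlying REPMS: because $h \ge 0$ makes $\rho_k h$ coincide with the smooth $\ell_1$ penalty $\rho_k |h|$ with no smoothing error, the REPMS exactness analysis supplies a finite threshold $\rho^\star$ such that, for $\rho_k > \rho^\star$, approximate local minimizers of $\mathcal{Q}(\cdot, \rho_k)$ are local minimizers of the constrained problem on a neighborhood of uniform radius, validating the comparison for all large $k$. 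A secondary subtlety is that $\mathrm{grad}\, h(\overline{V})$ vanishes at any fair point (since $h \ge 0$ attains its minimum there), so standard constraint qualifications fail; the penalty comparison above is precisely what lets us bypass KKT-type reasoning and establish local minimality directly.
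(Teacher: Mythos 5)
Your core argument is the same as the paper's: the penalty comparison $f(V_{k+1}) + \rho_k h(V_{k+1}) \le f(V) + \rho_k h(V)$ on a neighborhood, divided by $\rho_k$ and passed to the limit for part (A), and restricted to feasible competitors (where $\rho_k h(V) = 0$ and $\rho_k h(V_{k+1}) \ge 0$ can be dropped) for part (B). The existence of the limit point by compactness, the case split on whether $\rho_k$ stabilizes or diverges, and the reading of the ``necessary condition'' clause via $h \ge 0$ all match the paper. Two remarks on where you diverge. First, the Hessian detour in part (A) is both unnecessary and insufficient: $\mathrm{grad}\, h(\overline{V}) = 0$ together with $\mathrm{Hess}\, h(\overline{V}) \succeq 0$ is only a second-order \emph{necessary} condition and does not yield local minimality. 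Your ``more directly'' inequality argument is the one that actually works, and it is what the paper uses.

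The genuine gap is in how you close the uniformity-of-neighborhoods issue. You correctly identify it as the main obstacle, but your proposed fix --- invoking the exactness threshold $\rho^\star$ of the $\ell_1$ penalty to get local optimality of the iterates for the constrained problem on a ball of uniform radius --- would fail here, for the very reason you state in your last sentence: exact-penalty exactness theorems require a constraint qualification (LICQ/MFCQ or an error bound on the constraint violation), and since $h \ge 0$ attains its minimum on the feasible set, $\mathrm{grad}\, h$ vanishes at every fair point, so no such qualification holds. You cannot simultaneously use the exactness threshold and observe that its hypotheses fail. The paper sidesteps this entirely by building the uniform radius into the \emph{formal} version of Assumption \ref{assumption:1}: each near-stationary iterate $V_{k+1}$ is associated with an exact stationary point $W_{k+1}$ of the subproblem that is assumed to be a local minimizer of $\mathcal{Q}(\cdot,\rho_{k+1})$ on $B_r(W_{k+1}) \cap St(p,d)$ for a \emph{fixed} $r > 0$ independent of $k$; a Hausdorff-distance lemma then shows $\lVert \overline{V} - \overline{W} \rVert \le g(\epsilon_{min})$, which collapses to $\overline{V} = \overline{W}$ in the ideal setting $\epsilon_{min} = 0$, so the fixed-radius comparison transfers to $\overline{V}$ directly. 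In short: the uniform neighborhood is an assumption, not a consequence, and your proof becomes correct once you read Assumption \ref{assumption:1} that way rather than trying to derive the uniformity from penalty exactness.
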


The assumption of $\overline{V}$ being fair, which is used in (B), is at least partially justified in (A) in the following sense: 
the ideal hyperparameter setting of $\rho_{max} = \infty, \tau = 0, \epsilon_{min} = 0$ implies the {\it exact} local minimality of $\overline{V}$ for Eq. \eqref{eq:auxiliary}, which is in turn a {\it necessary condition} for $\overline{V}$ to be fair.

The next theorem, whose is also deferred to Section \ref{sec:proofs} of the SP, asserts that with small $\tau, \epsilon_{min}$ and large $\rho_{max}$, the above guarantee can be approximated in rigorous sense:
\begin{theorem}
\label{thm:mbfpca-another}
	Let $K = \infty$, $\rho_{max} < \infty$, $\epsilon_{min}, \tau > 0$, $\{V_k\}$ be the sequences generated by Alg. \ref{alg:mbfpca} under Assumption \ref{assumption:1} and $\overline{V}$ be any limit point of $\{V_k\}$, whose existence is guaranteed. 
	Then for any sufficiently small $\epsilon_{min}$ and  $\tilde{r} = \tilde{r}(\epsilon_{min}) > 0$,
% 	{\color{red}
% 	Then for some fixed $\tilde{r} > 0$ such that $\tilde{r} = r - g(\epsilon_{min})$ where $g(\epsilon_{min}) < r$},
	the following hold:
	
	\begin{enumthm}
	    \item $\overline{V}$ is an approximate local minimizer of Eq. \eqref{eq:auxiliary} in the sense that
	    \begin{equation}
	        h(\overline{V}) \leq h(V) + \beta \lVert V - \overline{V} \rVert + (\beta + L_h) g(\epsilon_{min})
	    \end{equation}
	    
	    for all $V \in B_{\tilde{r}}(\overline{V}) \cap St(p, d)$, where $\beta = \beta(\rho_{max}, \tau)$ is a function that satisfies the following:
	    \begin{itemize}
	        \item $0 < \beta \leq \frac{2 \lVert \Sigma \rVert}{\rho_0}$
	        
	        \item $\beta(\rho_{max}, \tau)$ is increasing in $\rho_{max}$ and decreasing in $\tau$.
	    \end{itemize}
	    
		\item If $\overline{V}$ is fair, then it is an approximate local minimizer of Eq. \eqref{eq:MbF-PCA} in the sense that it satisfies
		\begin{equation}
			f(\overline{V}) \leq f(V) + 2 \lVert \Sigma \rVert g(\epsilon_{min})
		\end{equation}
		
		for all fair $V \in B_{\tilde{r}}(\overline{V}) \cap St(p, d)$.
	\end{enumthm}
	
	In both (A) and (B), $g$ is some continuous, decreasing function that satisfies $g(0) = 0$, and $\tilde{r}(\epsilon_{min}) = r - g(\epsilon_{min})$ for some fixed constant $r > 0$.
\end{theorem}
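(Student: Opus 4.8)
The plan is to characterize the limit point $\overline{V}$ through the stationarity conditions of the penalized subproblems, and then to transfer approximate stationarity into the two approximate-optimality statements via Lipschitz estimates on $f$ and $h$. First I would pin down the asymptotics of the two scalar sequences. Since $\rho_k$ is nondecreasing, increases only multiplicatively by $\theta_\rho$, and is capped at $\rho_{max} < \infty$, it is eventually constant and converges to some $\rho_\infty \in [\rho_0, \rho_{max}]$; likewise $\epsilon_k \downarrow \epsilon_{min}$. Passing to the convergent subsequence defining $\overline{V}$, using the termination criterion $\lVert \grad \mathcal{Q}(V_{k+1}, \rho_k) \rVert \le \epsilon_k$ together with the joint continuity of $(V, \rho) \mapsto \grad \mathcal{Q}(V, \rho) = \grad f(V) + \rho \, \grad h(V)$ on the compact manifold $St(p,d)$, I obtain $\lVert \grad \mathcal{Q}(\overline{V}, \rho_\infty) \rVert \le \epsilon_{min}$, i.e. $\overline{V}$ is an $\epsilon_{min}$-approximate stationary point of $\mathcal{Q}(\cdot, \rho_\infty)$.

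Next I would invoke Assumption \ref{assumption:1} in quantitative form: the $\epsilon_{min}$-stationarity yields a \emph{genuine} local minimizer $V^*$ of $\mathcal{Q}(\cdot, \rho_\infty)$ on $St(p,d)$ with $\lVert \overline{V} - V^* \rVert \le g(\epsilon_{min})$, where $g$ is the associated modulus with $g(0)=0$. Its local optimality holds on some ball $B_r(V^*)$, and since $\tilde r(\epsilon_{min}) = r - g(\epsilon_{min})$ we have $B_{\tilde r}(\overline{V}) \subseteq B_r(V^*)$ once $\epsilon_{min}$ is small enough that $g(\epsilon_{min}) < r$; this is exactly why the comparisons in (A) and (B) are valid on $B_{\tilde r}(\overline{V}) \cap St(p,d)$. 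For part (A) I would start from $f(V^*) + \rho_\infty h(V^*) \le f(V) + \rho_\infty h(V)$, isolate $h(V^*) \le h(V) + \rho_\infty^{-1}\big(f(V) - f(V^*)\big)$, and then eliminate $V^*$ using that $h$ is locally $L_h$-Lipschitz and $f$ is $2\lVert \Sigma \rVert$-Lipschitz (its Euclidean gradient being $-2\Sigma V$), writing $f(V)-f(V^*) \le 2\lVert\Sigma\rVert\lVert V-\overline V\rVert + 2\lVert\Sigma\rVert g(\epsilon_{min})$ and $h(V^*) \ge h(\overline V) - L_h g(\epsilon_{min})$. Collecting terms gives precisely
\[
h(\overline{V}) \le h(V) + \beta \lVert V - \overline{V} \rVert + (\beta + L_h)\, g(\epsilon_{min}), \qquad \beta := \frac{2\lVert \Sigma \rVert}{\rho_\infty},
\]
and $\rho_0 \le \rho_\infty \le \rho_{max}$ forces $0 < \beta \le 2\lVert \Sigma \rVert/\rho_0$.

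Part (B) is then a short specialization. Restricting the comparison to \emph{fair} $V$ (so $h(V)=0$) and using the nonnegativity $h \ge 0$ of the $MMD^2$ estimator, the chain $f(V^*) \le f(V^*) + \rho_\infty h(V^*) \le f(V) + \rho_\infty h(V) = f(V)$ combined with $f(\overline V) \le f(V^*) + 2\lVert\Sigma\rVert g(\epsilon_{min})$ gives $f(\overline V) \le f(V) + 2\lVert\Sigma\rVert g(\epsilon_{min})$, as claimed.

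The two genuinely delicate points are the quantitative version of Assumption \ref{assumption:1} — making precise the modulus $g$ that converts $\epsilon_{min}$-stationarity into a distance-$g(\epsilon_{min})$ bound to a true local minimizer, which is what underlies the radius $\tilde r(\epsilon_{min}) = r - g(\epsilon_{min})$ — and the asserted monotonicity of $\beta$ in the design parameters. The latter reduces entirely to understanding how the terminal penalty $\rho_\infty$ responds to $\rho_{max}$ and $\tau$, since $\beta$ depends on these parameters only through $\rho_\infty$. I would therefore track, along the iteration, when the penalty is incremented (triggered by $h(V_{k+1}) > \tau$) versus when it saturates at $\rho_{max}$, and read off the dependence of $\rho_\infty$ on $(\rho_{max}, \tau)$ from this bookkeeping. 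I expect carrying this out carefully — at the level of the limiting penalty $\rho_\infty$ rather than the crude bound $2\lVert\Sigma\rVert/\rho_0$ — to be the main obstacle.
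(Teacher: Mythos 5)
Your proposal is correct and follows essentially the same route as the paper's proof: the paper formalizes the locality assumption via an auxiliary sequence $W_k$ of nearest exact stationary points of each subproblem (your $V^*$ is its limit $\overline{W}$), realizes your modulus $g$ concretely as the Hausdorff distance between the zero-gradient set and the $\epsilon$-gradient set of the limiting penalized objective, and then performs the same rearrangement of $\mathcal{Q}(W_k,\rho_k)\le \mathcal{Q}(W,\rho_k)$ with the same Lipschitz constants $2\lVert\Sigma\rVert$ and $L_h$ to obtain (A) and the same restriction to fair comparison points to obtain (B). The monotonicity of $\beta$ that you defer is handled exactly as you anticipate, by counting the number of $\rho$-updates $K(\tau)=\lvert\{k: h(V_k)>\tau\}\rvert$ and writing the terminal penalty as $\min(\theta_\rho^{K}\rho_0,\rho_{max})$.
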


Existing optimality guarantee of REPMS (Proposition 4.2; \citeauthor{repms}, \citeyear{repms}) states that when $\epsilon_{min} = 0$, $\rho$ is {\it not} updated (i.e. line 10-14 is ignored), and the resulting limit point is feasible, then that limit point satisfies the KKT condition \cite{optimality-manifold}.
Comparing Theorem \ref{thm:mbfpca} and \ref{thm:mbfpca-another} to the previous result, we see that ours extend the previous result in several ways:
\begin{itemize}
    \item Our theoretical analyses are much closer to the actual implementation, by incorporating the $\rho$-update step (line 11) and the {\it practical} hyperparameter setting.
    
    \item Our theoretical analyses are much more stronger in the sense that 1) by {\it introducing} a reasonable, yet novel locality assumption, we go beyond the existing KKT conditions and prove the {\it local minimality} of the limit point, and 2) we provide a partial justification of the feasibility assumption in (A) by proving a necessary condition for it.
\end{itemize}

\subsection{Practical implementation}
\label{sec:practical}
% \subsubsection{Initialization}
% We can consider three initializations:
% \begin{enumerate}
%     \item Random initialization
    
%     \item Approximate solution to Eq. \eqref{eq:auxiliary}
    
%     \item PCA initialization i.e. top $d$ eigenvectors of the covariance matrix as columnshttps://www.overleaf.com/project/61153bd68dd0b047bf9a9964#appendix.F
% \end{enumerate}

% We've set $\rho_0 = \frac{1}{}$
In line 4 in Algorithm 1, we implemented the termination criteria: sufficiently small distance between iterates and sufficiently small tolerance for solving Eq. \eqref{eq:subprob}.
However, such a heuristic may return some point $\overline{V}$ that is not $\tau$-approximate fair for user-defined level $\tau$ in practical hyperparameter setting.
To overcome this issue, we've additionally implemented line 5 that forces the algorithm to continue on with the loop until the desired level of fairness is achieved.
% Our empirical results show that this is the case, as discussed in the next section.

% \subsubsection{Restart scheme}
% In case the algorithm does not reach a fair iterate in the given budget, then $\tau$ is relaxed by replacing it with a larger value (e.g. $5\tau$) and the algorithm is restarted with some random initialization.
% Number of restarts has been set to $1$.

\section{Related Work}
\label{sec:related-works}

\subsection{Fairness in ML}
A large body of work regarding fairness in the context of supervised learning \cite{FeldmanFMSV15, CaldersKP09, DworkHPRZ12, HardtPNS16, ZafarVGG17a} has been published.
This includes key innovations in quantifying algorithmic bias, notably the concepts of \textit{demographic parity} and \textit{equalized odds (opportunity)} that have become ubiquitous in fairness research \cite{BarocasS16, HardtPNS16}.
More recently, fair machine learning literatures have branched out into a variety of fields, including deep learning \cite{beutel2017data}, regression \cite{calders2013controlling}, and even hypothesis testing \cite{olfat2020covariance}.

Among these, one line of research has focused on learning fair representations \cite{KamiranC11, ZemelWSPD13, FeldmanFMSV15, CalmonWVRV17}, which aims to learn a representation of the given data on which various fairness definitions are ensured for downstream modeling.
A growing number of inquiries have been made into highly specialized algorithms for specific unsupervised learning problems like clustering \cite{ChierichettiKLV17, KleindessnerAM19, BeraCFN19}, but these lack the general applicability of key dimensionality reduction algorithms such as PCA \cite{Pearson1901, Hotelling1933}.

To the best of our knowledge, Olfat \& Aswani (\citeyear{OA19}) is the {\bf only} work on incorporating fair representation to PCA, making it the sole comparable approach to ours.
Another line of work \cite{Bian11, SamadiTMSV18, TantipongpipatS19, Zalcberg21} considers a completely {\it orthogonal} definition of fairness for PCA: minimizing the discrepancy between reconstruction errors over protected attributes.
This doesn't ensure the fairness of downstream tasks, rendering it incomparable to our definition of fairness; see Section \ref{sec:compare} of the SP for more details.

\subsection{Manifold Optimization}
A constrained problem over Euclidean space can be transformed to an unconstrained problem over a manifold (or at least manifold optimization with less constraints).
Many algorithms for solving Euclidean optimization problems have direct counterparts in manifold optimization problems that includes Riemannian gradient descent and Riemannian BFGS.
By making use of the geometry of lower dimensional manifold structure, often embedded in potentially very high dimensional ambient space, such Riemannian counterparts are much more computationally {\it efficient} than algorithms that do not make use of manifold structure.
This is shown in numerous literatures \cite{repms, Alsharif21, Meng21}, including this work.
We refer interested readers to the standard textbooks \citeauthor{AMS07} (\citeyear{AMS07}) and \citeauthor{Boumal20} (\citeyear{Boumal20}) on this field, along with a survey by \citeauthor{HLWY20} (\citeyear{HLWY20}).

\section{Experiments}
\label{sec:experiments}
\begin{figure}[!t]
	\begin{center}
		\begin{subfigure}[t]{0.49\columnwidth}
			\includegraphics[width=\linewidth]{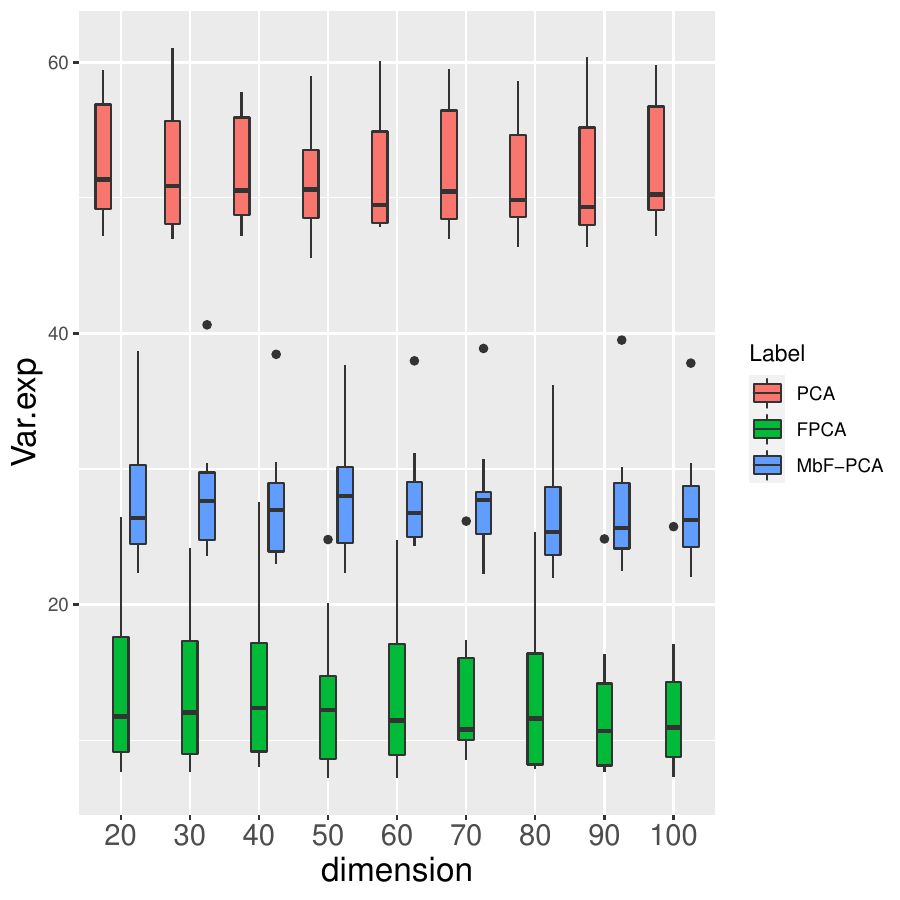}
			\caption{\label{fig:varexp} Variance explained (\%)}
		\end{subfigure}
		\begin{subfigure}[t]{0.49\columnwidth}	
			\includegraphics[width=\linewidth]{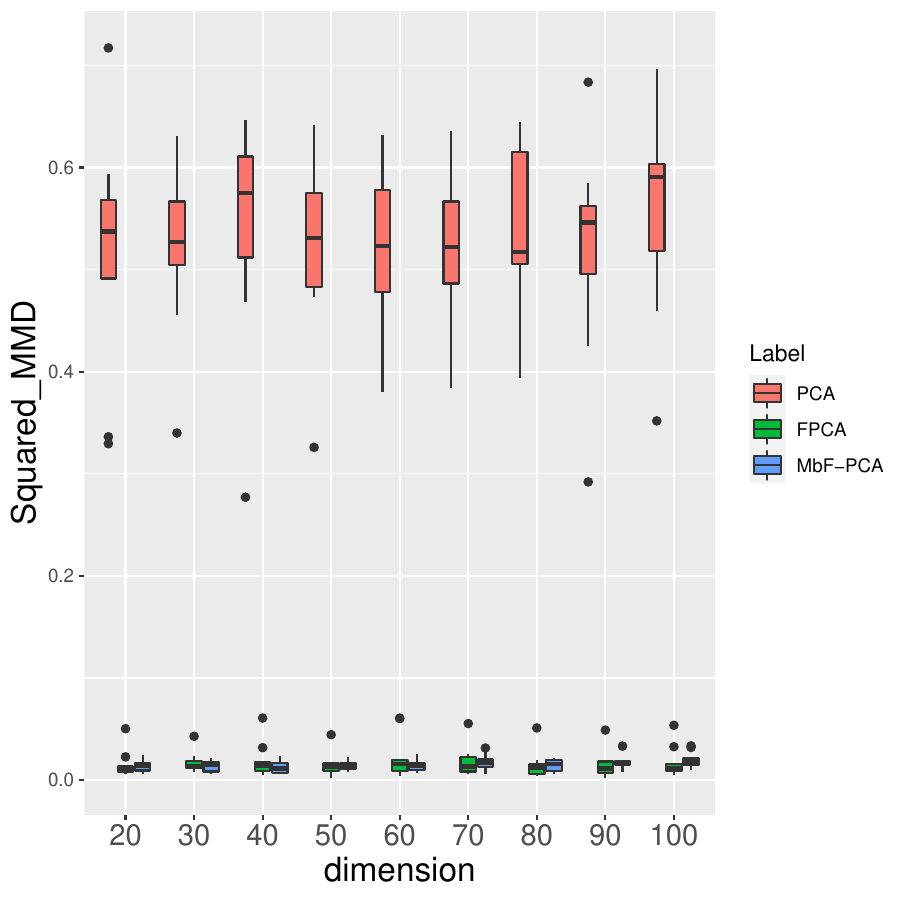}
			\caption{\label{fig:mmd} $\MMD^2$}
		\end{subfigure}
	\end{center}
	\caption{\label{fig:exp1-2} Synthetic data \#2: Comparison of PCA, FPCA, and \textsc{MbF-PCA} on the synthetic datasets of increasing dimensions.}
\end{figure}

\begin{figure}[!t]
	\begin{center}
		\includegraphics[width=0.7\linewidth]{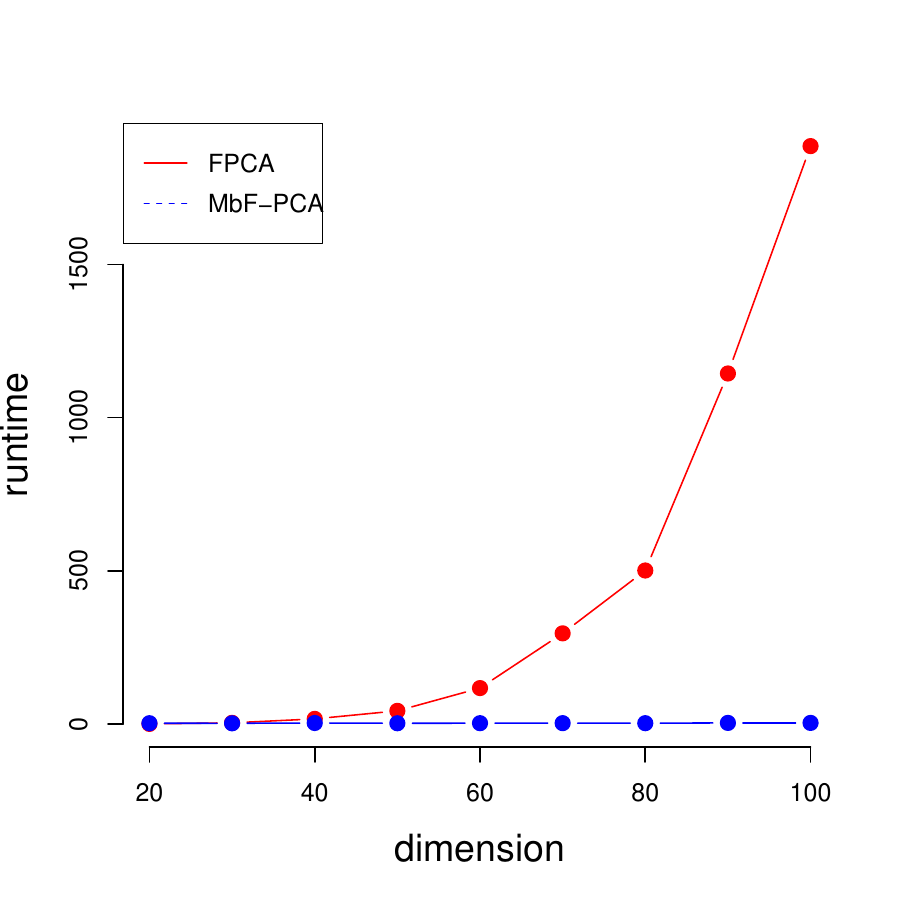}
	\end{center}
	\caption{\label{fig:exp1-2-runtime} Synthetic data \#2: Comparison of runtimes of FPCA, and MbF-PCA.}
\end{figure}
\begin{table*}[!t]
	\caption{Comparison of PCA, FPCA, \textsc{MbF-PCA} for UCI datasets. Number in parenthesis for each dataset is its dimension.
	Also, the parenthesis for each fair algorithm is its hyperparameter setting; $(\mu, \delta)$ for \textsc{FPCA} and $\tau$ for \textsc{MbF-PCA}.
	Among the fair algorithms considered, results with the best mean values are {\bf bolded}.
	Results in which our approach terminates improperly in the sense that the maximum iteration is reached before passing the termination criteria are \colorbox{lightgray}{highlighted}.
	}%\vspace{-0.2in}
	\label{tab:results}
	\begin{center}
		\begin{small}
			\begin{sc}
			\resizebox{\linewidth}{!}{
				\begin{tabular}{l|r|cccc|cccc|cccr}
					\toprule
					 & & \multicolumn{4}{c}{COMPAS ($11$)} & \multicolumn{4}{c}{German Credit ($57$)} & \multicolumn{4}{c}{Adult Income ($97$)} \\
					 $d$ & Alg. & \%Var & \%Acc & $\MMD^2$ & $\Delta_{DP}$ & \%Var & \%Acc & $\MMD^2$ & $\Delta_{DP}$ & \%Var & \%Acc & $\MMD^2$ & $\Delta_{DP}$ \\
					\midrule
					
                    & PCA
                    & $39.28_{5.17}$ & $64.53_{1.45}$ & $0.092_{0.010}$ & $0.29_{0.09}$
                    & $11.42_{0.47}$ & $76.87_{1.39}$ & $0.147_{0.049}$ & $0.12_{0.06}$
                    & $7.78_{0.82}$ & $82.03_{1.15}$ & $0.349_{0.027}$ & $0.20_{0.05}$ \\
                    
                    & \textsc{FPCA} ($0.1, 0.01$)
                    % & FPCA1
                    & $\bf 35.06_{5.16}$ & $61.65_{1.17}$ & $0.012_{0.007}$ & $0.10_{0.07}$
                    & $7.43_{0.59}$ & $72.17_{1.09}$ & $0.017_{0.010}$ & $0.03_{0.02}$
                    & $4.05_{0.98}$ & $77.44_{2.96}$ & $0.016_{0.011}$ & $0.04_{0.04}$ \\

                    $2$ & \textsc{FPCA} ($0, 0.01$)
                    % $2$ & FPCA2
                    & $34.43_{5.02}$ & $60.86_{1.09}$ & $0.011_{0.006}$ & $0.10_{0.06}$
                    & $7.33_{0.57}$ & $71.77_{1.60}$ & $\bf 0.015_{0.010}$ & $0.03_{0.03}$
                    & $3.65_{0.97}$ & $77.05_{3.18}$ & $\bf 0.005_{0.004}$ & $\bf 0.01_{0.01}$ \\
                    
                    & \textsc{MbF-PCA} ($10^{-3}$)
                    % & MbF-PCA1
                    & $33.95_{5.01}$ & $\bf 65.37_{1.11}$ & $0.005_{0.002}$ & $0.12_{0.07}$
                    & $\bf 10.17_{0.57}$ & $\bf 74.53_{1.92}$ & $0.018_{0.014}$ & $0.05_{0.04}$
                    & $\bf 6.03_{0.61}$ & $\bf 79.50_{1.22}$ & $\bf 0.005_{0.004}$ & $0.03_{0.02}$ \\
                    
                    & \textsc{MbF-PCA} ($10^{-6}$)
                    % & MbF-PCA2
                    & $11.83_{3.59}$ & $57.73_{1.50}$ & $\bf 0.002_{0.002}$ & $\bf 0.06_{0.08}$
                    & $9.36_{0.33}$ & $74.10_{1.56}$ & $0.016_{0.010}$ & $\bf 0.02_{0.02}$
                    & $5.83_{0.57}$ & $79.12_{1.14}$ & $\bf 0.005_{0.004}$ & $\bf 0.01_{0.01}$ \\
					
					\midrule
					
					& PCA & $100.00_{0.00}$ & $73.14_{1.22}$ & $0.241_{0.005}$ & $0.21_{0.07}$ 
					& $38.25_{0.98}$ & $99.93_{0.14}$ & $0.130_{0.019}$ & $0.12_{0.08}$
					& $21.77_{2.06}$ & $93.64_{0.92}$ & $0.195_{0.007}$ & $0.16_{0.01}$ \\
                    
                    & \textsc{FPCA} ($0.1, 0.01$)
                    % & FPCA1
                    & $\bf 87.79_{1.27}$ & $72.25_{0.93}$ & $0.015_{0.003}$ & $\bf 0.16_{0.06}$
                    & $29.85_{0.87}$ & $\bf 99.93_{0.14}$ & $0.020_{0.005}$ & $0.12_{0.08}$
                    & $15.75_{1.20}$ & $91.94_{0.88}$ & $0.006_{0.003}$ & $0.13_{0.02}$ \\
                    
                    $10$ & \textsc{FPCA} ($0, 0.1$)
                    % $10$ & FPCA2
                    & $87.44_{1.35}$ & $\bf 72.32_{0.93}$ & $0.015_{0.002}$ & $\bf 0.16_{0.07}$
                    & $29.79_{0.89}$ & $\bf 99.93_{0.14}$ & $0.020_{0.006}$ & $0.12_{0.08}$
                    & $15.52_{1.18}$ & $91.66_{0.97}$ & $0.004_{0.002}$ & $0.13_{0.02}$ \\
                    
                    & \textsc{MbF-PCA} ($10^{-3}$)
                    % & MbF-PCA1
                    & \mycc$87.75_{1.36}$ & \mycc$72.16_{0.90}$ & \mycc$\bf 0.014_{0.002}$ & \mycc$\bf 0.16_{0.07}$
                    & $\bf 34.10_{1.00}$ & $\bf 99.93_{0.14}$ & $0.020_{0.008}$ & $0.12_{0.08}$
                    & $\bf 18.71_{1.47}$ & $\bf 92.81_{0.84}$ & $0.005_{0.002}$ & $0.14_{0.01}$ \\
                    
                    & \textsc{MbF-PCA} ($10^{-6}$)
                    % & MbF-PCA2
                    & \mycc$87.75_{1.36}$ & \mycc$72.16_{0.90}$ & \mycc$\bf 0.014_{0.002}$ & \mycc$\bf 0.16_{0.07}$
                    & $16.95_{1.52}$ & $92.70_{3.00}$ & $\bf 0.013_{0.007}$ & $\bf 0.06_{0.05}$
                    & \mycc$15.49_{6.44}$ & \mycc$86.36_{3.77}$ & \mycc$\bf 0.003_{0.002}$ & \mycc$\bf 0.07_{0.03}$ \\
					\bottomrule
				\end{tabular}
				}
			\end{sc}
		\end{small}
	\end{center}
\end{table*}

\textsc{MbF-PCA} was implemented using ROPTLIB \cite{roptlib}, a state-of-the-art manifold optimization framework on MATLAB.
For solving Eq. \eqref{eq:subprob}, we use the cautious Riemannian BFGS method (RBFGS; \citeauthor{HuangAG18}, \citeyear{HuangAG18}), a memory-efficient quasi-Newton method.
As for the hyperparameters, we've set $K = 100, \epsilon_{min} = 10^{-6}, \epsilon_0 = 10^{-1}, \theta_\epsilon = (\epsilon_{min} / \epsilon_0)^{1/5}, \rho_{max} = 10^{10}, \theta_{\rho} = 2, d_{min} = 10^{-6}$.
% , but at the same time, achieves same level of performance for our problem, as other second-order methods such as RTR.
For \textsc{FPCA}, we use the same Python MOSEK\cite{mosek} implementation as provided by \cite{OA19}.
$(\mu, \delta)$ are the hyperparameters of \textsc{FPCA}; see Section \ref{sec:fpca} of the SP.
Codes are available in our Github repository\footnote{\url{https://github.com/nick-jhlee/fair-manifold-pca}}.

All data is pre-processed to be standardized such that each covariate has zero mean and unit variance.
For all experiments, we considered $10$ different $70-30$ train-test splits.

\subsection{Synthetic data \#1}
We consider synthetic data composed of two groups, each of size $n=150$; one is sampled from $\mathcal{N}_3(\mathbf{0}, 0.1I_3 + \mathbf{1})$ and one is sampled from a (balanced) mixture of $\mathcal{N}_3(\mathbf{1}, 0.1I_3)$ and $\mathcal{N}_3(-\mathbf{1}, 0.1I_3)$.
Note how the two groups follow different distributions, yet have the same mean and covariance.
Thus, we expect \textsc{FPCA} to project in a similar way as vanilla PCA, while \textsc{MbF-PCA} should find a fairer subspace such that the projected distributions are exactly the same.
Hyperparameters are set as follows: $\delta = 0, \mu = 0.01$ for \textsc{FPCA} and $\tau = 10^{-5}$ for \textsc{MbF-PCA}.
We've set $d = 2$ and Figure \ref{fig:exp1-1} displays the results of each algorithm using the top two principal components.
Indeed, only \textsc{MbF-PCA} successfully obfuscates the protected group information by merging the two orange clusters with the blue cluster.

\subsection{Synthetic data \#2}
% \subsubsection{Advantages of utilizing manifold}

We consider a series of synthetic datasets of dimension $p$.
For each $p$, the dataset is composed of two groups, each of size $n=240$ and sampled from two different $p$-variate normal distributions.
We vary $p \in \{20, 30, \dots, 100\}$; see Section \ref{sec:exp1-description} of the SP for a full description of the setting.
For the hyperparameters, we've set $\delta = 0, \mu = 0.01$ for \textsc{FPCA} and $\tau = 10^{-5}$ for \textsc{MbF-PCA}.
% performed grid-search for $\tau$ in $\{1, 10^{-1}, 10^{-2}, 10^{-3}, 10^{-4}, 10^{-5}, 10^{-6}, 10^{-7}\}$.
% {\color{red}
% In the process, we observed that our algorithm allows a trade-off between explained variance and $\Delta$-fairness, which we report in Section \ref{sec:trade-off} of the SP.
% }

Figure \ref{fig:exp1-2} plots the explained variance and fairness metric values.
Observe how \textsc{MbF-PCA} achieves better explained variance, while achieving similar level of fairness.
In addition, Figure \ref{fig:exp1-2-runtime} shows a clear gap in runtime between \textsc{FPCA} and \textsc{MbF-PCA}; the runtime of \textsc{FPCA} explodes for even moderate problem sizes, while \textsc{MbF-PCA} scales well.
For higher dimensions, conventional computing machine will not be able to handle such computational burden.

\subsection{UCI datasets}
\begin{figure}[!t]
	\begin{center}
		\includegraphics[width=\linewidth]{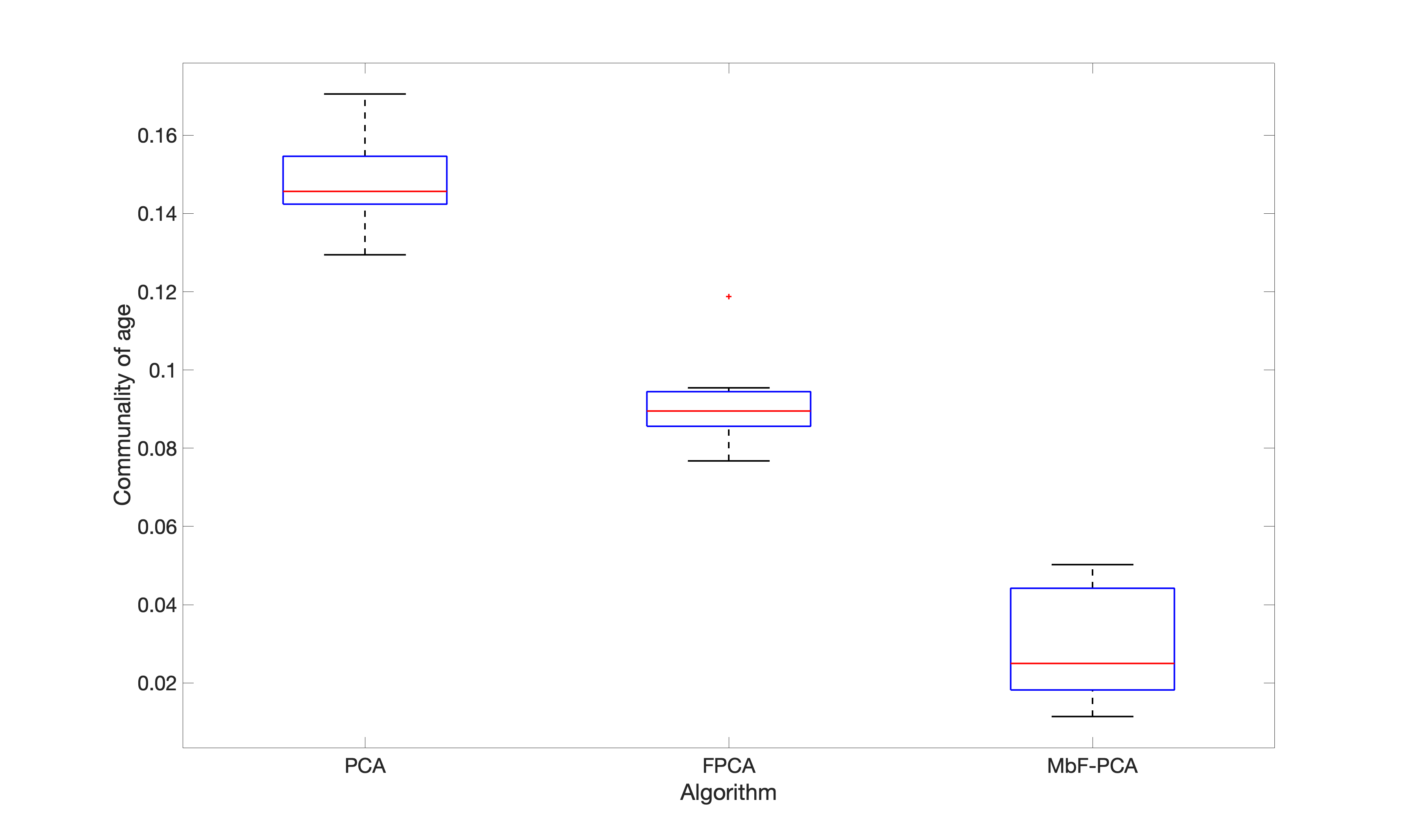}
	\end{center}
	\caption{\label{fig:exp2-2-eda} Comparison of communality of ``age" of German credit dataset for PCA, \textsc{FPCA}, and \textsc{MbF-PCA}.}
\end{figure}

For target dimensions $d \in \{2, 10\}$, we compare the performance of \textsc{FPCA} and \textsc{MbF-PCA} on $3$ datasets from the UCI Machine Learning Repository \cite{UCI}; COMPAS dataset \cite{angwin2016machine}, Adult income dataset, and German credit dataset.
See Section \ref{sec:exp2-description} of the SP for complete description of the pre-processing steps.
For both algorithms, we consider two different hyperparameter settings, such that one simulates the relaxed fairness while the other simulates a stricter fairness constraints.
For computing $\Delta_{DP}(g)$, we trained a RBF SVM $g$ to be the {\it downstream task classifier} that best classifies the target attribute in the {\it dimensionality-reduced} data.
Table \ref{tab:results} displays the results, from which several observations can be made:
\begin{itemize}
    \item Across all considered datasets, \textsc{MbF-PCA} is shown to outperform \textsc{FPCA} in terms of fairness (both $\MMD^2$ and $\Delta_{DP}$) with low enough $\tau$.
    
    \item For \textsc{German Credit} and \textsc{Adult Income}, \textsc{MbF-PCA} shows a clear trade-off between explained variance and fairness; by relaxing $\tau$, we see that \textsc{MbF-PCA} outperforms \textsc{FPCA} in terms of explained variance and downstream task accuracy.
\end{itemize}

In addition, to see how correlated are the PCs with the protected attribute, we examine the communalities.
For clarity of exposition, we consider the German credit dataset, whose protected attribute is age, and $d = 10$.
Here, we again consider PCA, \textsc{FPCA} ($0, 0.01$), and \textsc{MbF-PCA} ($10^{-3}$).
For PCA, communality of a feature is its variance contributed by the PCs \cite{multivariate-analysis}, computed as the sum of squares of the loadings of the considered feature.
High communality implies that the correlations between the feature and PCs are strong.
Figure \ref{fig:exp2-2-eda} displays the boxplot of communality, plotted over the $10$ splits.
Indeed the amount of variance in age that is accounted for from the loadings of \textsc{MbF-PCA} is much smaller than that of PCA or \textsc{FPCA}.
In other words, the PCs resulting from \textsc{MbF-PCA} have the {\it least} correlations with age, the protected attribute.
%The complete results (PC biplots and communality box plots) for the considered UCI datasets are presented in Section \ref{sec:eda-full} of the SP.

\section{Conclusion and Future Works}
\label{sec:conclusion}
We present a MMD-based definition of fair PCA, and formulate it as a constrained optimization over the Stiefel manifold.
Through both theoretical and empirical discussions, we show that our approach outperforms the previous approach \cite{OA19} in terms of explained variance, fairness, and runtime.
Many avenues remain for future research.
Statistical characterizations of our fair PCA in asymptotic regime, as well as incorporation of sparsity \cite{JL09} are important open questions.
Incorporating stochastic optimization-type modifications \cite{Shamir15, fairbatch} is also an important direction, as such modifications are expected to result in better scalability and performance.

\section*{Acknowledgements}
\label{sec:acknowledgements}
JH, GS, and CD were partly supported by Institute for Information \& communications Technology Planning \& Evaluation(IITP) grant funded by the Korea government(MSIT) (No. 2019-0-01396, Development of framework for analyzing, detecting, mitigating of bias in AI model and training data), and partly supported by Institute for Information \& communications Technology Planning \& Evaluation(IITP) grant funded by the Korea government(MSIT) (No. 2021-0-01381, Development of Causal AI through Video Understanding).
JH was also supported by Institute of Information \& communications Technology Planning \& Evaluation (IITP) grant funded by the Korea government(MSIT)  (No. 2019-0-00075, Artificial Intelligence Graduate School Program(KAIST))

\noindent JH would like to thank Jaeho Lee and Se-Young Yun for providing helpful comments on the writing and organization of the paper.

\bibliography{fair-pca}

\clearpage
\title{SPs}
\appendix

\section{Relation to \cite{SamadiTMSV18}}
\label{sec:compare}
\begin{figure}[!t]
	\centering
	\includegraphics[width=\columnwidth]{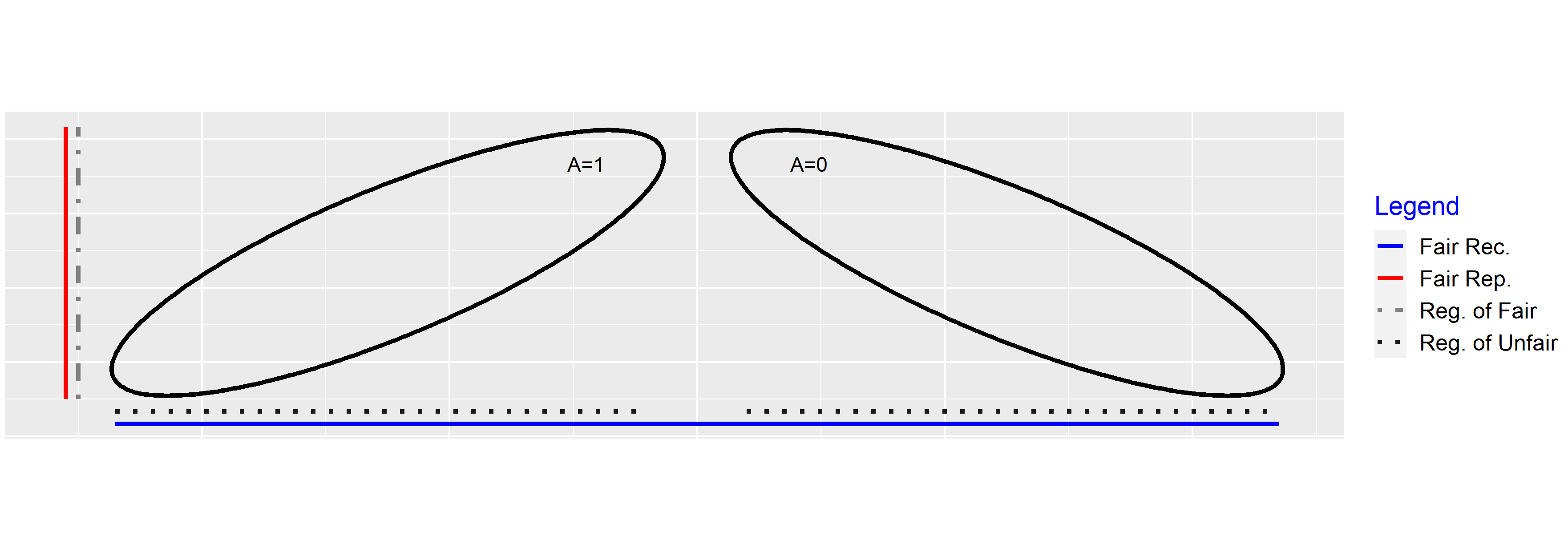}
	\caption{Comparison of fair PCA's as considered in \cite{SamadiTMSV18}, and our work and \cite{OA19}.}
	% \cite{SamadiTMSV18} is concerned with the fairness in reconstruction error in the sense that the errors should be similar across the protected attributes while \cite{OA19} is concerned with an adversarial-type fairness i.e. obfuscating any information about the protected attributes in the 
	%		Minimizing reconstruction errors under the fairness constraints as described in \cite{SamadiTMSV18} result in projecting onto the {\color{blue} blue} subspace, which is considered not $\Delta$-fair.
	%		This is achieved only when the data is projected onto the {\color{red} red} subspace.}
\label{fig:c}
\end{figure}

There is another branch work \cite{SamadiTMSV18, TantipongpipatS19, KamaniHFFM19, PelegrinaBDRA20} in which the task of solving fair PCA is defined as finding a (linear) subspace in which the reconstruction errors for each protected class are approximately equal.
Here, we emphasize again that their fair PCA is {\it incomparable} with our considered fair PCA.

Consider the task of performing PCA on a two-dimensional data as shown in Figure~\ref{fig:c}, where $A \in \{ 0, 1\}$ denotes a protected attribute and the target dimension is set to $d = 1$.
% , and the ellipse represents the region of data points concerning each value of $A$.
We assume that the data distributions for both $A = 0$ and $A = 1$ take an elliptical density, the only difference being their orientation and position.

Fairness in reconstruction error can be achieved by projecting the data onto blue or red subspace.
However, fair PCA defined by \citeauthor{SamadiTMSV18} (\citeyear{SamadiTMSV18}) aims to {\it minimize} the reconstruction error with the fairness constraint, and thus the {\color{blue} blue} subspace is chosen.
Under our MMD-based definition of fair PCA, projecting onto the blue subspace is considered unfair because the distributions conditioned on each protected attribute $A$ are not the same; our fair PCA would choose the {\color{red} red} subspace.

% Then fair PCA of \cite{SamadiTMSV18} cannot provide the fair principle component in representation. Projection on the blue line results in the principle component minimizing fair reconstruction error. However, the principal component is exactly separated by protected attribute A, i.e. having the protected attribute information. In contrast, the projection on the red line results in the fair principle component in representation. 
%This simple example demonstrates that other than the name and the primal objective, there's nothing common between the two mentioned fair PCA's i.e. they are incomparable.

One important observation is that while our definition of fairness ensures that the downstream tasks are fair w.r.t many of existing fairness definitions (see Proposition \ref{prop:fair-representation}), the reconstruction error-based definition has no such guarantees.

\section{Brief Description of FPCA}
\label{sec:fpca}

This section follows the discussion in \citeauthor{OA19} (\citeyear{OA19}), and by ``they", we refer to its authors.
First recall their proposed adversarial definition of fairness for PCA:
\testdef*

\subsection{Fairness Constraints}
By choosing $\mathcal{F}_c$ to be $\mathcal{F}_{lin} = \{h(u, t) = {\bf 1}(w^\intercal u - t \leq 0) : w \in \mathbb{R}^d\}$, $\Delta_A(\mathcal{F}_{lin})$ becomes the Kolmogorov distance between $X_s' \triangleq w^\intercal V^\intercal X | A = s$, which is upper bounded by their total variation distance.
Applying Pinsker's inequality \cite{Massart07}, we have that
\begin{equation*}
\Delta_A(\mathcal{F}_{lin}) \leq \sqrt{\frac{1}{2} \mathrm{KL}\left( X_0' || X_1' \right) }
\end{equation*}

In order to obtain a tractable form of fairness constraints, they make normality assumptions on the data as follows: $X_s = X | A = s \sim \mathcal{N}_p(\mu_s, \Sigma_s)$.
Then an upper bound of closed form can be obtained as follows:
\begin{eqnarray*}
& &     \Delta_A(\mathcal{F}_{lin})
\\
& \leq & \frac{1}{2} \sqrt{ \frac{|\Sigma_1|}{|\Sigma_0|} + \tr(\Sigma_1^{-1} \Sigma_0) - d + (\mu_1 - \mu_0)^\intercal \Sigma_1^{-1} (\mu_1 - \mu_0)},
\end{eqnarray*}
and thus, the sufficient condition for the RHS to be $0$ is that $\mu_0 = \mu_1$ and $\Sigma_0 = \Sigma_1$.

Denoting $f = \mu_1 - \mu_0$ and $Q = \Sigma_1 - \Sigma_0$, they introduced two constraints:
\begin{itemize}
\item Mean constraint: $h_m(V) := \lVert V^\intercal f \rVert = 0$

\item Covariance constraint: $h_c(V) = \lVert V^\intercal Q V \rVert_2 = 0$
\end{itemize}

Based on these, the optimization considered in their work is as follows:
\begin{equation}
\label{eq:FPCA}
\begin{aligned}
	& \text{minimize}
	& &  f(V) = - \langle \Sigma, V V^\intercal \rangle\\
	& \text{subject to}
	& & V^\intercal V = I_d,\\
	&&& h_m(V) = 0,\\
	&&& h_c(V) = 0.
\end{aligned}
\end{equation}

\subsection{SDP Formulation}
Standard convex relaxation techniques \cite{lmi-book} were then used to derive an SDP w.r.t. a {\it new} variable $P = V V^\intercal \in \mathbb{R}^{p \times p}$:
\begin{subequations}\label{eqn:sdpf}
\begin{align}
	\max\ &\langle X^{\textsf{T}}X,P\rangle - \mu t\\
	\text{s.t. } & \!\trace(P) \leq d\\
	&I_p\succeq P\succeq 0\\
	&\langle P, ff^\textsf{T}\rangle\leq \delta^2\label{eqn:mc}\\
	&\!\!\begin{bmatrix}t I_p & PM_+^{\textsf{T}} \\ PM_+ & I_p\end{bmatrix}\succeq 0\label{eqn:cc1}\\
	&\!\!\begin{bmatrix}tI_p & M_-^{\textsf{T}}P \\ PM_- & I_p\end{bmatrix}\succeq 0\label{eqn:cc2}
\end{align}
\end{subequations}
where $M_s M_s^\intercal$ is the Cholesky decomposition of $(-1)^s Q + \varphi I_p$, $\varphi \geq \lVert Q \rVert_2$, and $\delta, \mu \geq 0$ are the fairness tolerance levels\footnote{analogous to $\tau$ in our approach.} for mean and covariance constraints, respectively.
The final loading matrix $V$ is obtained by extracting the top $d$ eigenvectors from the resulting $P^*$.

\section{Low Explained Variance of FPCA}
\label{sec:low-sdp}
\begin{figure*}[!t]
\begin{center}
	\begin{subfigure}[t]{0.49\linewidth}
		\includegraphics[width=\linewidth]{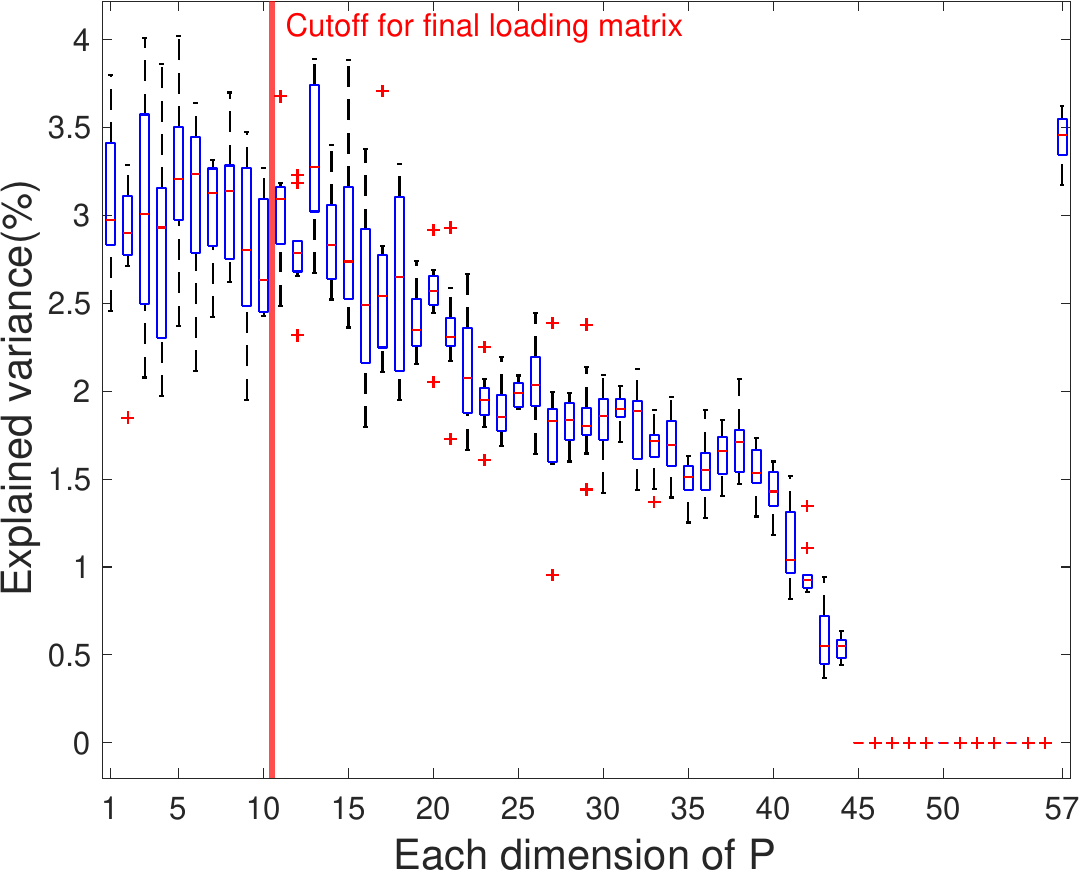}
		\caption{\label{fig:fpca-expvar} \textsc{FPCA} \cite{OA19}}
	\end{subfigure}\hfill
	\begin{subfigure}[t]{0.49\linewidth}
		\includegraphics[width=\linewidth]{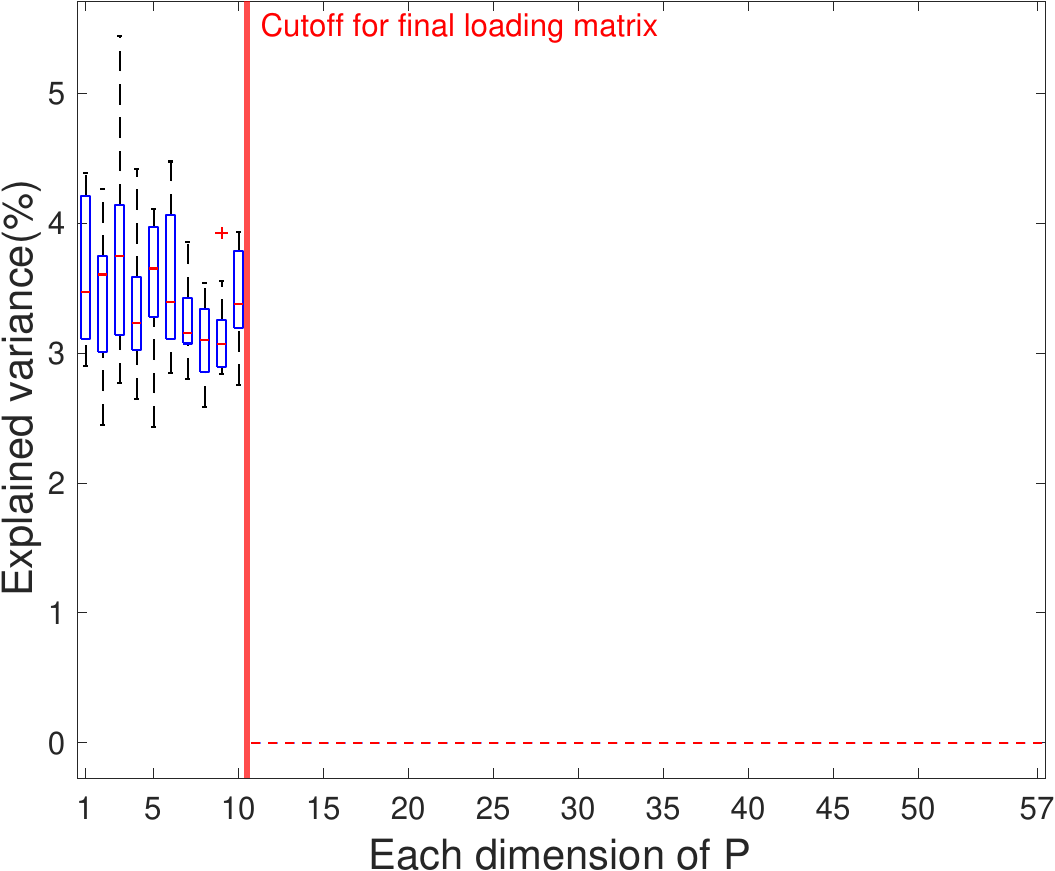}
		\caption{\label{fig:mbfpca-expvar} \textsc{MbF-PCA} (ours)}
	\end{subfigure}
	\caption{\label{fig:low-sdp} Explained variance of each eigenvector of $P^*$ for \textsc{German credit dataset}, over the considered $10$ train-test splits.
		Note how in \textsc{FPCA}'s case, the there's significant ``leakage" of explained variance in the latter part (i.e. starting from $11$-th eigenvector of $P$)}
\end{center}
\end{figure*}

\subsection{Empirical Exploration}

As seen in the Experiments, \textsc{FPCA} generally results in low explained variance than \textsc{MbF-PCA}, when {\it same(similar)} level of fairness is achieved.
For clarity of exposition, we consider \textsc{German credit dataset} with $d = 10$ and the following hyperparameter settings: $\tau = 10^{-3}$ for \textsc{MbF-PCA} and $(\delta, \mu) = (0, 0.1)$ for \textsc{FPCA}.
The reason for choosing a rather relaxed hyperparameter setting for \textsc{MbF-PCA} is to ensure a fair comparison i.e. to compare the two algorithms under the setting in which they achieve the {\it same} level of fairness.

To analyze this phenomenon in a detailed manner, we consider the variance explained of each eigenvector of $P^*$, {\it in order of the eigenvalues}.
Here, $P^*$ is the outputted $p \times p$ matrix from solving the SDP of \textsc{FPCA}.
Figure~\ref{fig:low-sdp} shows the plots of the explained variance of each eigenvector of $P^*$ for \textsc{FPCA} and \textsc{MbF-PCA}, over the considered $10$ train-test splits. 
(Since our algorithm directly outputs the loading matrix $V$, we simply set $P$ to be $V V^\intercal$)
Observe how the order of the eigenvalues {\bf do not match} the order of explained variance, and the discrepancy is very high.

\subsection{Theoretical Exploration}
We now give a sketch of theoretical argument of why such phenomenon of misalignment in ordering of eigenvalues of $P^*$ and explained variance of each eigenvector of $P^*$ may occur.
Again, we consider the Gaussian assumption of the data.

Let $\hat{\Sigma}$ be positive definite, and let $ \hat{\Sigma} = \sum_{k=1}^p \lambda_k w_k w_k^\intercal$ be its spectral decomposition with $\lambda_1 \ge \lambda_2 \ge \cdots \ge \lambda_p >0$ being the eigenvalues and $w_k$'s being the corresponding orthonormal eigenvectors.

Also, 
\begin{align}
\tr \big( V^\intercal \hat{\Sigma} V \big) &= \sum_{l=1}^d v_{l}^\intercal \left\{ \sum_{k=1}^p \lambda_k w_k w_k^\intercal  \right\} v_l \label{eqn:q1} \\
&= \sum_{k=1}^p \sum_{l=1}^d \lambda_k < v_l, w_k >^2 \\
&= \sum_{k=1}^p \sum_{l=1}^d \lambda_k \cos^2  \theta_{ v_l, w_k}
\end{align}

where $V^\intercal V = I_d$, $v_l$ is the $l$-th column of $V$, and $ \cos \theta_{u,v} := < u, v>.$
Eq.~\eqref{eqn:q1} shows that the maximization of $\tr \big( V^\intercal \hat{\Sigma} V \big)$ can be thought of as a optimization problem w.r.t. {\it orthonormal} $v_1, \dots, v_d$, {\it given} $\lambda_k$'s and $w_k$'s.
For example, with no constraint, the $\{ v_l^* \}_{l=1}^d$ such that $\cos \theta_{v_l^*, w_l } = 1$ achieve the maximization of $\tr \big( V^\intercal  \hat{\Sigma} V \big)$ and the variances explained are ordered by the $ \{ \lambda_l \}_{l=1}^d$, which is precisely Eckart-Young theorem \cite{pca}. 

When the covariance constraint $\Vert  V  Q V^\intercal \Vert_2 = 0$ is imposed on the
% the optimization of the 
Eq.~\eqref{eqn:q1}, we will show the effect of this quadratic constraint to the optimal $V^*$.
The feasible space on the constraint  $\Vert  V^\intercal  Q V \Vert_2 = 0$ can be rewritten \cite{OA19}: for some fixed $\varphi \geq \lVert Q \rVert_2$, the covariance condition $\Vert V^\intercal Q V \Vert_2 = 0,$ is equivalent to
\begin{eqnarray}
\max \left\{ \Vert V^\intercal Q V + \varphi I_d  \Vert_2, \ \Vert V^\intercal Q V - \varphi I_d  \Vert_2 \right\} - \varphi = 0. \label{eqn:q2}
\end{eqnarray}
We can write $Q = \sum_{k=1}^p q_k s_k s_k^\intercal$, where $q_k$'s and $s_k$'s are {\it real} eigenvalues and {\it real} eigenvectors from the spectral decomposition of $Q$; note that all of them are real due to the symmetry of $Q$, but since $Q$ is not necessarily positive definite, $q_k$'s can be negative.
On the other hand, by construction, we have that $q_k + \varphi, -q_k + \varphi \geq 0 $ for all $k$'s.
Here, recall that for symmetric matrices, spectral norm is equivalent to the largest absolute value of eigenvalues.
%Note that $q_k$'s are all real numbers due to the symmetry of $Q$. 

%\underline{Somewhat not rigorous, will be rewritten:}\\
Rewriting the LHS of Eq.~\eqref{eqn:q2}, we have that for either choice of $+$ and $-$,
\begin{align*}
	\left\lVert V^\intercal (Q \pm \varphi I_d) V \right\rVert_2 &= \left\lVert \sum_{k=1}^p (q_k \pm \varphi) V^\intercal s_k s_k^\intercal V \right\rVert_2 \\
	&= \max_{\lVert x \rVert = 1} \left| \sum_{k=1}^p (q_k \pm \varphi) x^\intercal V^\intercal s_k s_k^\intercal V x \right| \\
	&= \max_{\lVert x \rVert = 1} \left| \sum_{k=1}^p (\varphi \pm q_k) \left( x^\intercal V^\intercal s_k \right)^2 \right| \\
	&= \max_{\lVert x \rVert = 1} \sum_{k=1}^p (\varphi \pm q_k) \left( x^\intercal V^\intercal s_k \right)^2 \\
	&\leq \sum_{k=1}^p (\varphi \pm q_k) \max_{\lVert x \rVert = 1} \left( x^\intercal V^\intercal s_k \right)^2 \\
	&= \sum_{k=1}^p (\varphi \pm q_k) \left\lVert V^\intercal s_k \right\rVert^2 \\
	&= \sum_{k=1}^p (\varphi \pm q_k) \sum_{l=1}^d \cos^2 \theta_{v_l, s_k} \\
	&\leq \left( \max_k \left( \varphi \pm q_k \right) \right) \sum_{k=1}^p \sum_{l=1}^d \cos^2 \theta_{v_l, s_k}.
\end{align*}

Above and Eq. \eqref{eqn:q2} imply that
\begin{align*}
	\varphi &= \max \left\{ \Vert V^\intercal Q V + \varphi I_d  \Vert_2, \ \Vert V^\intercal Q V - \varphi I_d  \Vert_2 \right\} \\
	&\leq \max \left\{ \left( \max_k \left( \varphi + q_k \right) \right) \sum_{k=1}^p \sum_{l=1}^d \cos^2 \theta_{v_l, s_k}, \right. \\
	&\ \ \left. \left( \max_k \left( \varphi - q_k \right) \right) \sum_{k=1}^p \sum_{l=1}^d \cos^2 \theta_{v_l, s_k} \right\} \\
	&= \max\left\{ \varphi + q_1, \varphi - q_p \right\} \sum_{k=1}^p \sum_{l=1}^d \cos^2 \theta_{v_l, s_k} \\
	&= \left( \varphi + \max\left\{ q_1, - q_p \right\}\right) \sum_{k=1}^p \sum_{l=1}^d \cos^2 \theta_{v_l, s_k}
\end{align*}

If $\frac{\varphi}{\varphi + \max\{q_1, -q_p\}}$ is sufficiently large enough, then we would be constraining the sum of square of cosines to be somewhat close to $1$ i.e. forcing some of the to-be-found eigenvectors $v_l$'s into directions such that they are somewhat orthogonal to $s_k$'s, the eigenvectors of $Q$.
Such restriction becomes more prominent as $d$ increases.
All in all, the presence of the {\it covariance (quadratic)} constraint may cause a misalignment in the ordering of the explained variance of each loading vector, potentially causing the ordering of the explained variance to be substantially different from the ordering of the eigenvalues.
% This is because the additional restriction of $\theta$, as expressed by  cause the $\cos^2 \theta$ can be very different from the optimal $\cos^2 \theta$ without the quadratic constraint.

%$ The Figure~\ref{fig:exp} is an example for non-ordering. The red line of the figure shows the explained variance along with the largest eigenvalue in SDP \cite{OA19}. The order of explained variance is not consistent with that of the largest eigenvalues. Thus selecting the vectors corresponding to the first $d$ largest eigenvalues cannot provide the optimal selection in variance explanation, and this phenomenon can be amplified in high-dimensions. In contrast to SDP,  st-\textsc{FPCA} (sky-blue line) has no cutting step, and the optimization is done on the $d$ orthonormal vectors, which does not require any order and makes no loss like that of SDP.                    

\section{Theoretical Minimum}
\label{sec:background}

In this section, we provide (minimum) required background for understanding the proofs of propositions/theorems presented in the main paper.
All proofs of the results here are deferred to the respective references and therein.

\subsection{A Primer on (Matrix) Manifolds, and Stiefel Manifold}

\subsubsection{Preliminaries}
% We defer the readers to \cite{do-Carmo1, do-Carmo2, john-lee1, john-lee2, john-lee3} for a more extensive overview of differential geometry and Riemannian geometry.

% Let's review some important, relevant concepts from Riemannian geometry, following the notations of \cite{AMS07}.
(Here, we mainly follow the exposition from the textbook by \cite{AMS07})

Let $\mathcal{M}$ be a smooth, finite-dimensional Riemannian manifold, endowed with a Riemannian metric $\langle \cdot, \cdot \rangle_x$ on each tangent space $T_x\mathcal{M}$ and $x \in \mathcal{M}$.
Let $\lVert \cdot \rVert_x$ be the associated norm.
We omit $x$ when it is clear from context.
Let $\exp_x$ be the exponential map at $x$, and let $\mathrm{dist}$ be the Riemannian distance.

% The exponential map $\exp_x :T_x\mathcal{M} \rightarrow \mathcal{M}$ is defined as be the exponential map at $x$ i.e. the map such that $t \rightarrow \exp_x(tv)$ is a geodesic passing through $x$ at $t = 0$ with velocity $v \in T_x\mathcal{M}$.

% \begin{definition}[\citeauthor{john-lee3}, \citeyear{john-lee3}: Definition ?.??]
%     A connected Riemannian manifold is said to be {\bf geodesically complete} if $\mathrm{dom}(\exp_x) = T_x\mathcal{M}$.
% \end{definition}

% \begin{definition}[\citeauthor{john-lee3}, \citeyear{john-lee3}: Definition ?.??]
%     A connected Riemannian manifold is said to be {\bf geodesically complete} if every maximal geodesic is defined for all $t \in \mathbb{R}$ i.e. if $\mathrm{dom}(\exp) = T\mathcal{M}$.
% \end{definition}

% Hopf-Rinow theorem, as presented next, asserts that the two concepts of ``completeness" are equivalent:
% \begin{theorem}[\citeauthor{john-lee3}, \citeyear{john-lee3}: Theorem 6.19]
%     A connected Riemannian manifold is metrically complete if and only if it is geodesically complete.
% \end{theorem}

% From this, we motivate the following definition:
% \begin{definition}[\citeauthor{john-lee3}, \citeyear{john-lee3}]
%     A connected Riemannian manifold is said to be {\bf complete} if it is either geodesically complete or metrically complete.
% \end{definition}

Let $f : \mathcal{M} \rightarrow \mathbb{R}$ be a smooth function.
We first consider the gradient of $f$, defined on manifolds::

\begin{definition}
The Riemannian gradient at $x \in \mathcal{M}$ of $f$, denoted as $\mathrm{grad} \ f(x)$, is the unique element of $T_x\mathcal{M}$ such that
\begin{equation*}
	\forall \xi \in T_x\mathcal{M} \quad \langle \mathrm{grad} \ f(x), \xi \rangle_x = D f(x) [\xi]
\end{equation*}

where $D f(x) [v]$ is the directional derivative of $f$ at $x$ along $v$.
\end{definition}

Finding a closed-form of $\mathrm{grad}$ may be difficult in some cases.
However, if $f$ can be extended to the ambient Euclidean space $\mathbb{R}^{p \times d}$, then $\mathrm{grad}$ can be computed as follows:
\begin{equation}
\mathrm{grad} \ f(x) = {\bf P}_{T_x\mathcal{M}}(\grad f(x))
\end{equation}

where $\grad$ is the usual Euclidean gradient, and ${\bf P}_{T_x\mathcal{M}}$ is the projection operator onto $T_x\mathcal{M}$.

\subsubsection{Matrix Manifolds}
Matrix is a mathematical object of fundamental importance.
Often in applications in machine learning and optimization, we are given the task of finding some matrix satisfying certain conditions, such as orthonnormality, while optimizing some real-valued function of the matrices \cite{HLWY20}.
In the perspective of optimization, it is natural to see whether an intrinsic geometric structure can be found in a set of matrices with certain common properties.
It turns out that such structures can be characterized by manifold, hence the term {\it matrix manifold}.

To start, let us consider $\mathbb{R}^{p \times d}$, a set of all possible real-valued $p \times d$ matrices.
Along with the Frobenius inner product, defined as $\langle X, Y \rangle := \tr(X^\intercal Y)$, $\mathbb{R}^{n \times p}$ can be regarded as a Euclidean space, and thus a Riemannian manifold.

Many of the well-known matrix manifolds are actually {\it embedded Riemannian submanifold} of $\mathbb{R}^{\it p \times d}$ i.e. its subspace topology is exactly the given topology on $\mathbb{R}^{\it p \times d}$.
We shall discuss about one specific matrix manifold that is crucial to our \textsc{MbF-PCA} in the next subsection.

\subsubsection{Stiefel Manifold}
We are mainly concerned with the {\it Stiefel manifold}, defined as follows\footnote{Another characterization of $St(p, d)$ is as a quotient space of set of $d$ linearly independent $p$-dimensional vectors.}:
\begin{equation}
St(p, d) = \{ V \in St(p, d) \ | \ V^\intercal V = I_d \}
\end{equation}

By considering the mapping $F(V) = V^\intercal V - I_d$, we can prove the following result:
\begin{theorem}
$St(p, d)$, defined as above, is an embedded (Riemannian) submanifold of $\mathbb{R}^{\it p \times d}$.
\end{theorem}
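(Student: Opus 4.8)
The plan is to realize $St(p,d)$ as the preimage of a regular value under a smooth map and then invoke the regular value theorem (the submersion theorem). Concretely, let $\mathrm{Sym}(d)$ denote the $\tfrac{d(d+1)}{2}$-dimensional vector space of $d \times d$ symmetric real matrices, and define $F : \mathbb{R}^{p \times d} \rightarrow \mathrm{Sym}(d)$ by $F(V) = V^\intercal V - \mathbb{I}_d$. Note that $F$ genuinely lands in $\mathrm{Sym}(d)$, since $V^\intercal V$ is always symmetric; pinning down this codomain is the crux of the argument, as I explain at the end. By construction $St(p,d) = F^{-1}(\mathbf{0})$, so it suffices to show that $\mathbf{0}$ is a regular value of $F$, i.e.\ that the differential $DF(V)$ is surjective onto $\mathrm{Sym}(d)$ for every $V \in St(p,d)$.

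First I would compute the differential. Since $F$ is polynomial, hence smooth, on the ambient space, the product rule gives, for any tangent direction $\xi \in \mathbb{R}^{p \times d}$,
\begin{equation*}
	DF(V)[\xi] = \xi^\intercal V + V^\intercal \xi,
\end{equation*}
which is manifestly symmetric, consistent with the stated codomain. Next I would verify surjectivity at each $V \in St(p,d)$. Given an arbitrary $S \in \mathrm{Sym}(d)$, I would exhibit an explicit preimage: taking $\xi := \tfrac{1}{2} V S$ and using $V^\intercal V = \mathbb{I}_d$ together with $S^\intercal = S$, a direct computation yields
\begin{equation*}
	DF(V)\!\left[\tfrac{1}{2} V S\right] = \tfrac{1}{2} S^\intercal V^\intercal V + \tfrac{1}{2} V^\intercal V S = \tfrac{1}{2} S^\intercal + \tfrac{1}{2} S = S.
\end{equation*}
Hence $DF(V)$ is surjective, so $\mathbf{0}$ is a regular value. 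The regular value theorem then certifies that $St(p,d) = F^{-1}(\mathbf{0})$ is an embedded submanifold of $\mathbb{R}^{p \times d}$ of dimension $pd - \tfrac{d(d+1)}{2}$. Finally, since $\mathbb{R}^{p \times d}$ equipped with the Frobenius inner product $\langle X, Y \rangle = \tr(X^\intercal Y)$ is a Riemannian manifold, restricting this metric to the tangent spaces of $St(p,d)$ endows it with the structure of an embedded Riemannian submanifold, as claimed.

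The main obstacle is conceptual rather than computational: one must take the codomain of $F$ to be $\mathrm{Sym}(d)$ rather than all of $\mathbb{R}^{d \times d}$. If the codomain were enlarged to $\mathbb{R}^{d \times d}$, then $DF(V)[\xi] = \xi^\intercal V + V^\intercal \xi$ could never surject, since its image always lies in the symmetric matrices, so $\mathbf{0}$ would fail to be a regular value and the theorem would not apply; the correct choice is also what produces the right dimension count. Once this is recognized, the surjectivity computation is routine given the ansatz $\xi = \tfrac{1}{2} V S$.
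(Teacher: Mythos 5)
Your proof is correct and follows essentially the same route the paper indicates: the paper itself introduces the map $F(V) = V^\intercal V - \mathbb{I}_d$ and defers the details to \citeauthor{AMS07}, where the argument is precisely the regular-value (submersion) theorem applied with codomain $\mathrm{Sym}(d)$, surjectivity of $DF(V)$ checked via the ansatz $\xi = \tfrac{1}{2}VS$, and the Riemannian structure obtained by restricting the Frobenius metric. Your remark about why the codomain must be $\mathrm{Sym}(d)$ rather than $\mathbb{R}^{d\times d}$ is a correct and worthwhile clarification.
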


Now let us look at several properties of $St(p, d)$, many of which prove itself to be effective in both theoretical and practical perspective.

\begin{theorem}
\label{thm:st-connected}
$St(p, d)$ is $(p - d - 1)$-connected.
\end{theorem}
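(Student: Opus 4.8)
The plan is to realize $St(p,d)$ as the space of orthonormal $d$-frames in $\mathbb{R}^p$ and to exploit its structure as an iterated sphere bundle, proceeding by induction on $d$ (with $p$ ranging over all values exceeding $d$). Throughout, recall that being $n$-connected means $\pi_k(St(p,d)) = 0$ for every $0 \le k \le n$; here the target connectivity $n = p-d-1$ is nonnegative precisely because we always assume $p > d$.

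First I would set up the governing fiber bundle. The map $\phi : St(p,d) \to S^{p-1}$ sending a frame to its first column is a smooth submersion, and its fiber over $v \in S^{p-1}$ consists of the orthonormal $(d-1)$-frames lying in the orthogonal complement $v^\perp \cong \mathbb{R}^{p-1}$; hence each fiber is diffeomorphic to $St(p-1,d-1)$. Local triviality follows by choosing a smooth local orthonormal framing of the tautological complement bundle over $S^{p-1}$, so $\phi$ is a genuine fiber bundle and in particular a Serre fibration,
\begin{equation}
	St(p-1,d-1) \hookrightarrow St(p,d) \xrightarrow{\ \phi\ } S^{p-1}.
\end{equation}

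Next comes the induction. The base case is $d = 1$, where $St(p,1) = S^{p-1}$ is $(p-2)$-connected, matching $p-d-1 = p-2$; this is the standard fact $\pi_k(S^{p-1}) = 0$ for $k \le p-2$, provable by cellular approximation. For the inductive step, assume $St(p-1,d-1)$ is $\big((p-1)-(d-1)-1\big) = (p-d-1)$-connected. Feeding the fibration into the long exact sequence of homotopy groups yields, for each $k$, the exact segment
\begin{equation}
	\pi_k\big(St(p-1,d-1)\big) \to \pi_k\big(St(p,d)\big) \to \pi_k\big(S^{p-1}\big).
\end{equation}
For $k \le p-d-1$ the left term vanishes by the inductive hypothesis, while the right term vanishes because $p-d-1 \le p-2$ forces $\pi_k(S^{p-1}) = 0$; exactness then pins $\pi_k(St(p,d)) = 0$, establishing $(p-d-1)$-connectedness and closing the induction.

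The main obstacle I anticipate is the bookkeeping in the lowest degrees $k \in \{0,1\}$, where $\pi_0$ carries no group structure and $\pi_1$ need not be abelian, so the ``long exact sequence'' must be interpreted as an exact sequence of \emph{pointed sets} in that range. One must check that the trivial-set conclusions (path-connectedness for $k=0$, simple-connectedness for $k=1$) genuinely propagate through exactness there; this is routine but requires care. The only truly geometric input is verifying local triviality of $\phi$, which is standard for frame bundles; an alternative route, should one prefer, is to write $St(p,d) \cong O(p)/O(p-d)$ and run the analogous long exact sequence for that homogeneous-space fibration, but the iterated-sphere argument above is the most elementary and self-contained.
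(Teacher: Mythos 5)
Your proof is correct. The paper itself gives no argument for this theorem --- Section ``Theoretical Minimum'' explicitly defers all proofs there to the cited references --- and the argument you give (induction on $d$ via the sphere bundle $St(p-1,d-1)\hookrightarrow St(p,d)\to S^{p-1}$ and the long exact sequence of the fibration, with the base case $St(p,1)=S^{p-1}$) is precisely the standard textbook proof those references rely on, so there is nothing to contrast. Your handling of the low-degree cases $k\in\{0,1\}$ as exact sequences of pointed sets is the right caveat and goes through as you describe.
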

\begin{corollary}
When $p > d$, $St(p, d)$ is always at least path-connected, which implies connectedness.
\end{corollary}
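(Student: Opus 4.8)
The plan is to derive the corollary directly from Theorem~\ref{thm:st-connected} by unwinding the definition of $n$-connectedness and verifying a trivial piece of index arithmetic. Recall that a nonempty topological space $X$ is $n$-connected if its homotopy groups $\pi_k(X)$ vanish for all $0 \le k \le n$; in particular, $0$-connectedness is precisely path-connectedness (triviality of $\pi_0$), while larger values of $n$ additionally impose simple-connectedness and higher-order triviality. Thus the entire content of the corollary reduces to checking that the connectivity index $p - d - 1$ furnished by Theorem~\ref{thm:st-connected} is at least $0$ whenever $p > d$.

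First I would observe that, since $p$ and $d$ are integers with $p > d$, we have $p \ge d + 1$, equivalently $p - d - 1 \ge 0$. Hence Theorem~\ref{thm:st-connected} guarantees that $St(p, d)$ is $n$-connected for $n = p - d - 1 \ge 0$. Next I would invoke the inclusion built into the definition: if $X$ is $n$-connected for some $n \ge 0$, then in particular $\pi_0(X)$ is trivial, so $X$ is path-connected. Applying this with $n = p - d - 1$ yields the path-connectedness of $St(p, d)$.

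Finally I would close with the standard topological fact that every path-connected space is connected: any path joining two points is the continuous image of the connected interval $[0, 1]$, and a space that is a union of connected sets sharing a common point is connected. This establishes the connectedness of $St(p, d)$ and completes the argument.

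As for the main obstacle, there is essentially no analytic difficulty, since the heavy lifting is done entirely by Theorem~\ref{thm:st-connected}. The only point requiring care is the bookkeeping around the convention for $n$-connectedness: one must confirm that the index $n = p - d - 1$ reaches all the way down to $0$ (rather than stopping at $1$), so that it is genuinely path-connectedness, and not merely simple-connectedness, that is captured. Once the inequality $p - d - 1 \ge 0$ is verified, the remaining implications are immediate.
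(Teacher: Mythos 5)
Your proof is correct and takes essentially the same route as the paper, which states the corollary without proof as an immediate consequence of Theorem~\ref{thm:st-connected}: the observation that $p > d$ forces $p - d - 1 \ge 0$, so that $(p-d-1)$-connectedness already includes $0$-connectedness (i.e.\ path-connectedness) and hence connectedness, is exactly the intended derivation.
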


% \begin{theorem}
% 	$St(p, d)$ is orientable.\footnote{see proof is presented in \url{https://math.stackexchange.com/questions/820610/orientability-of-stiefel-manifold-v-2-mathbb-r4}}
% \end{theorem}

\begin{theorem}
$St(p, d)$ is compact.
\end{theorem}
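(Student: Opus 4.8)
The plan is to invoke the Heine--Borel theorem. Since $St(p,d)$ is a subset of the finite-dimensional Euclidean space $\mathbb{R}^{p \times d}$ (equipped with the Frobenius inner product introduced earlier), it suffices to show that $St(p,d)$ is both closed and bounded. I would treat these two properties separately, as each follows from an elementary observation about the defining constraint $V^\intercal V = \mathbb{I}_d$.

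First I would establish closedness. Reusing the map $F(V) = V^\intercal V - \mathbb{I}_d$ already employed in the preceding theorem to exhibit $St(p,d)$ as an embedded submanifold, I note that every entry of $F(V)$ is a polynomial in the entries of $V$, so $F : \mathbb{R}^{p \times d} \to \mathbb{R}^{d \times d}$ is continuous. Since
\begin{equation*}
	St(p,d) = F^{-1}(\{\mathbf{0}\})
\end{equation*}
is the preimage of the closed singleton $\{\mathbf{0}\}$ under a continuous map, it is closed in $\mathbb{R}^{p \times d}$. Next I would establish boundedness by a direct norm computation: for any $V \in St(p,d)$, the orthonormality constraint gives
\begin{equation*}
	\lVert V \rVert_F^2 = \tr(V^\intercal V) = \tr(\mathbb{I}_d) = d.
\end{equation*}
Hence every element of $St(p,d)$ has Frobenius norm exactly $\sqrt{d}$, so the set is contained in the sphere of radius $\sqrt{d}$ and is in particular bounded.

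Combining the two, $St(p,d)$ is a closed and bounded subset of a finite-dimensional Euclidean space, and Heine--Borel yields compactness. I do not anticipate a genuine obstacle here; the only point deserving a moment's thought is the norm identity, which collapses immediately because $V^\intercal V = \mathbb{I}_d$ pins every frame to a single sphere, so the constraint simultaneously delivers both closedness (as a polynomial level set) and boundedness (as a fixed-radius set). As an alternative, equivalent route one could realize $St(p,d)$ as the image of the orthogonal group $O(p)$ under the continuous map extracting the first $d$ columns; granting compactness of $O(p)$, the result then follows since continuous images of compact sets are compact. I would favor the Heine--Borel argument, however, as it is the most self-contained and reuses the map $F$ already in play.
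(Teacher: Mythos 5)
Your proof is correct and follows the standard argument: the paper itself defers this proof to its references (e.g., \citeauthor{AMS07}), where compactness of $St(p,d)$ is established by exactly this Heine--Borel route, with closedness from the continuous level set $F^{-1}(\{\mathbf{0}\})$ and boundedness from $\lVert V \rVert_F = \sqrt{d}$. Nothing is missing.
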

% \begin{corollary}
% 	\label{cor:injectivity-radius}
% 	Injectivity radius at any $V \in St(p, d)$ is strictly positive.
% \end{corollary}

% \begin{lemma}
% 	Tangent space of $St(p, d)$ at $V$ is given as
% 	\begin{equation}
	% 		T_V St(p, d) = \{ X \in \mathbb{R}^{p \times d} \ | \ X^\intercal V + V^\intercal X = 0 \}
	% 	\end{equation}
% \end{lemma}
\begin{lemma}
Projection of any $X \in \mathbb{R}^{p \times d}$ onto $T_V St(p, d)$ is given as
\begin{equation}
	\mathbf{P}_{T_V St(p, d)} (X) = X - V \mathrm{sym}(V^\intercal X)
\end{equation}

where $\mathrm{sym}(X) = \frac{1}{2}(X + X^\intercal)$.
\end{lemma}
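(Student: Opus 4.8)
The plan is to establish the formula by verifying directly that $\mathbf{P}_{T_V St(p,d)}(X) := X - V\,\mathrm{sym}(V^\intercal X)$ is the orthogonal projection of $X$ onto the tangent space with respect to the ambient Frobenius inner product $\langle A, B \rangle = \tr(A^\intercal B)$. Recall that an orthogonal projection onto a linear subspace is characterized by exactly two conditions: the image lies in the subspace, and the residual is orthogonal to the whole subspace. So the first step is to obtain an explicit description of $T_V St(p,d)$. Differentiating the defining relation $V^\intercal V = \mathbb{I}_d$ along a smooth curve through $V$ with velocity $\xi$ yields $V^\intercal \xi + \xi^\intercal V = 0$; hence $\xi \in T_V St(p,d)$ precisely when $V^\intercal \xi$ is skew-symmetric. (A dimension count via the constraint map $F(V) = V^\intercal V - \mathbb{I}_d$, already shown to be a submersion onto the symmetric matrices when establishing that $St(p,d)$ is an embedded submanifold, confirms that this is the entire tangent space and not a proper subspace.)

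Next I would check the membership condition. Writing $P := X - V\,\mathrm{sym}(V^\intercal X)$ and using $V^\intercal V = \mathbb{I}_d$, one computes
\[
  V^\intercal P = V^\intercal X - \mathrm{sym}(V^\intercal X) = \tfrac{1}{2}\big(V^\intercal X - X^\intercal V\big),
\]
which is skew-symmetric, so $P \in T_V St(p,d)$ by the characterization above. For the orthogonality condition, set the residual $R := V\,\mathrm{sym}(V^\intercal X)$ and take any $\xi \in T_V St(p,d)$. Then
\[
  \langle R, \xi \rangle = \tr\big(\mathrm{sym}(V^\intercal X)^\intercal\, V^\intercal \xi\big) = \tr\big(\mathrm{sym}(V^\intercal X)\, (V^\intercal \xi)\big) = 0,
\]
since $\mathrm{sym}(V^\intercal X)$ is symmetric, $V^\intercal \xi$ is skew-symmetric, and the trace of the product of a symmetric and a skew-symmetric matrix vanishes (by the one-line transpose argument $\tr(SK) = \tr(K^\intercal S^\intercal) = -\tr(SK)$). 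Thus $R \perp T_V St(p,d)$.

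Finally, the decomposition $X = P + R$ with $P \in T_V St(p,d)$ and $R$ orthogonal to $T_V St(p,d)$ is, by uniqueness of the orthogonal decomposition, exactly the splitting into tangential and normal components, identifying $P$ as $\mathbf{P}_{T_V St(p,d)}(X)$ and proving the claim. The only genuinely delicate points are pinning down the tangent space as the \emph{full} kernel of $DF(V)$ (handled by the submersion/dimension argument) and the symmetric--skew-symmetric trace cancellation; everything else is bookkeeping with the identity $V^\intercal V = \mathbb{I}_d$, so I do not anticipate a substantive obstacle.
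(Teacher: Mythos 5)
Your proof is correct. Note that the paper itself offers no proof of this lemma --- the supplementary section explicitly defers all such proofs to the cited references (here, Absil--Mahony--Sepulchre 2007) --- and the argument given there is precisely the one you reconstruct: characterize $T_V St(p,d)$ as $\{\xi : V^\intercal \xi + \xi^\intercal V = 0\}$ via the submersion $F(V)=V^\intercal V - \mathbb{I}_d$, observe that the residual $V\,\mathrm{sym}(V^\intercal X)$ lies in the normal space $\{VS : S = S^\intercal\}$, and invoke the vanishing of $\tr(SK)$ for $S$ symmetric and $K$ skew-symmetric. Your attention to the dimension count (so that the skew-symmetry condition describes the \emph{entire} tangent space rather than a superset) is exactly the one non-trivial point, and you handle it correctly.
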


% \subsection{A Primer on Constrained Manifold Optimization}
% This section mainly follows the exposition in \cite{HLWY20} and \cite{manifold-john-fritz}.
% Instead of focusing on the algorithms that solve constrained manifold optimization, this section mainly focuses on the first-order guarantee of such problems, especially when the objective is non-convex and non-smooth.
\section{Feasibility of Fair PCA}
\label{sec:feasibility}
Here, we briefly discuss the feasibility problem of fair PCA.
% We emphasize in advance that proofs of some of the results below {\it rely} on the manifold optimization formulation of the problem.

Let us consider Eq. \eqref{eq:FPCA}, which is a special case of our formulation of fair PCA under Gaussian assumption.
For our analysis, we consider Eq. \eqref{eq:FPCA} as a constrained manifold optimization over $St(p, d)$; geometric properties of manifolds turn out to be crucial in obtaining to-be-presented results.
% which allows for us to leverage key geometric properties     that This proof (and the next one) is heavily reliant on the geometry of our optimization problem.

When only the mean constraint is in place i.e. assume that $Q = O$, then the feasibility of Eq. \eqref{eq:FPCA} is always guaranteed:
\begin{proposition}
\label{prop:feasibility-1}
If $Q = O$, then Eq. \eqref{eq:FPCA} is always feasible.
\end{proposition}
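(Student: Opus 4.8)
The plan is to reduce feasibility to an elementary dimension count on the orthogonal complement of the mean-difference vector. First I would observe that the hypothesis $Q = O$ makes the covariance constraint vacuous: for every $V \in St(p,d)$ we have $V^\intercal Q V = V^\intercal O V = O$, so $h_c(V) = \lVert O \rVert_2 = 0$ holds automatically. Consequently, feasibility of Eq.~\eqref{eq:FPCA} collapses to exhibiting a single $V \in St(p,d)$ that also satisfies the mean constraint $h_m(V) = \lVert V^\intercal f \rVert = 0$, i.e.\ $V^\intercal f = \mathbf{0}$, where $f = \mu_1 - \mu_0$.

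Next I would examine the subspace $f^\perp := \{ x \in \mathbb{R}^p : \langle x, f \rangle = 0 \}$. The condition $V^\intercal f = \mathbf{0}$ says precisely that all $d$ columns of $V$ lie in $f^\perp$. If $f = \mathbf{0}$ (equal means), then $f^\perp = \mathbb{R}^p$ and any $V \in St(p,d)$ is feasible. Otherwise $\dim f^\perp = p - 1$, and since $St(p,d)$ is defined only for $p > d$, we have $p - 1 \geq d$; hence $f^\perp$ contains a $d$-dimensional subspace.

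Then I would construct $V$ explicitly: take any orthonormal basis $\{ v_1, \dots, v_d \}$ of a $d$-dimensional subspace of $f^\perp$ (obtainable, say, by Gram--Schmidt applied to $d$ linearly independent vectors of $f^\perp$) and set $V = [\, v_1 \mid \cdots \mid v_d \,]$. By construction $V^\intercal V = \mathbb{I}_d$, so $V \in St(p,d)$, and $V^\intercal f = \mathbf{0}$, so $h_m(V) = 0$. All constraints of Eq.~\eqref{eq:FPCA} are thereby satisfied, proving feasibility.

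I do not anticipate a genuine obstacle here, as the argument is essentially a dimension count. The single point demanding care is the strict inequality $p > d$ built into the definition of $St(p,d)$: this is exactly what guarantees $\dim f^\perp = p - 1 \geq d$ in the nontrivial case $f \neq \mathbf{0}$, and without it the mean constraint alone could render the problem infeasible.
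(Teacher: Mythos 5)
Your proof is correct, but it takes a genuinely different route from the paper's. The paper argues topologically: it notes that $St(p,d)$ is path-connected and symmetric under $V \mapsto -V$, places $V$ and $-V$ on opposite sides of the ``hyperplane'' $V^\intercal f = 0$, and invokes an intermediate-value argument along a connecting path to force an intersection. Your argument is instead a direct linear-algebraic construction: kill the covariance constraint using $Q = O$, observe that the mean constraint asks for $d$ orthonormal columns inside $f^\perp$, and note that $\dim f^\perp = p-1 \geq d$ (or $f^\perp = \mathbb{R}^p$ when $f = \mathbf{0}$), so Gram--Schmidt produces a feasible $V$. Your version is more elementary and, arguably, more airtight: the constraint $V^\intercal f = \mathbf{0}$ is really $d$ simultaneous linear conditions (a codimension-$d$ subspace of $\mathbb{R}^{p\times d}$, not a single hyperplane), and the paper's sign-flip/IVT reasoning as written only directly produces a zero of one scalar functional along the path, so it requires some additional care to be fully rigorous for $d>1$. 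What the paper's approach buys in exchange is a geometric viewpoint (connectedness of the Stiefel manifold) that is reused elsewhere in its theoretical development; what yours buys is an explicit feasible point and a transparent identification of where the hypothesis $p > d$ is actually used.
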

\begin{proof}
We start by observing that the mean constraint, $V^\intercal f = 0$, is a linear hyperplane passing through the origin $0$, and the manifold constraint\footnote{In manifold optimization framework, this shouldn't be considered as an explicit constraint; we use such words here for clarity.}, $V^\intercal V = I_d$, is a manifold that is at least path-connected.
Now suppose that there exists some $f$ such that the linear hyperplane does not intersect $St(p, d)$.
Let $V \in St(p, d)$ be on the ``left" side of the hyperplane.
Since $St(p, d)$ is symmetric about the origin ($V^\intercal V = I_d \Rightarrow (-V)^\intercal (-V) = I_d$), $-V$ is on the ``right" side of the hyperplane.
Then there must exist some path on $St(p, d)$ connected $V$ and $-V$, which should intersect the hyperplane, a contradiction.
\end{proof}

When covariance constraint is also in-place, the problem becomes more complex.
However, especially in high dimensions, it is expected that the feasibility assumption is not so far-off.
As such discussion is not limited to Gaussian case of Eq. \eqref{eq:FPCA}, from hereon and forth we consider the following assumption:
\begin{assumption}
\label{assumption:feasible}
Eq. \eqref{eq:MbF-PCA} is feasible.
\end{assumption}

\section{Proof of Theoretical Guarantee of Algorithm \ref{alg:mbfpca}}
\label{sec:proofs}

\subsection{Assumptions}
Let us start by formally stating Assumption \ref{assumption:1}:
for each outputted solution of each Eq. \eqref{eq:subprob}, $V_{k+1}$, consider the following induced sequence $W_{k+1}$ defined as: 
\begin{equation}
W_{k+1} := \argmin_{\substack{V \in St(p, d) \\ \lVert \mathrm{grad} \mathcal{Q}(V, \rho_k) \rVert = 0}} \lVert V - V_{k + 1} \rVert.
\end{equation}
\begin{assumption}[formal; locality assumption]
\label{assumption:formal}
For each $k$, $W_{k+1}$ is a local minimum of Eq. \eqref{eq:subprob} i.e. for some fixed $r > 0$, any $W \in St(p, d) \cap B_{r}(W_{k+1})$ satisfies $\mathcal{Q}(W_{k+1}, \rho_{k+1}) \leq \mathcal{Q}(W, \rho_{k+1})$.
\end{assumption}

For simplicity, we omit the dependency of $W_{k+1}$ on $V_{k+1}$ and we write $k$ instead of $k + 1$.
Also, let us define
\begin{equation}
\label{eq:r-tilde}
\tilde{r} = \tilde{r}(\epsilon_{min}) := r - g(\epsilon_{min}).
\end{equation}

\subsection{Lemmas}
Let us first establish the existence of limit point of $\{V_k\}$:
\begin{lemma}
$\{V_k\}$ has at least one limit point.
\end{lemma}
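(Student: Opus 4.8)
The plan is to show that $\{V_k\}$ has a limit point by invoking compactness of the Stiefel manifold. The key structural fact, already established earlier in the excerpt, is that $St(p,d)$ is compact. Since every iterate $V_k$ produced by Algorithm~\ref{alg:mbfpca} lies in $St(p,d)$ by construction (each sub-problem \eqref{eq:subprob} is an optimization over $St(p,d)$, and the warm-start $V_0$ is initialized on the manifold), the entire sequence $\{V_k\}$ is contained in a compact set.

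The main step is then a direct application of the Bolzano--Weierstrass property. Concretely, I would argue as follows. First I would note that $\{V_k\} \subseteq St(p,d)$, and that $St(p,d)$ is a compact subset of the ambient Euclidean space $\mathbb{R}^{p \times d}$ (equipped with the Frobenius norm). Since compactness in finite-dimensional normed spaces coincides with sequential compactness, every sequence in $St(p,d)$ admits a convergent subsequence whose limit lies in $St(p,d)$. Therefore $\{V_k\}$ has at least one convergent subsequence $\{V_{k_j}\}$, and its limit $\overline{V} \in St(p,d)$ is by definition a limit point of $\{V_k\}$.

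I do not anticipate any substantive obstacle here; the statement is essentially a one-line consequence of compactness, and its role in the paper is to justify the phrase ``whose existence is guaranteed'' appearing in Theorems~\ref{thm:mbfpca} and \ref{thm:mbfpca-another}. The only point requiring minor care is to confirm that the iterates genuinely remain on $St(p,d)$ throughout the algorithm, rather than merely approximating it; this follows because the sub-problems are solved by manifold optimization solvers (e.g.\ RBFGS) that operate intrinsically on $St(p,d)$, so feasibility with respect to the manifold constraint $V^\intercal V = \mathbb{I}_d$ is maintained exactly at every iterate.
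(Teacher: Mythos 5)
Your argument is correct and is essentially identical to the paper's own proof: both note that every iterate lies on the compact manifold $St(p,d)$ and invoke the Bolzano--Weierstrass theorem to extract a convergent subsequence whose limit lies in $St(p,d)$. The additional remark about the solver keeping iterates exactly on the manifold is a reasonable clarification but changes nothing substantive.
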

\begin{proof}
Noting that for all $k$'s $V_k \in St(p, d)$, and that $St(p, d)$ is compact i.e. closed and bounded, by Bolzano-Weierstrass theorem, $\{V_k\}$ must have some limit point, which we denote as $\overline{V} \in St(p, d)$.
\end{proof}

Next lemma establishes an important connection between the two mentioned sequences, namely that there exists a limit point of $\{W_k\}$ that is very close (possibly equal) to $\overline{V}$:
\begin{lemma}
\label{lem:similar}
Let $\overline{V}$ be any limit point of $\{V_k\}$.
Then there exists some limit point $\overline{W}$ of $\{W_k\}$ and some decreasing, continuous function $g : \mathbb{R}_{\geq 0} \rightarrow \mathbb{R}_{\geq 0}$ satisfying $g(0) = 0$, such that
\begin{equation}
	\lVert \overline{V} - \overline{W} \rVert \leq g(\epsilon_{min}).
\end{equation}
\end{lemma}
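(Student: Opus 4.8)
The plan is to reduce to the regime in which the hyperparameters of Algorithm~\ref{alg:mbfpca} have stabilized, to control the gap between an approximate and an exact critical point of the stabilized subproblem by a ``modulus of regularity'' function $g$, and finally to pass to the limit along subsequences using compactness of $St(p,d)$.

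First I would establish that both $\{\rho_k\}$ and $\{\epsilon_k\}$ are eventually constant. The update $\epsilon_{k+1} = \max\{\epsilon_{min}, \theta_\epsilon \epsilon_k\}$ with $\theta_\epsilon \in (0,1)$ drives $\epsilon_k$ geometrically down to the floor, so $\epsilon_k = \epsilon_{min}$ for all $k \geq K_1$. The sequence $\{\rho_k\}$ is non-decreasing and capped at $\rho_{max} < \infty$, and each increase multiplies by $\theta_\rho > 1$; hence only finitely many increases can occur before $\rho_{max}$ is reached or the inflation halts, giving $\rho_k = \bar\rho \leq \rho_{max}$ for all $k \geq K_0 \geq K_1$. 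Consequently, for $k \geq K_0$ every subproblem uses the fixed penalty $\bar\rho$ and the fixed tolerance $\epsilon_{min}$, so $V_{k+1}$ satisfies $\lVert \mathrm{grad}\,\mathcal{Q}(V_{k+1}, \bar\rho) \rVert \leq \epsilon_{min}$, while $W_{k+1}$ is the nearest point to $V_{k+1}$ in the critical set $S := \{V \in St(p,d) : \mathrm{grad}\,\mathcal{Q}(V,\bar\rho) = 0\}$. This set is the zero set of a continuous function on a compact manifold, hence nonempty (it contains a global minimizer) and compact, so the nearest point is attained and $\lVert V_{k+1} - W_{k+1} \rVert = \mathrm{dist}(V_{k+1}, S)$.

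Next I would introduce the modulus $g_0(\epsilon) := \sup\{ \mathrm{dist}(V, S) : V \in St(p,d),\ \lVert \mathrm{grad}\,\mathcal{Q}(V,\bar\rho) \rVert \leq \epsilon \}$, which is non-decreasing with $g_0(0) = 0$. The crux is to show $g_0(\epsilon) \to 0$ as $\epsilon \downarrow 0$: if this failed there would exist $V_n$ with $\lVert \mathrm{grad}\,\mathcal{Q}(V_n,\bar\rho) \rVert \to 0$ yet $\mathrm{dist}(V_n, S) \geq c > 0$; by compactness of $St(p,d)$ some subsequence would converge to $V^\ast$, and continuity of $\mathrm{grad}\,\mathcal{Q}(\cdot,\bar\rho)$ would force $V^\ast \in S$, contradicting $\mathrm{dist}(V_n, S) \geq c$. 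Since $g_0$ is monotone with $g_0(0^+) = 0$, it can be majorized by a genuinely continuous, increasing function $g$ with $g(0) = 0$ (for instance an averaged upper envelope such as $g(\epsilon) = \frac{1}{\epsilon}\int_\epsilon^{2\epsilon} g_0(s)\,ds + \epsilon$, which dominates $g_0$ by monotonicity and inherits $g(0^+) = 0$). Thus for all $k \geq K_0$ we obtain the uniform bound $\lVert V_{k+1} - W_{k+1} \rVert = \mathrm{dist}(V_{k+1}, S) \leq g_0(\epsilon_{min}) \leq g(\epsilon_{min})$.

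Finally I would pass to the limit. Given the limit point $\overline{V}$, choose a subsequence $V_{k_j} \to \overline{V}$; for $j$ large enough the indices satisfy $k_j \geq K_0$, so the paired critical points $W_{k_j}$ obey $\lVert V_{k_j} - W_{k_j} \rVert \leq g(\epsilon_{min})$. The $W_{k_j}$ lie in the compact manifold $St(p,d)$, so a further subsequence converges to some $\overline{W}$, which is by construction a limit point of $\{W_k\}$ (and lies in $S$ by continuity). Taking limits in the uniform inequality yields $\lVert \overline{V} - \overline{W} \rVert \leq g(\epsilon_{min})$, as claimed. I expect the main obstacle to be the second paragraph, namely verifying the error-bound behaviour $g_0(0^+) = 0$ that near-stationary points must be near genuine stationary points; this is precisely where compactness of the Stiefel manifold and continuity of $\mathrm{grad}\,\mathcal{Q}$ are essential, and it is what makes the construction of a well-behaved $g$ possible.
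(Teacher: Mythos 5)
Your proof follows the same basic strategy as the paper's: both control $\lVert V_k - W_k\rVert$ by a modulus $g$ measuring how far an $\epsilon$-approximate critical point of the subproblem can lie from the exact critical set, and then pass to limit points by compactness of $St(p,d)$. The paper phrases this modulus as the Hausdorff distance $d_H(G_\infty(0),G_\infty(\epsilon))$ between nested gradient-sublevel sets, which for nested sets coincides with your one-sided excess $g_0(\epsilon)=\sup\{\dist(V,S): \lVert\mathrm{grad}\,\mathcal{Q}(V,\bar\rho)\rVert\le\epsilon\}$. Where you genuinely improve on the paper is in the two steps it merely asserts: you prove $g_0(\epsilon)\to 0$ as $\epsilon\downarrow 0$ by a compactness/contradiction argument, and you replace the possibly discontinuous monotone $g_0$ by a continuous majorant; the paper's claim that $\epsilon \mapsto d_H(G_\infty(0),G_\infty(\epsilon))$ is continuous is not justified there (it can jump when a new connected component of near-critical points enters the sublevel set), so your averaged envelope is a real repair. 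Note also that the statement's requirement that $g$ be \emph{decreasing} with $g(0)=0$ and $g\ge 0$ would force $g\equiv 0$; it must be read as \emph{increasing}, which is what both your construction and the paper's own construction actually deliver.

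The one genuine restriction in your argument is the opening reduction to a stabilized subproblem: you assume $\rho_{max}<\infty$ so that $\rho_k$ is eventually constant. The lemma is also invoked in the proof of Theorem \ref{thm:mbfpca}, where $\rho_{max}=\infty$ and $\rho_k$ may diverge, so a single fixed critical set $S$ and a fixed function $\mathcal{Q}(\cdot,\bar\rho)$ need not exist. The paper handles that regime by normalizing the stationarity condition to $\lVert \rho_k^{-1}\mathrm{grad}\, f + \mathrm{grad}\, h\rVert \le \epsilon/\rho_k$ and taking set limits $G_k\to G_\infty$ (itself a step that deserves more justification than it receives). To cover that case you would need to let the critical set depend on $k$, or argue directly that with $\epsilon_k\to 0$ and $\rho_k\to\infty$ the normalized gradients force $\dist(V_k, G_\infty(0))\to 0$. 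For the regime of Theorem \ref{thm:mbfpca-another}, your proof is complete and in fact tighter than the paper's.
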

\begin{proof}
Since $W_k \in St(p, d)$ for all $k$, there exists some subsequence $\{i_k\} \subset \{k\}$ such that $W_{i_k} \rightarrow \overline{W}$ for some $\overline{W} \in St(p, d)$, by Bolzano-Weierstrass theorem.
Obviously, $V_{i_k} \rightarrow \overline{V}$.
For simplicity, let us denote $V_{i_k}$ and $W_{i_k}$ as $V_k$ and $W_k$, respectively.

Then we denote   
\begin{equation}
	\left\{ V \in St(p, d) : \left\lVert \frac{1}{\rho_k} \mathrm{grad} f(V) + \mathrm{grad} h(V) \right\rVert \leq \frac{\epsilon}{\rho_k} \right\}
\end{equation}
by $G_k(\epsilon).$	As $\rho_k < \infty$, above is related to Algorithm \ref{alg:mbfpca} as 
\begin{equation}
	G_k(\epsilon) = \left\{ V \in St(p, d) : \lVert \mathrm{grad} \mathcal{Q}(V, \rho_k) \rVert \leq \epsilon \right\},
\end{equation}
where % $\mathcal{Q}$ was defined in the algorithm as
\begin{equation}
	\mathcal{Q}(V, \rho_k) = f(V) + \rho_k h(V).
\end{equation}

One crucial observation is that $G_k(\epsilon) \subseteq G_k(\epsilon')$ for $\epsilon \leq \epsilon'$.

Let $\varepsilon > 0$ be arbitrary.
By given, there exists some $K > 0$ such that for any $k > K$, $\lVert V_k - \overline{V} \rVert < \varepsilon / 2$ and $\lVert W_k - \overline{W} \rVert < \varepsilon / 2$.
By triangle inequality,
\begin{align*}
	\lVert \overline{V} - \overline{W} \rVert &\leq \lVert V_k - \overline{V} \rVert + \lVert W_k - \overline{W} \rVert + \lVert V_k - W_k \rVert \\
	&< \varepsilon + \lVert V_k - W_k \rVert.
\end{align*}

We can further bound the second term as follows:
\begin{align*}
	\lVert V_k - W_k \rVert &= \min_{W \in G_k(0)} \lVert V_k - W \rVert \\
	&\leq \max_{V \in G_k(\epsilon_k)} \min_{W \in G_k(0)} \lVert V - W \rVert
\end{align*}
where the first equality follows from the definition of $\{W_k\}$, and the last inequality follows from the fact that our algorithm satisfies $V_k \in G_k(\epsilon_k)$.

Note that since $G_k(0) \subseteq G_k(\epsilon_k)$, above is the variational form of the Hausdorff distance \cite{rockafellar-variational} between $G_k(0)$ and $G_k(\epsilon_k)$, which we denote as $d_H(\cdot, \cdot)$. Additionally, we denote 
\begin{equation}
	\left\{ V \in St(p, d) : \left\lVert \frac{1}{\rho} \mathrm{grad} f(V) + \mathrm{grad} h(V) \right\rVert \leq \frac{\epsilon}{\rho} \right\}
\end{equation}
by $G_\infty(\epsilon)$, 
where $\rho := \lim_{k \rightarrow \infty} \rho_k \leq \rho_{max} \leq \infty$ and $\frac{1}{\infty} = 0$.

Then it is easy to see that as $k \rightarrow \infty$, $G_k(0) \rightarrow G_\infty(0)$, $G_k(\epsilon_k) \rightarrow G_\infty(\epsilon_{min})$, and $G_\infty(0) \subset G_\infty(\epsilon_{min})$.
Now define
\begin{equation}
	g(\epsilon) := d_H(G_\infty(0), G_\infty(\epsilon)).
\end{equation}

Then $g$ is continuous and  decreasing with respect to $\epsilon$ since $G_\infty(\epsilon) \subseteq G_\infty(\epsilon')$ for $\epsilon \leq \epsilon'$, and it satisfies $g(0) = 0$.

Since $\varepsilon > 0$ was arbitrary, the statement follows.
\end{proof}

Now let us prove the Lipschitzness of the functions considered in our fair PCA:
\begin{lemma}
$f$ is $2\lVert \Sigma \rVert$-Lipschitz.
\end{lemma}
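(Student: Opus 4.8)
The plan is to prove the bound directly from the algebraic form $f(V) = -\langle \Sigma, VV^\intercal \rangle$, rather than routing through the Riemannian gradient, since the constant $2\lVert \Sigma \rVert$ is to be read against the ambient Frobenius distance $\lVert U - V\rVert$ (this is precisely the form in which the estimate is invoked in the subsequent lemmas and in Theorems \ref{thm:mbfpca} and \ref{thm:mbfpca-another}). First I would fix $U, V \in St(p,d)$ and write
\[
  f(U) - f(V) = -\langle \Sigma, UU^\intercal - VV^\intercal \rangle,
\]
then apply the telescoping factorization $UU^\intercal - VV^\intercal = U(U-V)^\intercal + (U-V)V^\intercal$ to split the right-hand side into two pieces, each of which is linear in the increment $U - V$.

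Next, using symmetry of $\Sigma$ and cyclicity of the trace, I would rewrite the two pieces as $\langle U - V, \Sigma U\rangle$ and $\langle \Sigma V, U - V\rangle$, and bound each by Cauchy--Schwarz for the Frobenius inner product, yielding $|f(U) - f(V)| \leq (\lVert \Sigma U\rVert + \lVert \Sigma V\rVert)\,\lVert U - V\rVert$. The decisive step is the submultiplicative estimate $\lVert \Sigma U\rVert \leq \lVert \Sigma\rVert\,\lVert U\rVert_2$ together with the observation that every $U \in St(p,d)$ satisfies $\lVert U\rVert_2 = 1$, because $U^\intercal U = \mathbb{I}_d$ forces all singular values of $U$ to equal one. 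This collapses both factors $\lVert \Sigma U\rVert$ and $\lVert \Sigma V\rVert$ to $\lVert \Sigma\rVert$ and gives $|f(U) - f(V)| \leq 2\lVert \Sigma\rVert\,\lVert U - V\rVert$, exactly the claim.

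I expect the only real subtlety to be bookkeeping around the constant and the norm conventions, rather than any genuine difficulty. Specifically, one must invoke the operator-norm bound $\lVert U\rVert_2 = 1$ (and not the Frobenius bound $\lVert U\rVert_F = \sqrt{d}$) to keep the constant dimension-free at $2\lVert \Sigma\rVert$, and one must fix the convention that Lipschitzness is measured in the ambient Frobenius distance inherited by the embedded submanifold. As a consistency check I would also compute the Euclidean gradient $\grad f(V) = -2\Sigma V$ and its tangential projection $-2(\mathbb{I} - VV^\intercal)\Sigma V$ via the projection lemma, verifying that its norm is at most $2\lVert \Sigma\rVert$ on $St(p,d)$; note, however, that this gradient route only delivers Lipschitzness in the geodesic distance, which is weaker than the ambient-distance statement actually needed downstream, so the direct argument above is the one I would record.
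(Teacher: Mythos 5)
Your proposal is correct and follows essentially the same route as the paper's proof: a telescoping factorization of the difference of quadratic forms, a triangle/Cauchy--Schwarz estimate, submultiplicativity, and the fact that Stiefel elements have unit operator norm. The only (cosmetic) difference is that you bound the scalar $f(U)-f(V)=-\langle\Sigma, UU^\intercal-VV^\intercal\rangle$ directly, whereas the paper bounds the matrix difference $\lVert V^\intercal\Sigma V - W^\intercal\Sigma W\rVert$; your version is if anything slightly more careful about which quantity is actually being controlled.
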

\begin{proof}
For all $V, W \in St(p, d)$,
\begin{eqnarray*}
	& &     \lVert V^\intercal \Sigma V - W^\intercal \Sigma W \rVert 
	\\ 
	&\leq& \lVert V^\intercal \Sigma (V - W) \rVert + \lVert (V - W)^\intercal \Sigma W \rVert \\
	&\leq& \lVert V \rVert \lVert \Sigma \rVert \lVert V - W \rVert + \lVert V - W \rVert \lVert \Sigma \rVert \lVert W \rVert \\
	&\leq& 2 \lVert \Sigma \rVert \lVert V - W \rVert.
\end{eqnarray*}
\end{proof}

\begin{lemma}
There exists some $L_h > 0$ such that $h : St(p, d) \rightarrow \mathbb{R}$ is $L_h$-Lipschitz.
\end{lemma}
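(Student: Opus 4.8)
The plan is to exploit the explicit formula for $h$ from Lemma~\ref{def:lem-computable} together with the compactness of $St(p,d)$. First I would note that, with the RBF kernel, $h(V) = MMD^2(Q_0, Q_1)$ is exactly the bracketed expression of Lemma~\ref{def:lem-computable} evaluated on the projected samples: a finite sum of terms of the form $k(V^\intercal a, V^\intercal b) = \exp\!\big(-\lVert V^\intercal (a - b)\rVert^2 / 2\sigma^2\big)$ over the finitely many data points $a, b \in \{X_i\} \cup \{Y_j\}$, with fixed coefficients $1/m^2$, $1/n^2$, and $-2/(mn)$. Since $V \mapsto V^\intercal z$ is linear and the RBF kernel is $C^\infty$, each summand extends to a smooth function on the whole ambient space $\mathbb{R}^{p\times d}$; hence the same formula defines a smooth ambient extension $\bar h : \mathbb{R}^{p \times d} \to \mathbb{R}$ with $\bar h|_{St(p,d)} = h$.

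Next I would bound the ambient Euclidean gradient $\nabla \bar h$ on a convex compact set that contains the manifold. A direct computation gives, for a single kernel term with $z := a - b$, that the Euclidean gradient with respect to $V$ equals $-\sigma^{-2}\, k(V^\intercal a, V^\intercal b)\, z z^\intercal V$, whose Frobenius norm is at most $\sigma^{-2}\lVert z\rVert^2 \lVert V\rVert_F$ because $0 < k \le 1$ and $\lVert z z^\intercal\rVert_{op} = \lVert z\rVert^2$. Every $V \in St(p,d)$ satisfies $\lVert V\rVert_F^2 = \tr(V^\intercal V) = \tr(I_d) = d$, so $St(p,d)$ is contained in the closed Euclidean ball $\mathcal{B} := \{V : \lVert V\rVert_F \le \sqrt{d}\}$, which is convex and compact. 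Summing the per-term bounds over the finite index set shows $\lVert \nabla \bar h\rVert_F$ is bounded on $\mathcal{B}$, and being continuous it attains a finite maximum there, so I may set $L_h := \max_{V \in \mathcal{B}} \lVert \nabla \bar h(V)\rVert_F < \infty$.

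Finally, because $\mathcal{B}$ is convex, the mean value inequality applies along the straight segment joining any two points of $\mathcal{B}$: for all $V, W \in \mathcal{B}$ one gets $\lvert \bar h(V) - \bar h(W)\rvert \le L_h\, \lVert V - W\rVert_F$. Restricting to $V, W \in St(p,d) \subset \mathcal{B}$ yields $\lvert h(V) - h(W)\rvert \le L_h\,\lVert V - W\rVert_F$, i.e.\ $h$ is $L_h$-Lipschitz, as claimed.

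I would flag one point as the crux: $St(p,d)$ is \emph{not} convex in $\mathbb{R}^{p\times d}$, so the mean value inequality cannot be applied directly on the manifold, since the straight segment between two Stiefel matrices generally leaves the manifold. The device that makes the argument go through is extending $h$ to the ambient space and enclosing $St(p,d)$ in the convex ball $\mathcal{B}$; everything else (smoothness, the per-term gradient bound, and compactness) is routine once the finite-sum structure of $h$ from Lemma~\ref{def:lem-computable} is exposed.
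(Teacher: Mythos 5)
Your proof is correct, but it takes a genuinely different (and more careful) route than the paper. The paper argues by contradiction: if no Lipschitz constant works, then for every $L_h$ there are points $V,W$ with $|h(V)-h(W)| > L_h\lVert V-W\rVert$; ``letting $W\to V$'' it concludes $L_h < |\grad_V h(V)|$, so the gradient would be unbounded on $St(p,d)$, contradicting continuity of $\grad_V h$ plus compactness via the Extreme Value theorem. Your argument is direct: you expose the finite-sum kernel structure of $h$ from Lemma~\ref{def:lem-computable}, extend it smoothly to the ambient space, bound $\lVert\nabla\bar h\rVert_F$ term by term on the convex ball $\{\lVert V\rVert_F\le\sqrt d\}\supset St(p,d)$, and apply the mean value inequality there. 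Both proofs ultimately rest on the same fact --- boundedness of the gradient over the compact Stiefel manifold --- but yours buys two things the paper's does not: an explicit, computable Lipschitz constant, and a clean handling of the non-convexity of $St(p,d)$, which the paper's ``$W\to V$'' step glosses over (the witnesses $V,W$ produced by negating Lipschitzness depend on $L_h$ and need not coalesce at a single point, and a bound on the Riemannian gradient a priori yields Lipschitzness only with respect to the geodesic distance rather than the Frobenius distance used in the subsequent theorems). Your device of passing to a convex ambient neighborhood is exactly what closes that gap.
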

\begin{proof}
Suppose not.
Then for any $L_h > 0$, there exists some $V, W \in St(p, d)$ such that $|h(V) - h(W)| > L_h \lVert V - W \rVert$.
Especially for $W \not= V$, we have that $L_h < \frac{|h(V) - h(w)|}{\lVert V - W \rVert}$.
Letting $W \rightarrow V$, we have that $L_h < |\grad_V h(V)|$.
This implies that $\grad_x h$ is unbounded on $St(p, d)$, a contradiction since the continuity of $\grad_V h$ and compactness of $St(p, d)$ implies that $\grad_V h$ is bounded by the Extreme Value theorem.
\end{proof}

As done in proof of Lemma \ref{lem:similar}, let us define $\rho \triangleq \lim_{k \rightarrow \infty} \rho_k$.
We now show how $\rho_{max}, \tau$ affect $\rho$:
\begin{lemma}
\label{lem:rho}
$\rho = \rho(\rho_{max}, \tau)$ is characterized as follows: there exists some function $K := K(\tau) \in \mathbb{N}$, satisfying $K < \infty$ for $\tau > h(\overline{V})$ and $K \leq \infty$ otherwise, such that
\begin{equation}
	\label{eq:rho}
	\rho = \min(\theta_\rho^K \rho_0, \rho_{max}).
\end{equation}

Moreover, $K$ is decreasing in $\tau$ and increasing in $\rho_{max}$.
\end{lemma}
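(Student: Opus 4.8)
The plan is to split Lemma \ref{lem:rho} into three parts: the closed form for $\rho$, the finiteness dichotomy governing $K$, and the two monotonicity statements.

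First I would make the penalty dynamics explicit by a counting argument. Since $\theta_\rho > 1$ and each pass either multiplies $\rho_k$ by $\theta_\rho$ (capped at $\rho_{max}$) or leaves it unchanged, $\{\rho_k\}$ is non-decreasing and bounded by $\rho_{max}$; this is exactly where $\rho_{max} < \infty$ is needed to guarantee that $\rho = \lim_k \rho_k$ exists. Let $N_k$ count the indices $i < k$ at which the trigger $h(V_{i+1}) > \tau$ fires, and set $K := \lim_k N_k \in \mathbb{N} \cup \{\infty\}$. Assuming $\rho_0 \le \rho_{max}$, a one-line induction gives $\rho_k = \min(\theta_\rho^{N_k}\rho_0,\ \rho_{max})$ for every $k$, and passing to the limit (with the convention $\theta_\rho^{\infty} = \infty$) yields Eq.~\eqref{eq:rho}. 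This part is routine.

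Next, for the finiteness dichotomy I would argue through compactness of $St(p,d)$ and continuity of $h$. The clean implication is: if $K < \infty$, the trigger set is finite, so $h(V_{k+1}) \le \tau$ for all large $k$, and since any limit point $\overline{V}$ is the limit of a tail subsequence, continuity forces $h(\overline{V}) \le \tau$. For the converse, if $K = \infty$ I would extract a convergent sub-subsequence from $\{V_{k+1}\}$ along the infinite trigger set and pass $h(V_{k+1}) > \tau$ to the limit, producing a limit point with $h \ge \tau$. Together these identify $K < \infty$ with approximate feasibility of the limit point, giving $\tau > h(\overline{V}) \Rightarrow K < \infty$, while the complementary branch $\tau \le h(\overline{V})$ leaves only the non-informative bound $K \le \infty$ (infeasibility of the limit point permits unbounded triggering). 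The delicacy here is that $K=\infty$ only forces \emph{some} limit point to be infeasible; I would close the gap either by invoking uniqueness of the limit point (when $\{V_k\}$ converges) or by reading the hypothesis $\tau > h(\overline{V})$ as applying to the relevant extracted limit point, and I would flag this as the spot requiring the convergence/locality structure.

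Finally, the monotonicity of $K$ in $\tau$ and $\rho_{max}$ is where the real work lies, and I expect it to be the main obstacle. The key tool I would establish is penalty monotonicity: if $V(\rho)$ minimizes $\mathcal{Q}(\cdot,\rho) = f + \rho h$, then summing the optimality inequalities of $V(\rho_1)$ and $V(\rho_2)$ yields $(\rho_1 - \rho_2)\big(h(V(\rho_1)) - h(V(\rho_2))\big) \le 0$, so $h(V(\rho))$ is non-increasing in $\rho$. This makes the threshold $\rho^*(\tau) := \inf\{\rho : h(V(\rho)) \le \tau\}$ well defined and non-increasing in $\tau$; since $K$ is essentially the number of doublings $\lceil \log_{\theta_\rho}(\rho^*(\tau)/\rho_0)\rceil$ needed to cross that threshold, clamped according to whether $\rho^*(\tau) \lessgtr \rho_{max}$, both the dependence on $\tau$ and the dependence on the cap $\rho_{max}$ reduce to this threshold analysis. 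The genuine obstacle is twofold: the iterates $V_{k+1}$ are only approximate \emph{local} minimizers warm-started at $V_k$ (Assumption \ref{assumption:formal}), not the global minimizers for which penalty monotonicity is immediate, so I would have to transfer the monotonicity to the realized trajectory up to the $g(\epsilon_{min})$-slack already controlled in Lemma \ref{lem:similar}; and varying $\tau$ or $\rho_{max}$ alters the \emph{entire} trajectory $\{V_k\}$, so the comparison is across different runs rather than along a fixed sequence, which is what makes a fully rigorous monotonicity statement the hard part.
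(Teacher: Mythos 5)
Your decomposition tracks the paper's proof closely in its first two parts: the paper likewise defines $K(\tau) := \left|\{k \in \mathbb{N} : h(V_k) > \tau\}\right|$ as the number of $\rho$-updates, reads Eq.~\eqref{eq:rho} off the update rule, and settles the dichotomy by continuity of $h$, concluding $K<\infty$ when $\tau > h(\overline{V})$ from ``$h(V_k)\to h(\overline{V})$'' and leaving only the vacuous bound $K\le\infty$ otherwise. You are in fact more careful than the paper here: the paper tacitly treats $h(V_k)$ as converging to $h(\overline{V})$ along the whole sequence, whereas you correctly observe that $\overline{V}$ is only \emph{a} limit point, so infinitely many triggers only force \emph{some} limit point to violate $\tau$; the paper does not address this, and your proposed fixes (uniqueness of the limit point, or reading the hypothesis as attached to the extracted limit point) are exactly what would be needed. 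The genuine divergence is in the monotonicity. The paper's entire argument there is the single sentence ``by construction, $K(\tau)$ is decreasing in $\tau$'': it fixes the realized trajectory $\{V_k\}$ and notes that the trigger set $\{k : h(V_k) > \tau\}$ shrinks as $\tau$ grows, then asserts the consequences for $\rho$ (the claimed dependence of $K$ on $\rho_{max}$ is not argued at all beyond this). Your penalty-monotonicity programme --- comparing $h(V(\rho))$ across penalty levels via summed optimality inequalities and locating a threshold $\rho^*(\tau)$ --- is aimed at the strictly harder cross-run statement, since changing $\tau$ or $\rho_{max}$ changes the whole trajectory; you correctly flag this as the obstacle but do not close it. Under the paper's fixed-trajectory reading your ``hard part'' dissolves into the same one-line observation, so your proposal is not missing anything the paper supplies; under the cross-run reading, what you have located is a gap in the paper's own proof rather than in yours.
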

\begin{proof}
Define $K(\tau) := \left| \{ k \in \mathbb{N}: h(V_k) > \tau \} \right|$, which is the number of $\rho$-updates in the algorithm.
From the algorithm, Eq. \eqref{eq:rho} is directly implied.
Now it remains to characterize the function $K(\tau)$.

Let us first consider the case of $\tau > h(\overline{V})$.
Since $h(V_k) \rightarrow h(\overline{V})$ and $h$ is continuous, it must be that $K(\tau) < \infty$ by the algorithm, regardless of the other hyperparameters.
Now suppose that $\tau \leq h(\overline{V})$.
Then there are two cases to consider:
\begin{enumerate}
	\item If $h(V_M) \leq \tau$ for some $M < \infty$, then from the algorithm it is clear that $K(\tau) \leq M$.
	
	\item If $h(V_k) > \tau$ for all $k$, then again, from the algorithm it is clear that $K(\tau) = \infty$.
\end{enumerate}

By construction, $K(\tau)$ is decreasing in $\tau$, and thus, $\rho$ is decreasing in $\tau$ and increasing in $\rho_{max}$.
\end{proof}

\begin{corollary}
If no $k$ satisfies $h(V_k) \leq \tau$, then $\rho = \infty$ if and only if  $\rho_{max} = \infty, \tau = 0$.
\end{corollary}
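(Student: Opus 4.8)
The plan is to extract the easy equivalences directly from Lemma~\ref{lem:rho} and to isolate the one substantive point, namely that an unbounded penalty drives the limiting constraint value to $0$.

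First I would record the reduction shared by both directions. The hypothesis states $h(V_k) > \tau$ for \emph{every} $k$, so in the notation of Lemma~\ref{lem:rho} we have $K(\tau) = \lvert\{k : h(V_k) > \tau\}\rvert = \infty$. Since $\theta_\rho > 1$, Eq.~\eqref{eq:rho} gives $\rho = \min(\theta_\rho^{K}\rho_0, \rho_{max}) = \min(\infty, \rho_{max}) = \rho_{max}$. Consequently $\rho = \infty$ is equivalent to $\rho_{max} = \infty$, and the backward implication is immediate: if $\rho_{max} = \infty$ and $\tau = 0$, the hypothesis reads $h(V_k) > 0$ for all $k$, so $K(0) = \infty$ and $\rho = \min(\infty, \infty) = \infty$. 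The entire remaining content is the claim that, under the hypothesis, $\rho_{max} = \infty$ forces $\tau = 0$.

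For this I would argue by contradiction: assume $\rho = \infty$ (hence $\rho_{max} = \infty$ and, from the reduction, $\rho_k = \theta_\rho^k \rho_0 \to \infty$) but $\tau > 0$. By the formal locality assumption (Assumption~\ref{assumption:formal}), each $W_k$ is a local minimizer of $\mathcal{Q}(\cdot, \rho_k) = f + \rho_k h$, so the first-order condition $\mathrm{grad}\, f(W_k) + \rho_k \,\mathrm{grad}\, h(W_k) = 0$ holds; together with $\lVert \mathrm{grad}\, f \rVert \le 2\lVert \Sigma \rVert$ (from the $2\lVert\Sigma\rVert$-Lipschitzness of $f$) this yields $\lVert \mathrm{grad}\, h(W_k) \rVert \le 2\lVert\Sigma\rVert / \rho_k \to 0$, so the limit point $\overline W$ of $\{W_k\}$ is a critical point of $h$ and, as in the penalty-limit analysis behind Theorem~\ref{thm:mbfpca}(A), a local minimizer of the auxiliary problem Eq.~\eqref{eq:auxiliary}. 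Feasibility (Assumption~\ref{assumption:feasible}) supplies some $V^\ast \in St(p,d)$ with $h(V^\ast) = 0$; comparing the penalty objective at $W_k$ against $V^\ast$ gives $\rho_k h(W_k) \le f(V^\ast) - f(W_k) \le C$ for a constant $C$ bounding $f$ on the compact manifold $St(p,d)$, whence $h(W_k) \le C/\rho_k \to 0$ and $h(\overline W) = 0$. Finally, Lemma~\ref{lem:similar} gives $\lVert \overline V - \overline W \rVert \le g(\epsilon_{min})$ with $g(0) = 0$, so in the ideal regime $\epsilon_{min} = 0$ that accompanies $\rho = \infty$ we obtain $h(\overline V) = 0$; since $h$ is continuous and $h(V_k) > \tau$ for all $k$, passing to the limit yields $\tau \le h(\overline V) = 0$, contradicting $\tau > 0$. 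Hence $\tau = 0$.

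The main obstacle is the comparison $\rho_k h(W_k) \le f(V^\ast) - f(W_k)$: it is literally valid only when $W_k$ \emph{globally} minimizes $\mathcal{Q}(\cdot, \rho_k)$, whereas Assumption~\ref{assumption:formal} guarantees only local minimality. This is exactly where the feasibility Assumption~\ref{assumption:feasible} carries the weight---it must be invoked to ensure that the local minimizer of $h$ selected in the $\rho_k \to \infty$ limit attains the global value $0$ rather than a spurious positive critical value, so that the contradiction against $h(\overline V) \ge \tau > 0$ actually closes. I would make this dependence explicit, since it is the only non-bookkeeping step in the corollary.
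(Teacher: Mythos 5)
Your first paragraph is exactly the paper's (implicit) argument: the corollary is stated as an immediate consequence of Lemma~\ref{lem:rho}, the point being that the hypothesis forces $K(\tau)=\lvert\{k: h(V_k)>\tau\}\rvert=\infty$, so Eq.~\eqref{eq:rho} collapses to $\rho=\min(\theta_\rho^{\infty}\rho_0,\rho_{max})=\rho_{max}$ and hence $\rho=\infty$ exactly when $\rho_{max}=\infty$ (with $\tau=0$ entering only as the companion ``ideal'' setting under which the hypothesis is the generic case, since $h\geq 0$). You correctly sensed that the literal ``only if $\tau=0$'' half does not follow from the lemma alone, and that is a fair criticism of the statement; but the repair you build for it does not close the gap, for two concrete reasons.

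First, the comparison $\rho_k h(W_k)\leq f(V^\ast)-f(W_k)$ needs $V^\ast\in B_r(W_k)$, and Assumption~\ref{assumption:feasible} only guarantees that \emph{some} feasible point exists on $St(p,d)$, not one inside the local neighborhood where Assumption~\ref{assumption:formal} gives $\mathcal{Q}(W_k,\rho_k)\leq\mathcal{Q}(W,\rho_k)$. You flag this yourself, but then assert that feasibility ``carries the weight''---it cannot, because nothing rules out the sequence $\{W_k\}$ converging to a strictly positive local minimum of $h$ (this is precisely why Theorem~\ref{thm:mbfpca}(A) only concludes \emph{local} minimality of $h$ at $\overline{V}$ and treats fairness of $\overline{V}$ as an additional hypothesis in part (B)). Second, the step ``in the ideal regime $\epsilon_{min}=0$ that accompanies $\rho=\infty$'' is a non sequitur: $\epsilon_{min}$ is an independent hyperparameter, and the corollary's hypotheses say nothing about it, so $g(\epsilon_{min})$ need not vanish and the passage from $h(\overline{W})=0$ to $h(\overline{V})=0$ fails. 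The safe conclusion available from the lemma is the one in your first paragraph; the additional implication $\rho=\infty\Rightarrow\tau=0$ would require assuming $\overline{V}$ is fair (as in Theorem~\ref{thm:mbfpca}(B)), at which point it follows from $h(V_k)>\tau$ for all $k$ and continuity that $\tau\leq h(\overline{V})=0$.
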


Denote $\mathcal{C} \triangleq St(p, d) \cap h^{-1}(0)$ as the set of feasible points.
By compactness of $St(p, d)$ and continuity of $h$, we have that $\mathcal{C}$ is also compact.
By Assumption \ref{assumption:feasible}, $\mathcal{C}$ is non-empty.
Lastly, recall from Eq. \eqref{eq:r-tilde} $\tilde{r} = r - g(\epsilon_{min})$.
The last lemma asserts that after sufficiently many iterations have passed, the sequences $\{V_k\}$ and $\{W_k\}$ are always close to $\mathcal{C}$.
\begin{lemma}
\label{lem:exist}
If $\overline{V}$ is fair and $\epsilon_{min}$ is sufficiently small in the sense that $g(\epsilon_{min}) < r$, then there exists some $K > 0$ such that for all $k > K$, $B_{\tilde{r}}(V_k) \cap \mathcal{C} \not= \emptyset$ and $B_r(W_k) \cap \mathcal{C} \not= \emptyset$.
\end{lemma}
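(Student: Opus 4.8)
The plan is to use the limit point $\overline{V}$ itself as the common witness point lying in $\mathcal{C}$ for both balls; everything then reduces to convergence estimates and one application of the triangle inequality. First I would observe that since $\overline{V}$ is fair, Definition \ref{def:approx-fair} together with the non-negativity of $h$ forces $h(\overline{V}) = 0$, so $\overline{V} \in \mathcal{C}$. Throughout I work along the subsequence already fixed in the proof of Lemma \ref{lem:similar}, relabeled so that $V_k \to \overline{V}$ and $W_k \to \overline{W}$, where $\overline{W}$ is the limit point of $\{W_k\}$ supplied by that lemma, satisfying $\lVert \overline{V} - \overline{W} \rVert \le g(\epsilon_{min})$.

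For the first claim, the hypothesis $g(\epsilon_{min}) < r$ guarantees $\tilde{r} = r - g(\epsilon_{min}) > 0$. By the convergence $V_k \to \overline{V}$, there exists $K_1$ such that $\lVert V_k - \overline{V} \rVert < \tilde{r}$ for all $k > K_1$; hence $\overline{V} \in B_{\tilde{r}}(V_k)$, and since $\overline{V} \in \mathcal{C}$ this already yields $B_{\tilde{r}}(V_k) \cap \mathcal{C} \ne \emptyset$.

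For the second claim I would again take $\overline{V}$ as the witness, now bounding its distance to $W_k$ by the triangle inequality:
\[
\lVert W_k - \overline{V} \rVert \le \lVert W_k - \overline{W} \rVert + \lVert \overline{W} - \overline{V} \rVert \le \lVert W_k - \overline{W} \rVert + g(\epsilon_{min}).
\]
Using $W_k \to \overline{W}$, I choose $K_2$ so that $\lVert W_k - \overline{W} \rVert < \tilde{r}$ for all $k > K_2$; then the right-hand side is strictly less than $\tilde{r} + g(\epsilon_{min}) = r$, which gives $\overline{V} \in B_r(W_k) \cap \mathcal{C}$. Setting $K = \max(K_1, K_2)$ establishes both non-emptiness statements simultaneously.

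The only genuine subtlety — and the place where the definition $\tilde{r} = r - g(\epsilon_{min})$ earns its keep — is this second claim: the slack $g(\epsilon_{min})$ separating the two limit points $\overline{V}$ and $\overline{W}$ must be absorbed into the radius, which is exactly why the strictly smaller radius $\tilde{r}$ is used for the $V_k$-ball while the full radius $r$ is retained for the $W_k$-ball. Beyond correctly tracking this radius bookkeeping, the argument is routine convergence along the chosen subsequence, so I do not anticipate any real obstacle.
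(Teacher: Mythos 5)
Your proposal is correct and is essentially the paper's argument: both use the fair limit point $\overline{V} \in \mathcal{C}$ as the witness, the convergence $V_k \to \overline{V}$ for the $\tilde{r}$-ball, and the bound $\lVert \overline{V} - \overline{W} \rVert \le g(\epsilon_{min}) < r$ from Lemma \ref{lem:similar} for the $r$-ball around $W_k$. The only difference is presentational --- the paper phrases it as a (somewhat loosely stated) contradiction while you argue directly with explicit $K_1, K_2$, which is if anything the cleaner rendering of the same idea.
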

\begin{proof}
First, suppose that for all $k$, $B_{\tilde{r}}(V_k) \cap \mathcal{C} = \emptyset$.
Taking the limit of $k \rightarrow \infty$ on both sides, we have that $\overline{V} \in B_{\tilde{r}}(\overline{V}) \cap \mathcal{C} = \emptyset$ by the given assumption, a contradiction.
Thus there exists some $K_1$ such that for all $k > K_1$, $B_{\tilde{r}}(V_k) \cap \mathcal{C} \not= \emptyset$.

Now suppose that for all $k$, $B_r(W_k) \cap \mathcal{C} = \emptyset$.
Recall from Lemma \ref{lem:similar} that
\begin{equation*}
	\lVert \overline{W} - \overline{V} \rVert \leq g(\epsilon_{min}) < r
\end{equation*}

Again taking the limit of $k \rightarrow \infty$, we have that $\overline{V} \in \mathcal{C} \cap B_r(\overline{W}) = \emptyset$, a contradiction.
Thus there exists some $K_2$ such that for all $k > K_2$, $B_r(W_k) \cap \mathcal{C} \not= \emptyset$.

Set $K = \max(K_1, K_2)$, and we are done.
\end{proof}

\subsection{Proof of Theorem \ref{thm:mbfpca-another}}
(The proof is inspired from the optimality guarantee for the Euclidean quadratic penalty method; see Theorem 17.1 of \citeauthor{nocedal-wright}, \citeyear{nocedal-wright})
The proof is divided into 3 parts.

\textit{ {\textbf {Existence of limit point of $\{W_k\}$}}}\\
By Lemma \ref{lem:similar}, there exists some limit point $\overline{W}$ of $\{W_k\}$ such that $\lVert \overline{V} - \overline{W} \rVert \leq g(\epsilon_{min})$.
By considering appropriate subsequence, let $W_k \rightarrow \overline{W}$.

\textit{ {\textbf {$\overline{V}$ is approximate local minimizer of \eqref{eq:auxiliary}}}}\\
By Assumption \ref{assumption:formal}, we have that for all $W \in B_r(W_k) \cap St(p, d)$,
\begin{equation}
\mathcal{Q}(W_k, \rho_k) \leq \mathcal{Q}(W, \rho_k)
\end{equation}

i.e.
\begin{equation}
\label{eq:local}
f(W_k) + \rho_k h(W_k) \leq f(W) + \rho_k h(W).
\end{equation}

Rearranging gives
\begin{equation}
h(W_k) \leq h(W) + \frac{1}{\rho_k} \left( f(W) - f(W_k) \right)
\end{equation}

Taking the limit $k \rightarrow \infty$ gives
\begin{align*}
h(\overline{W}) &\leq h(W) + \frac{1}{\rho} \left( f(W) - f(\overline{W}) \right) \\
&\leq h(W) + \frac{2 \lVert \Sigma \rVert}{\rho} \lVert W - \overline{W} \rVert,
\end{align*}

for all $W \in B_r(\overline{W}) \cap St(p, d)$.

By Lemma \ref{lem:similar}, $B_{\tilde{r}}(\overline{V}) \cap St(p, d) \subset B_r(\overline{W}) \cap St(p, d)$.
Thus we have that for all $V \in B_{\tilde{r}}(\overline{V}) \cap St(p, d)$,
\begin{align*}
h(\overline{V}) &\leq h(\overline{W}) + |h(\overline{V}) - h(\overline{W})| \\
&= h(V) + \left( h(\overline{W}) - h(V) \right) + L_h \lVert \overline{V} - \overline{W} \rVert \\
&\leq h(V) + \frac{2 \lVert \Sigma \rVert}{\rho} \lVert V - \overline{W} \rVert + L_h \lVert \overline{V} - \overline{W} \rVert \\
&\leq h(V) + \frac{2 \lVert \Sigma \rVert}{\rho} \left( \lVert V - \overline{V} \rVert + \lVert \overline{V} - \overline{W} \rVert \right) + L_h \lVert \overline{V} - \overline{W} \rVert \\
&= h(V) + \frac{2 \lVert \Sigma \rVert}{\rho} \lVert V - \overline{V} \rVert + \left( L_h + \frac{2 \lVert \Sigma \rVert}{\rho} \right) g(\epsilon_{min}).
\end{align*}

Define $\beta(\rho_{max}, \tau) := \frac{2 \lVert \Sigma \rVert}{\rho(\rho_{max}, \tau)}$.
By Lemma \ref{lem:rho}, $\beta$ indeed satisfies $\lim_{\rho_{max} \rightarrow \infty} \rho(\rho_{max}, 0) \geq 0$ and $\lim_{\rho_{max} \rightarrow \infty} \rho(\rho_{max}, \tau) > 0$ for $\tau > 0$.
Also, by the same lemma, $\beta$ is indeed decreasing in $\tau$ and increasing in $\rho_{max}$.
(Note how this particular part of the proof does {\it not} rely on the assumption that $\overline{V}$ is feasible!)

\textit{ {\textbf {If $\overline{V}$ is fair, then $\overline{V}$ is an approximate local minimizer of Eq. \eqref{eq:MbF-PCA}}}}\\
By Lemma \ref{lem:exist}, we may only consider $k$'s such that $B_{\tilde{r}}(V_k) \cap \mathcal{C} \not= \emptyset$ and $B_r(W_k) \cap \mathcal{C} \not= \emptyset$.
By Assumption \ref{assumption:formal}, we have that for all $W \in B_r(W_k) \cap \mathcal{C}$,
\begin{equation}
\mathcal{Q}(W_k, \rho_k) \leq \mathcal{Q}(W, \rho_k)
\end{equation}

i.e.
\begin{equation}
\label{eq:local}
f(W_k) \leq f(W_k) + \rho_k h(W_k) \leq f(W) + \rho_k h(W) = f(W),
\end{equation}
where the last equality follows from $W \in \mathcal{C}$.

Thus we have that
\begin{align*}
f(\overline{V}) &\leq f(\overline{W}) + |f(\overline{V}) - f(\overline{W})| \\
&\leq \lim_{k \rightarrow \infty} f(W_k) + 2\lVert \Sigma \rVert \lVert \overline{V} - \overline{W} \rVert \\
&\leq f(W) + 2\lVert \Sigma \rVert g(\epsilon_{min}),
\end{align*}

for all $W \in B_r(W_k) \cap \mathcal{C}$.
Specifically, since $B_{\tilde{r}}(\overline{V}) \cap St(p, d) \subset B_r(\overline{W}) \cap St(p, d)$ by Lemma \ref{lem:similar}, we have that

\begin{equation*}
f(\overline{V}) \leq f(V) + 2\lVert \Sigma \rVert g(\epsilon_{min}),
\end{equation*}

for all $V \in B_{\tilde{r}}(\overline{V}) \cap \mathcal{C}$.

\subsection{Proof of Theorem \ref{thm:mbfpca}}
Set $\epsilon_{min} = 0$, $\tau = 0$, and $\rho_{max} = \infty$ in the above proof.
(Note that $g(0) = 0$ implies that $\overline{V} = \overline{W}$)
Just as in proof of Lemma \ref{lem:rho}, if there exists some $M < \infty$ such that $h(V_M) = 0$, then from the algorithm it is clear that $\overline{V}$ is fair and we are done.
If not, then we must have that $\rho(\infty, 0) = \infty$ i.e. $\beta(\infty, 0) = 0$ i.e. $\overline{V}$ is a local minimizer of Eq. \ref{eq:auxiliary}.

\section{Closed form of $\grad_V h(V)$}
\label{sec:closed-form}
In this section, we derive the closed form of the constraint gradient.

First, we recall the lemma that provides a closed form of $h(V) = \MMD^2(\cdot, \cdot)$:
\testlem*

For simplicity, we use the followin abbreviations: $\sum_{i,j=1}^m := \sum_{i=1}^m \sum_{j=1}^m$ and $\sum_{i,j=1}^{m,n} := \sum_{i=1}^m \sum_{j=1}^n$.

After squaring both sides, let us refer to the three terms as $h_1, h_2$, and $h_3$ i.e. $h(V) = \widehat{\Delta}^2 = h_1(V) + h_2(V) - 2h_3(V)$ where
\begin{equation*}
h_1(V) = \frac{1}{m^2} \sum_{i,j = 1}^m k(VX_i, VX_j),
\end{equation*}
\begin{equation*}
h_2(V) = \frac{1}{n^2} \sum_{i,j = 1}^n k(VY_i, VY_j),
\end{equation*}
\begin{equation*}
h_3(V) = \frac{1}{mn} \sum_{i, j = 1}^{m, n} k(VX_i, VY_j).
\end{equation*}

\subsection{Closed form of $\grad_V h_1(V)$}
Denote 
\begin{eqnarray*}
g_{ij}(V) &=& \tr((X_i - X_j)^\intercal V V^\intercal (X_i - X_j)) 
\\
K_{ij}(V) &=& \exp(-g_{ij}(V) / (2 \sigma^2))
\\
K_X(V) &=& \{K_{ij}(V)\} \in \mathbb{R}^{m \times m},
\end{eqnarray*} 
and $X$ be the data matrix whose $i$-th row is $X_i^\intercal$.
By chain rule, we have that
\begin{align*}
\grad_V h_1(V) &= -\frac{1}{2 m^2 \sigma^2} \sum_{i, j = 1}^m K_{ij}(V) \grad_V g_{ij}(V) \\
&= -\frac{1}{m^2 \sigma^2} \sum_{i, j = 1}^m K_{ij}(V)  (X_i - X_j) (X_i - X_j)^\intercal V \\
\end{align*}

Now denote $H_i(V) = \sum_{j = 1}^m K_{ij}(V)$ and $H_X(V) = \mathsf{diag}(H_1(V), H_2(V), \cdots, H_m(V))$, where `$\mathsf{diag}$' is the block-diagonal operator.
We have that
\begin{align*}
\sum_{i, j = 1}^m K_{ij}(V) X_i X_i^\intercal &= \sum_{i=1}^m \left( \sum_{j=1}^m K_{ij}(V) \right) X_i X_i^\intercal \\
&= X^\intercal H_X(V) X,
\end{align*}

and
\begin{equation*}
\sum_{i, j = 1}^m K_{ij}(V) X_i X_j^\intercal = X^\intercal K_X(V) X.
\end{equation*}

Since $K_X(V)$ is symmetric i.e. $K_{ij}(V) = K_{ji}(V)$,
\begin{equation*}
\sum_{i, j = 1}^m K_{ij}(V) X_j X_j^\intercal = 
\sum_{i, j = 1}^m K_{ji}(V) X_j X_j^\intercal =
X^\intercal H_X(V) X,
\end{equation*}

and
\begin{equation*}
\sum_{i, j = 1}^m K_{ij}(V) X_j X_i^\intercal = 
\sum_{i, j = 1}^m K_{ji}(V) X_j X_i^\intercal =
X^\intercal K_X(V) X.
\end{equation*}

Thus we have that
\begin{equation*}
\grad_V h_1(V) = -\frac{2}{m^2 \sigma^2} X^\intercal (H_X(V) - K_X(V)) X V.
\end{equation*}

\subsection{Closed form of $\grad_V h_2(V)$}
The computation is almost exactly the same as above, and thus we only show the final result:
\begin{align*}
\grad_V h_2(V) = -\frac{2}{n^2 \sigma^2} Y^\intercal (H_Y(V) - K_Y(V)) Y V,
\end{align*}
where $Y$, $H_Y(V)$, and $K_Y(V)$ are defined analogously.

\subsection{Closed form of $\grad_V h_3(V)$}
Let us overload the notation a bit and now denote
\begin{eqnarray*}
g_{ij}(V) &=& \tr((X_i - Y_j)^\intercal V V^\intercal (X_i - Y_j)),
\\
K_{ij}(V) &=& \exp(-g_{ij}(V) / (2 \sigma^2)),
\\
K_{XY}(V) &=& \{K_{ij}(V)\} \in \mathbb{R}^{m \times n}.
\end{eqnarray*} 

Again by chain rule, we have that
\begin{align*}
\grad_V h_3(V) &= -\frac{1}{2 mn \sigma^2} \sum_{i, j = 1}^{m, n} K_{ij}(V) \grad_V g_{ij}(V) \\
&= -\frac{1}{mm \sigma^2} \sum_{i, j = 1}^{m, n} K_{ij}(V)  (X_i - Y_j) (X_i - Y_j)^\intercal V. 
\end{align*}

Now, again with a slight notation overload, denote 
\begin{eqnarray*}
H_i(V) &=& \sum_{j = 1}^n K_{ij}(V), \\
H_{XY}(V) &=& \mathrm{diag}(H_1(V), H_2(V), \cdots, H_m(V)), \\
\tilde{H}_j(V) &=& \sum_{i = 1}^m K_{ij}(V), \\
\tilde{H}_{XY}(V) &=& \mathrm{diag}(\tilde{H}_1(V), \tilde{H}_2(V), \cdots, \tilde{H}_n(V)).
\end{eqnarray*}

We have that
\begin{align*}
\sum_{i, j = 1}^{m, n} K_{ij}(V) X_i X_i^\intercal &= \sum_{i=1}^m \left( \sum_{j=1}^n K_{ij}(V) \right) X_i X_i^\intercal \\
&= X^\intercal H_{XY}(V) X,
\end{align*}

and
\begin{align*}
\sum_{i, j = 1}^{m, n} K_{ij}(V) Y_j Y_j^\intercal &= \sum_{j=1}^n \left( \sum_{i=1}^m K_{ij}(V) \right) Y_j Y_j^\intercal \\
&= Y^\intercal \tilde{H}_{XY}(V) Y.
\end{align*}

Next, we have that
\begin{equation*}
\sum_{i, j = 1}^{m, n} K_{ij}(V) X_i Y_j^\intercal = X^\intercal K_{XY}(V) Y,
\end{equation*}

and
\begin{equation*}
\sum_{i, j = 1}^{m, n} K_{ij}(V) Y_j X_i^\intercal = Y^\intercal K_{XY}(V)^\intercal X.
\end{equation*}

Thus we have that
\begin{align*}
\grad_V h_3(V) = &-\frac{1}{mn \sigma^2} \Big( X^\intercal H_{XY}(V) X + Y^\intercal \tilde{H}_{XY}(V) Y \\
&- X^\intercal K_{XY}(V) Y - Y^\intercal K_{XY}(V)^\intercal X \Big) V.
\end{align*}

% \section{Extra discussions on IPMs}
% For $b \geq 1$, let $\mathbb{M}_b$ be the set of all real-valued, bounded, measurable functions on $\mathbb{R}^b$ .

% \begin{definition}[\citeauthor{IPM}, \citeyear{IPM}]
% 	{\bf Integral probability metric (IPM) (w.r.t. some $\mathcal{F} \subset \mathbb{M}_d$)} is a map $\gamma_\mathcal{F} : \mathcal{P}_d \times \mathcal{P}_d \rightarrow \mathbb{R}$, defined as
% 	\begin{equation}
	% 		\gamma_\mathcal{F}(\mu, \nu) := \sup_{f \in \mathcal{F}} \left| \int_\mathcal{X} f \ d(\mu - \nu) \right|
	% 	\end{equation}
% \end{definition}

% Particular choices of $\mathcal{F}$ lead to well-known statistical distances:
% \begin{enumerate}
% 	\item $\mathcal{F}_K = \{ \mathbbm{1}_{(-\infty, t]} \ : \ t \in \mathbb{R}^d \}$: {\it Kolmogorov distance}

% 	\item $\mathcal{F}_{TV} = \{ f \ : \ \lVert f \rVert_\infty \leq 1 \}$: {\it Total variation distance ($TV$)}	

% 	\item $\mathcal{F}_W = \{ f \ : \ \lVert f \rVert_L \leq 1 \}$: {\it Wasserstein $1$-distance ($W_1$)}\footnote{Here, $\lVert f \rVert_L$ is its minimal Lipschitz constant, and such result follows from the celebrated Kantorovich-Rubinstein duality theorem \cite{Kantorovich-Rubinstein}}

% 	\item $\mathcal{F}_\mathcal{H} =  \{ f \ : \ \lVert f \rVert_\mathcal{H} \leq 1 \}$ where $\mathcal{H}$ is a RKHS: {\it Maximum mean discrepancy ($\MMD$; \citeauthor{mmd1}, \citeyear{mmd1})}
% \end{enumerate}

% Using IPM, our definition of fairness (Definition \ref{def:fairness}) can be generalized such that $\MMD$ is replaced with some $\gamma_\mathcal{F}$.

\section{Description of 2nd Part of Experiment 1}
\label{sec:exp1-description}
With $p_0 = 1000$ and $n = 250$, we first constructed two {\it different} $p_0$-variate Gaussian distributions $\mathcal{N}_{p_0}(\mu_{p_0}^{(s)}, \Sigma_{p_0}^{(s)})$ and sampled $n$ points for each $s \in \{0, 1\}$.
Then for each considered $p$, we projected the samples by multiplying the constructed data matrices with a Gaussian random matrix of size $p_0 \times p$.
This creates $18$ pairs of $p$-dimensional distributions.
This allowed for us model the situation in which two protected groups have different {\it non-Gaussian} distributions.

We now describe how we've set $\mu_{p_0}^{(s)}$ and $\Sigma_{p_0}^{(s)}$ for each $s$.
The mean difference is set as $f_{p_0} = \mu_{p_0}^{(1)} - \mu_{p_0}^{(0)}$ where $2 * f^{raw}_{p_0} / \lVert f^{raw}_{p_0} \rVert$ and $f^{raw}_{p_0} = \bf{1}$.
% \begin{equation*}
% 	f^{raw}_{p_0} = [\underbrace{1 \ 1 \ \cdots \ 1}_{3p/5} \underbrace{0 \ 0 \ \cdots \ 0}_{2p/5}]^\intercal.
% \end{equation*}
Then $\mu_{p_0}^{(0)}$ is set as $\bf{0}$ and $\mu_{p_0}^{(1)}$ is set as $\mu_{p_0}^{(0)} + f_{p_0}$.

The covariance difference is set as $Q^{raw}_p /  \lVert Q^{raw}_p \rVert_2$ where
\begin{equation*}
Q^{raw}_p = A_p^{(1)} - A_p^{(0)},
\end{equation*}
% Here
\begin{align*}
\Sigma_p^{(0)} = \mathsf{diag} \big(AR_{p/5}(0.99), AR_{p/5}(0.98), AR_{p/5}(0.97), \\ AR_{p/5}(0.98), AR_{p/5}(0.95) \big),
\end{align*}
and
\begin{align*}
\Sigma_p^{(1)} = \mathsf{diag} \big(AR_{p/5}(0.99), AR_{p/5}(0.98), AR_{p/5}(0.97), \\ AR_{p/5}(0.98), AR_{p/5}(0.99) \big).
\end{align*}
Here, `$\mathsf{diag}$' is the block-diagonal operator, and
\begin{equation}
\label{eq:ar1}
\left(AR_{p/5}(r)\right)_{ij} = \frac{r^{|i - j|}}{1 - r^2},
\ \
i, j \in \{ 1, \ldots, p/5\}
\end{equation}
is the covariance matrix of a $p/5$-variate Gaussian AR(1) process \cite{Shumway-Stoffer}.
The $r \in (0, 1)$ is a parameter that controls the strength of correlation.
Finally, $\Sigma_p^{(0)}$ is set as $A_p^{(s)} / \lVert Q_p \rVert$.
% Note that it consists of five $p/5 \times p/5$ blocks, which is why we've set the target dimension to $d = 5$.

% Note that the first $3p/5$ features (first $3$ blocks) are {\it unfair} in the sense that their means and covariance blocks are different between the two protected groups, while the last $2p/5$ features (last $2$ blocks) are {\it fair} in the sense that their means and covariance blocks are the same.
Due to the scaling, $\lVert f_p \rVert = 2$ and $\lVert Q_p \rVert = 1$ for all $p$.
This is to ensure that $\lVert f_p \rVert$ and $\Vert Q_p \rVert$ do not explode as $p$ increases, allowing for us to see the effect of high dimensions more clearly under the ``same" fairness setting.

\section{Description of Experiment 2}
\label{sec:exp2-description}
Here, we describe the full pre-processing steps for the three UCI datasets considered in this paper.
All pre-processing steps were done using AI Fairness 360 (AIF360; \citeauthor{aif360}, \citeyear{aif360}) built-in functionalities.
In our Github repository\footnote{\url{https://github.com/nick-jhlee/fair-manifold-pca}}.
, we include the links for downloading the raw datasets, along with the pre-processing codes.

\subsection{COMPAS dataset}
The raw data was randomly reduced to 40\%, resulting in $2468$ samples.
This was to avoid computational burden in computing the $\MMD^2$ metric, as it scales quadratically in number of samples.
'Race' was set as the protected attribute, and the task is to classify the bank account holders into credit class {\it Good} or {\it Bad}.
The features `sex' and `c\_charge\_desc' were dropped, resulting in total of $11$ features.

\subsection{German credit dataset}
The total number of samples was $1000$.
`Age' was set as the protected attribute as done in \cite{KamiranC11}, and the task is to classify whether the individual bank account holder has good or bad credit class.
The features `sex` and `personal\_status` were dropped, resulting in total of $57$ features.

\subsection{Adult income dataset}
The raw data was randomly reduced to 5\%, resulting in $2261$ samples; this was due to the same reason as the COMPAS dataset.
`Sex' was set as the protected attribute as done in \cite{Kamishima11}, and the task is to indicate whether the individual's income is larger than 50K dollars.
The features `fnlwgt` and `race` were dropped, resulting in total of $99$ features.

\onecolumn
\section{Full results for EDA of considered UCI datasets}
\label{sec:eda-full}
We show two things:
\begin{itemize}
    \item Boxplots of communality of the protected attribute of each considered UCI dataset, for both $d = 2$ and $d = 10$. (Figures \ref{fig:compas-box}, \ref{fig:german-box}, \ref{fig:adult-box})
    
    \item PC ($d = 2$) biplot of each considered UCI dataset, for all considered algorithms. (Figures \ref{fig:compas-biplot-pca}, \ref{fig:compas-biplot-fpca}, \ref{fig:compas-biplot-mbfpca}, \ref{fig:german-biplot-pca}, \ref{fig:german-biplot-fpca}, \ref{fig:german-biplot-mbfpca}, \ref{fig:adult-biplot-pca}, \ref{fig:adult-biplot-fpca}, \ref{fig:adult-biplot-mbfpca})
\end{itemize}

Particularly for biplots, one train-test split was chosen, and based on that, only the top $10$ features with the highest communalities (+ the sensitive attribute, if it is not included) were chosen for plots.

% \subsection{COMPAS dataset}
% See Figure \ref{} for the biplots of PCA, \textsc{FPCA}, \textsc{MbF-PCA}, and Figure \ref{} for the boxplot comparing the communality of `race`, the protected attribute.

\begin{figure*}[!h]
	\begin{center}
		\begin{subfigure}[t]{0.49\linewidth}
				\includegraphics[width=\linewidth]{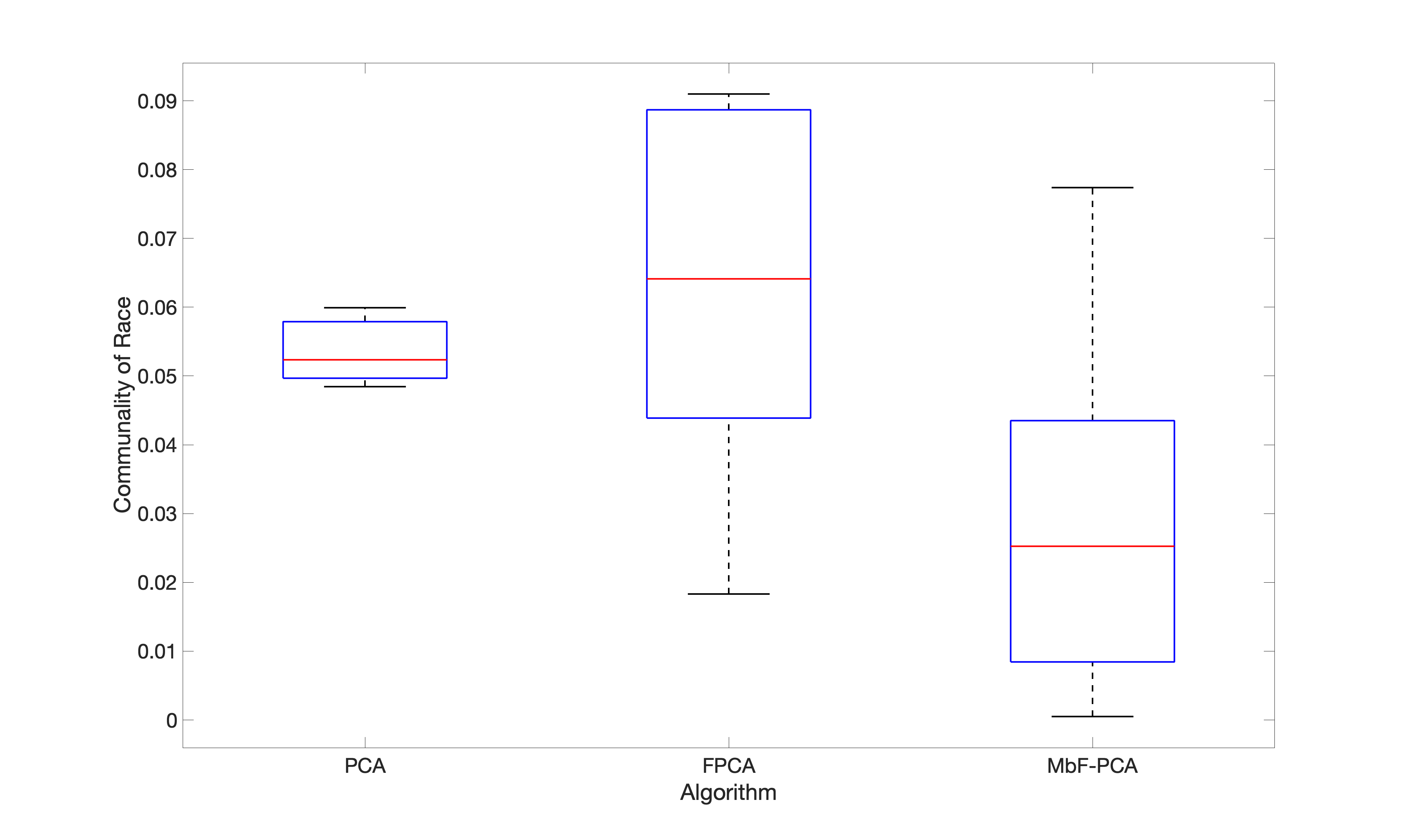}
				\caption{\label{fig:compas-2-box} $d = 2$}
			\end{subfigure}\hfill
		\begin{subfigure}[t]{0.49\linewidth}
				\includegraphics[width=\linewidth]{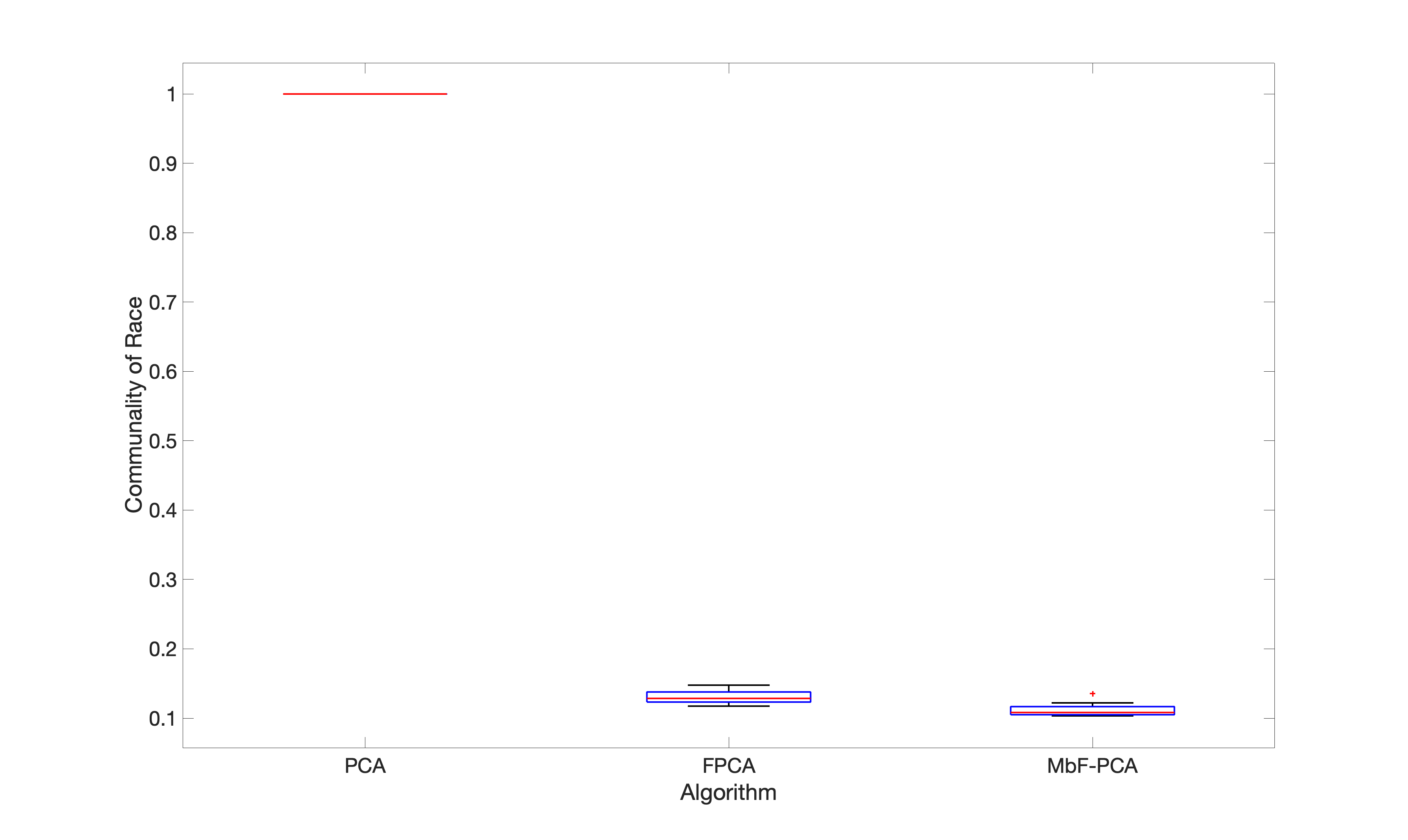}
				\caption{\label{fig:compas-10-box} $d = 10$}
			\end{subfigure}
		\caption{\label{fig:compas-box} Boxplots of communalities of race for COMPAS dataset}
	\end{center}
\end{figure*}
\begin{figure*}[!h]
    \centering
    \includegraphics[width=0.9\linewidth]{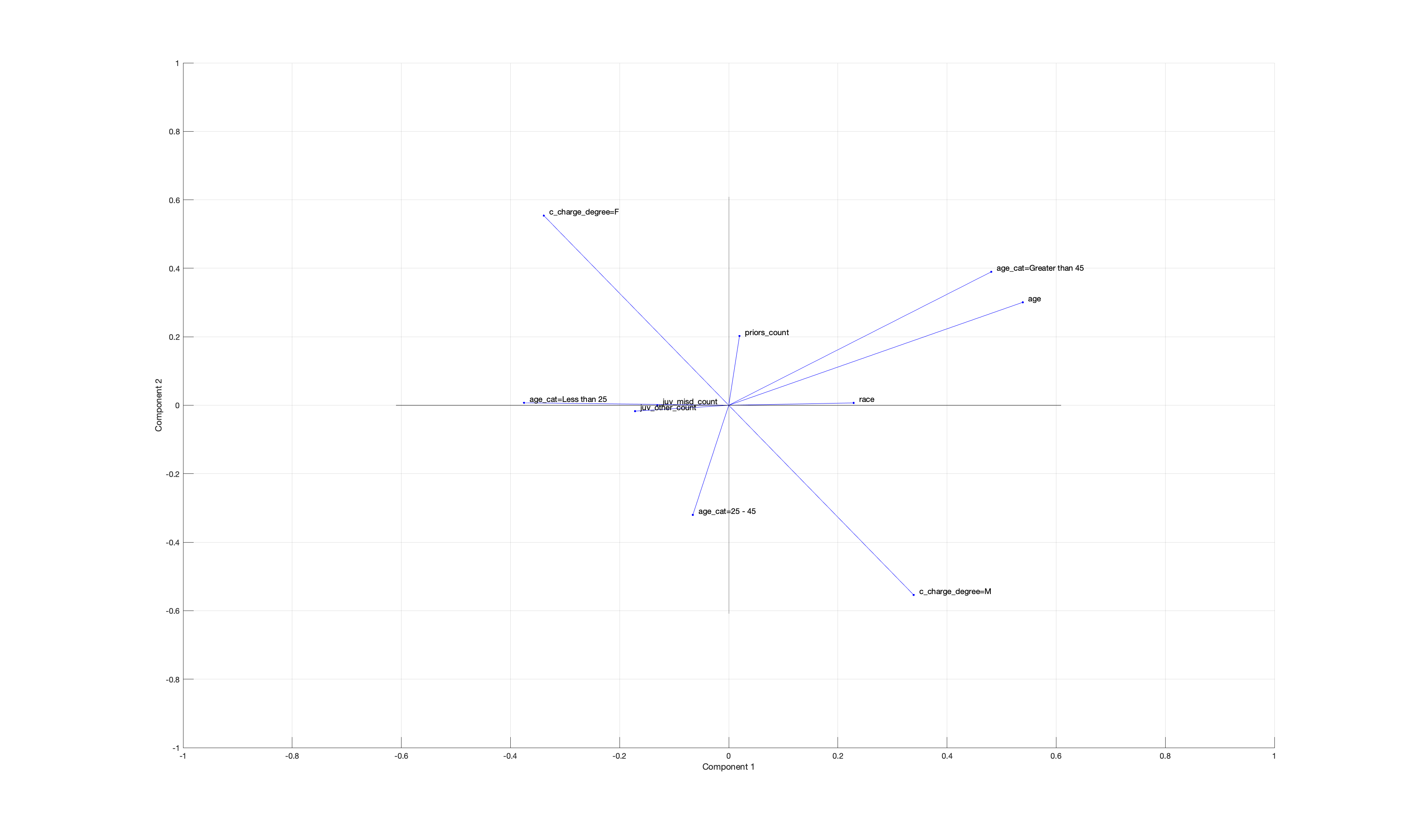}
    \caption{PC biplot of PCA for COMPAS dataset}
    \label{fig:compas-biplot-pca}
\end{figure*}
\begin{figure*}[!h]
    \centering
    \includegraphics[width=0.9\linewidth]{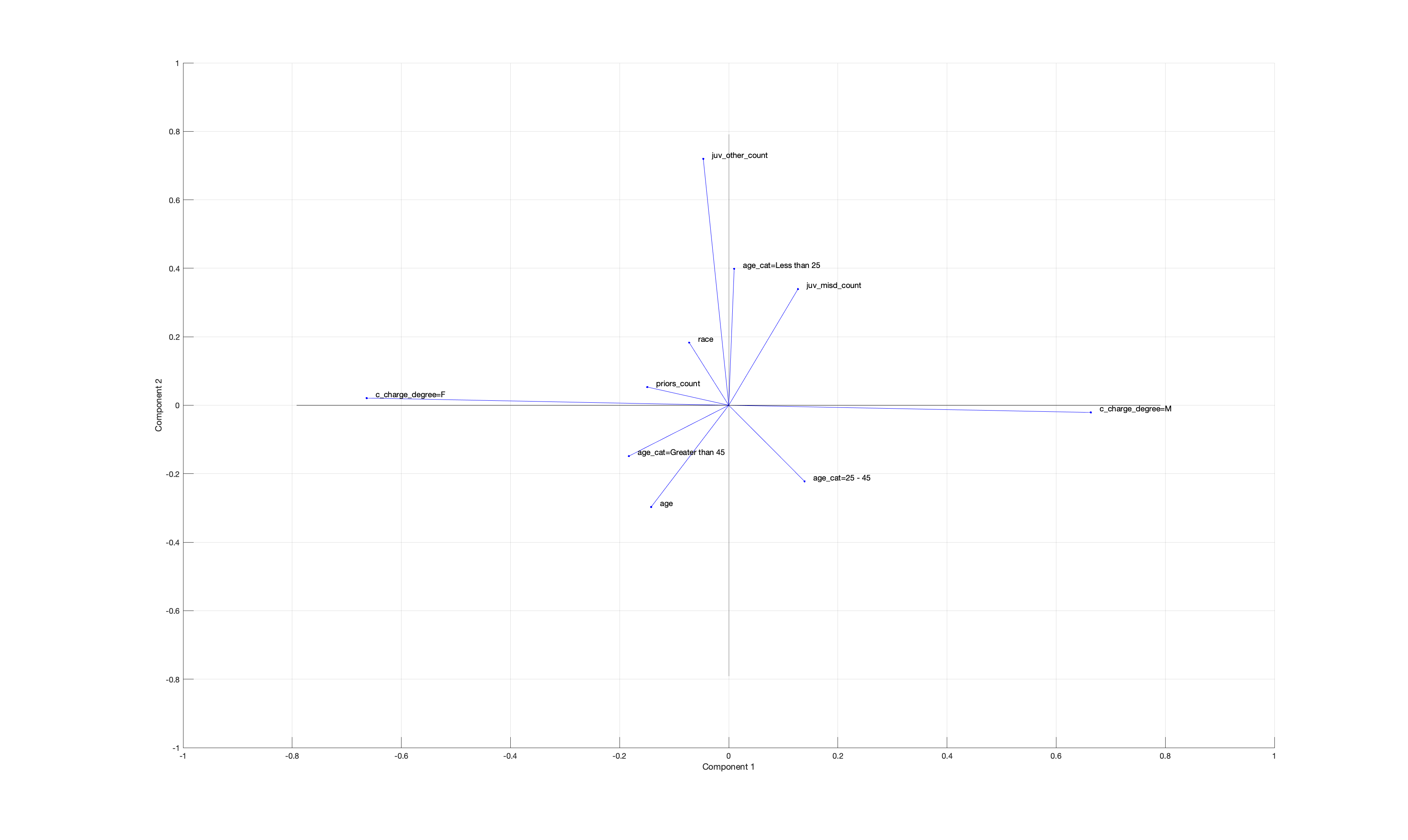}
    \caption{PC biplot of \textsc{FPCA} for COMPAS dataset}
    \label{fig:compas-biplot-fpca}
\end{figure*}
\begin{figure*}[!h]
    \centering
    \includegraphics[width=0.9\linewidth]{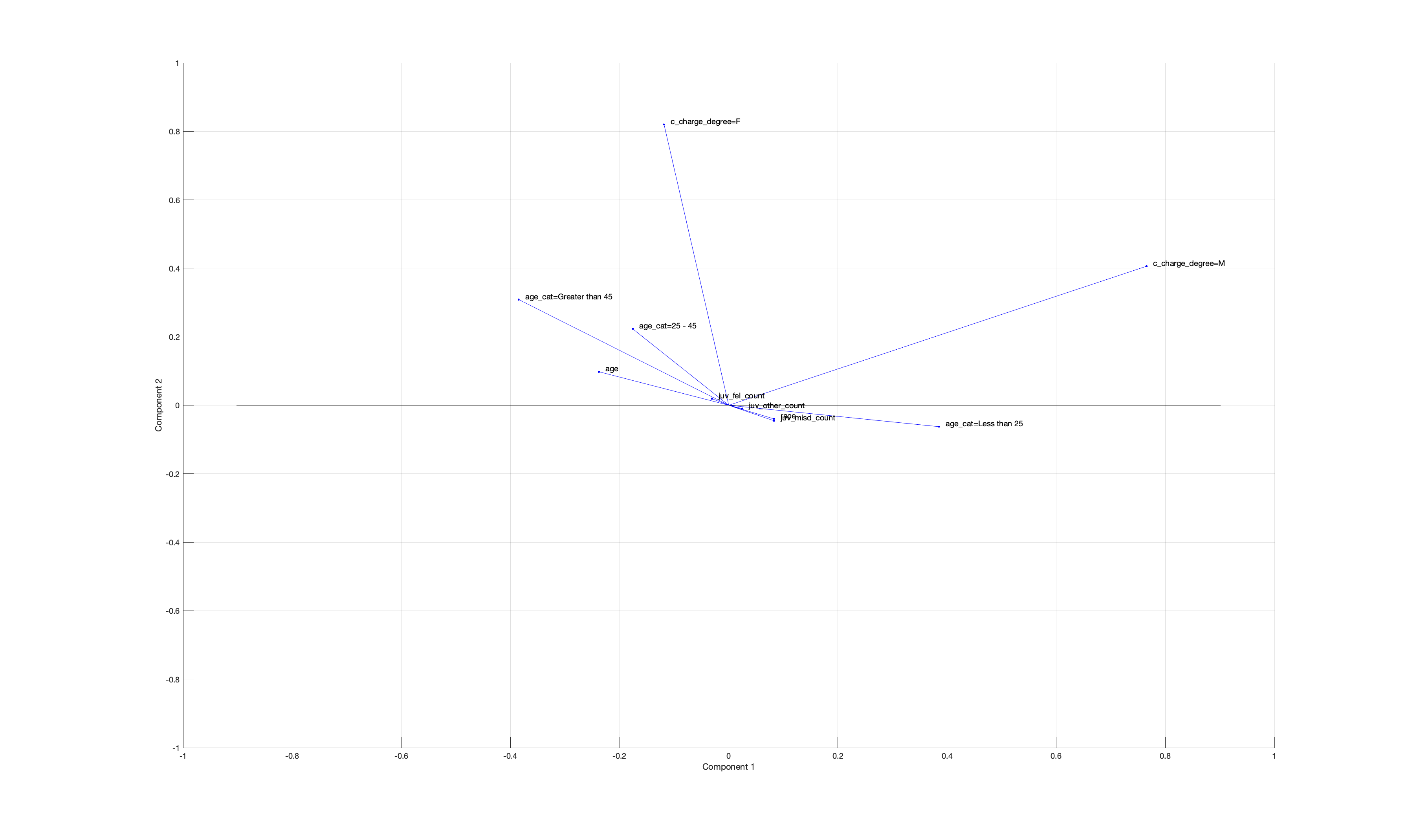}
    \caption{PC biplot of \textsc{MbF-PCA} for COMPAS dataset}
    \label{fig:compas-biplot-mbfpca}
\end{figure*}
\vfill

\begin{figure*}[!h]
	\begin{center}
		\begin{subfigure}[t]{0.49\linewidth}
				\includegraphics[width=\linewidth]{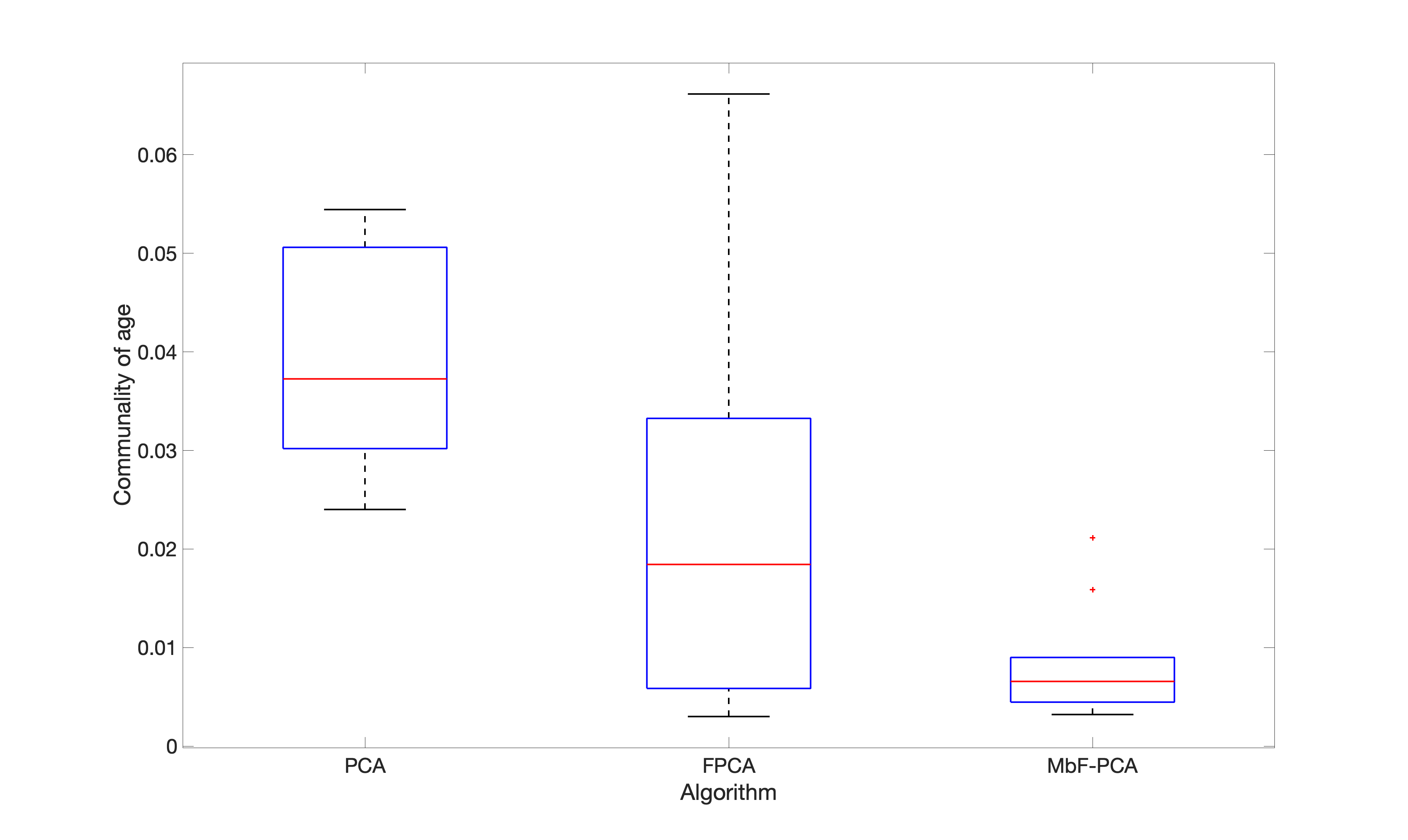}
				\caption{\label{fig:german-2-box} $d = 2$}
			\end{subfigure}\hfill
		\begin{subfigure}[t]{0.49\linewidth}
				\includegraphics[width=\linewidth]{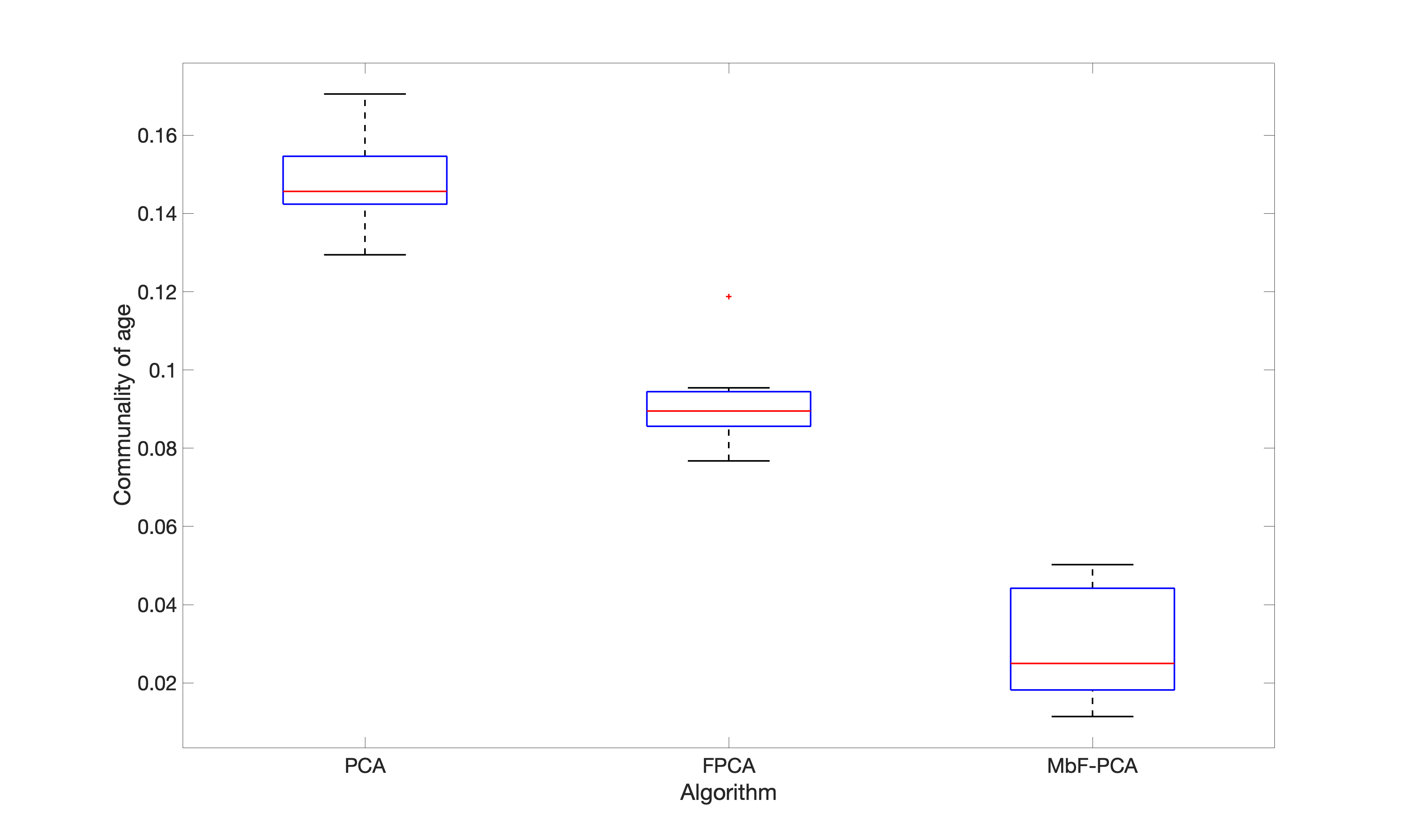}
				\caption{\label{fig:german-10-box} $d = 10$}
			\end{subfigure}
		\caption{\label{fig:german-box} Boxplots of communalities of age for German credit dataset}
	\end{center}
\end{figure*}
\begin{figure*}[!h]
    \centering
    \includegraphics[width=0.9\linewidth]{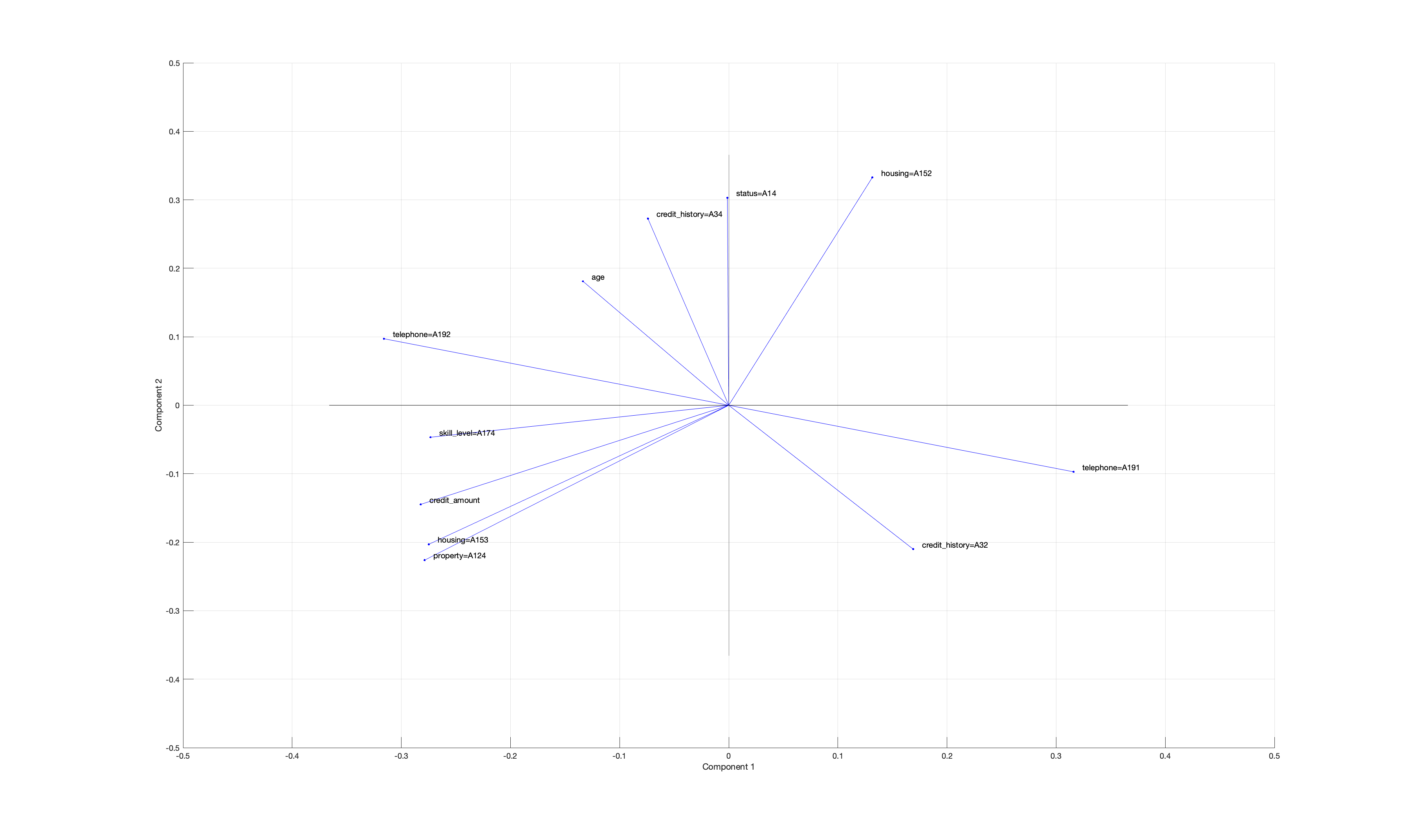}
    \caption{PC biplot of PCA for German credit dataset}
    \label{fig:german-biplot-pca}
\end{figure*}
\begin{figure*}[!h]
    \centering
    \includegraphics[width=0.9\linewidth]{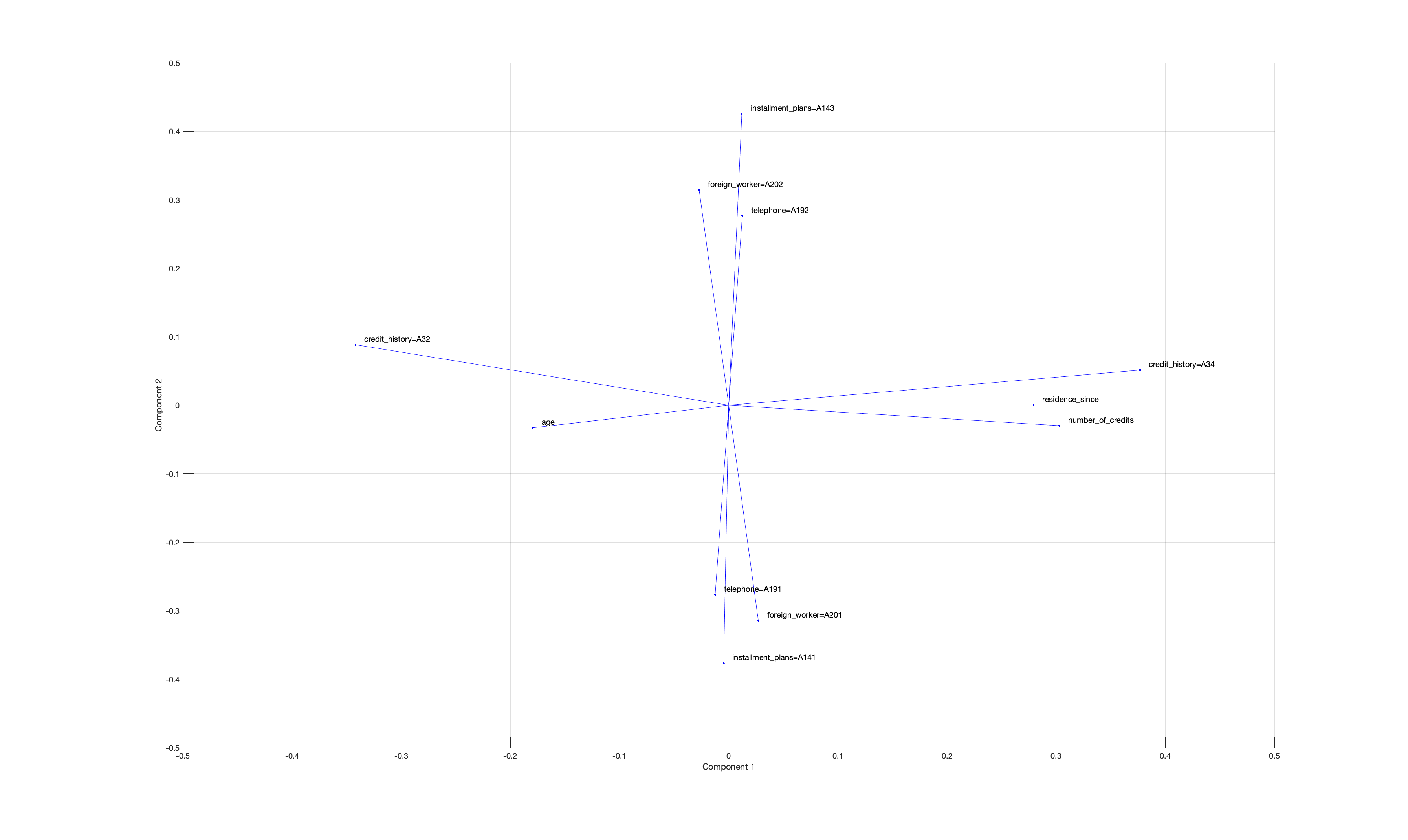}
    \caption{PC biplot of \textsc{FPCA} for German credit dataset}
    \label{fig:german-biplot-fpca}
\end{figure*}
\begin{figure*}[!h]
    \centering
    \includegraphics[width=0.9\linewidth]{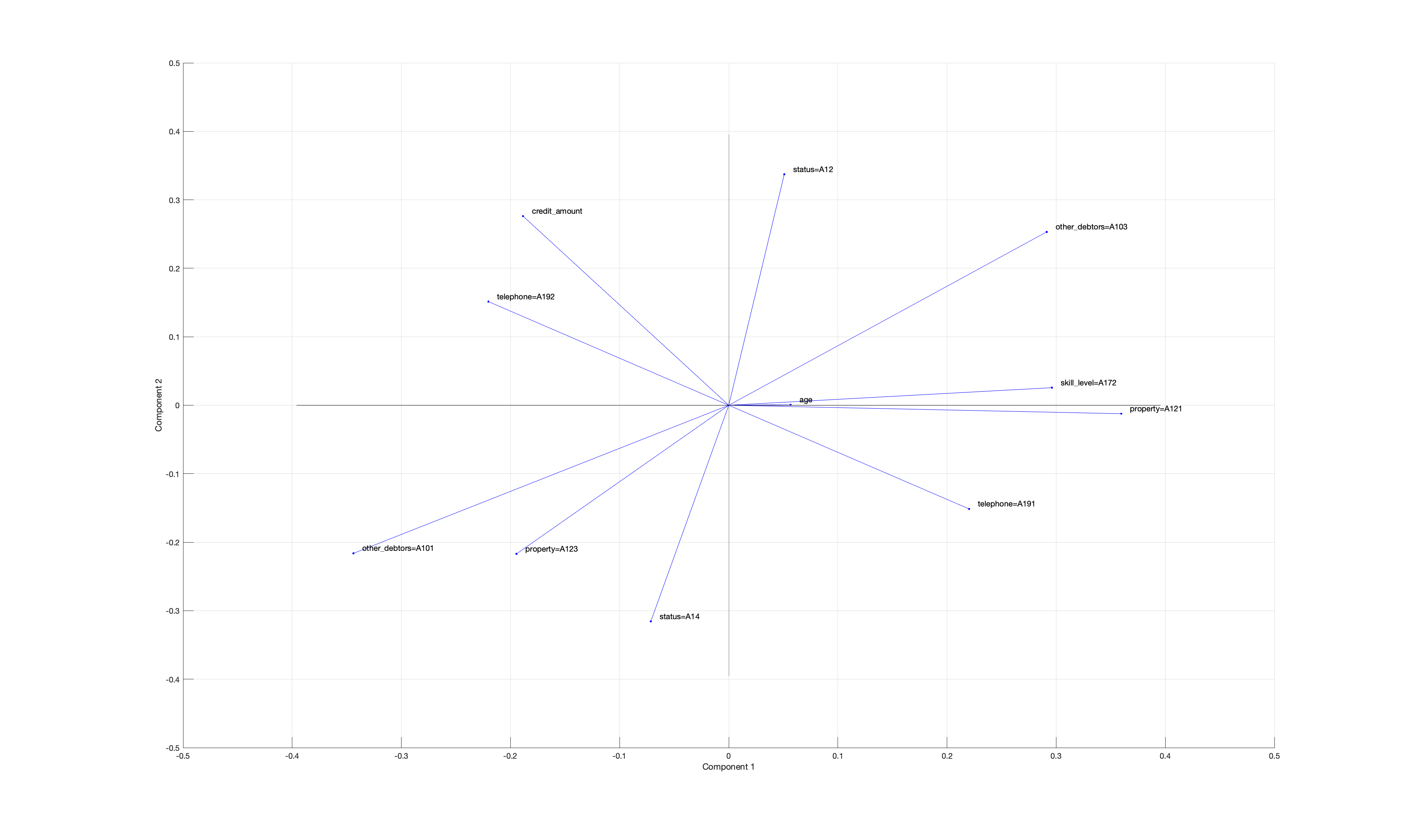}
    \caption{PC biplot of \textsc{MbF-PCA} for German credit dataset}
    \label{fig:german-biplot-mbfpca}
\end{figure*}
\vfill

\begin{figure*}[!h]
	\begin{center}
		\begin{subfigure}[t]{0.49\linewidth}
				\includegraphics[width=\linewidth]{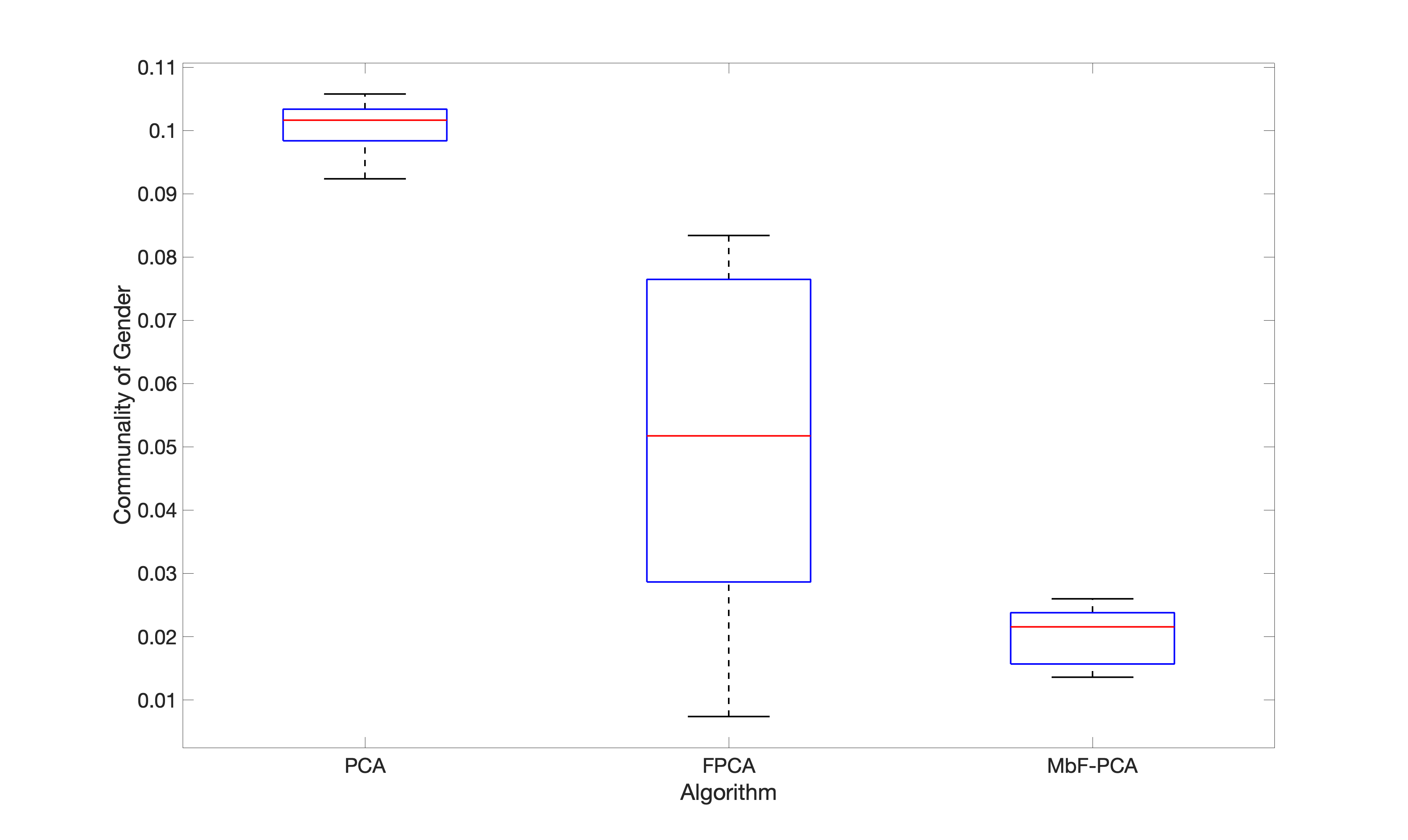}
				\caption{\label{fig:adult-2-box} $d = 2$}
			\end{subfigure}\hfill
		\begin{subfigure}[t]{0.49\linewidth}
				\includegraphics[width=\linewidth]{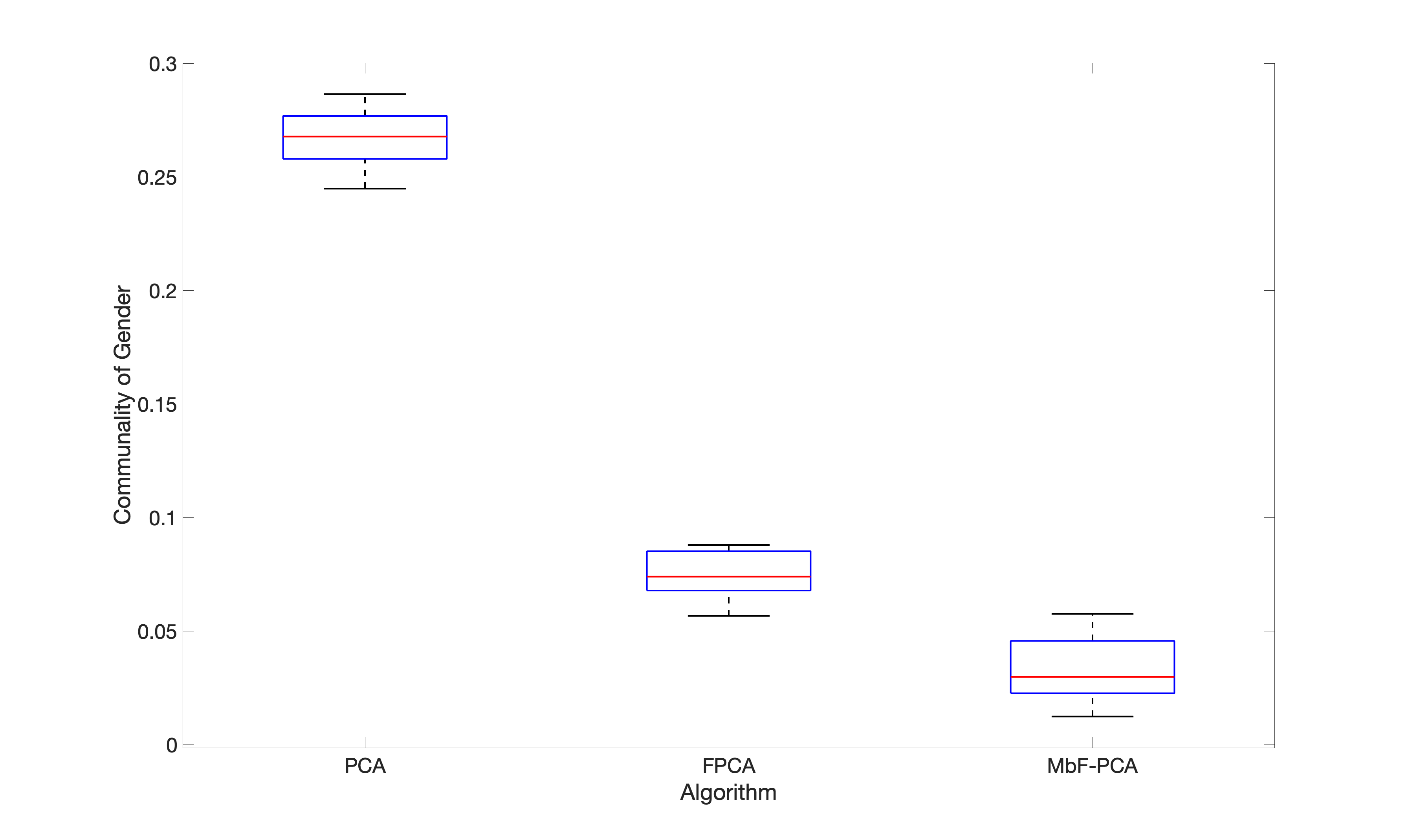}
				\caption{\label{fig:adult-10-box} $d = 10$}
			\end{subfigure}
		\caption{\label{fig:adult-box} Boxplots of communalities of gender(sex) for Adult income dataset}
	\end{center}
\end{figure*}
\begin{figure*}[!h]
    \centering
    \includegraphics[width=0.9\linewidth]{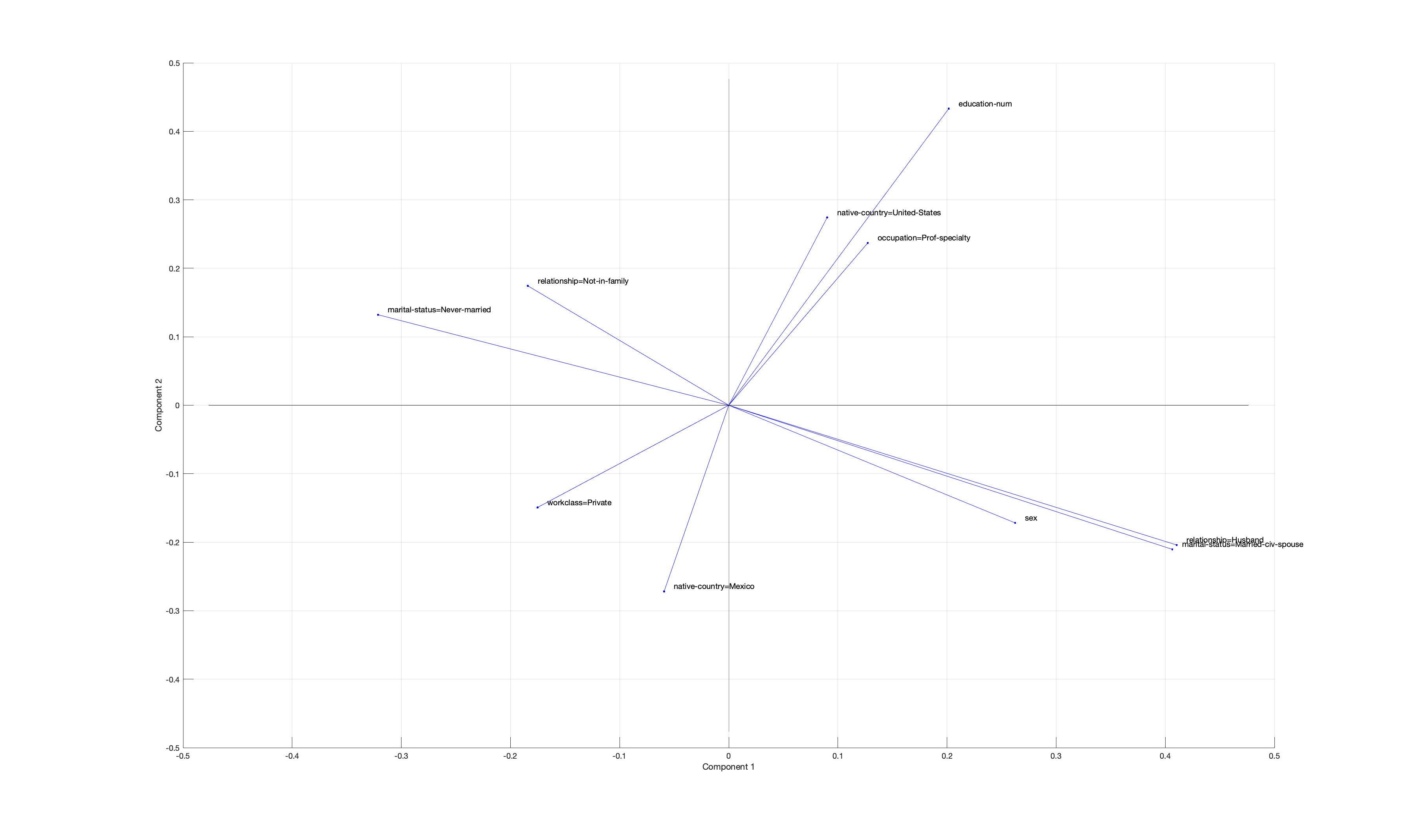}
    \caption{PC biplot of PCA for Adult income dataset}
    \label{fig:adult-biplot-pca}
\end{figure*}
\begin{figure*}[!h]
    \centering
    \includegraphics[width=0.9\linewidth]{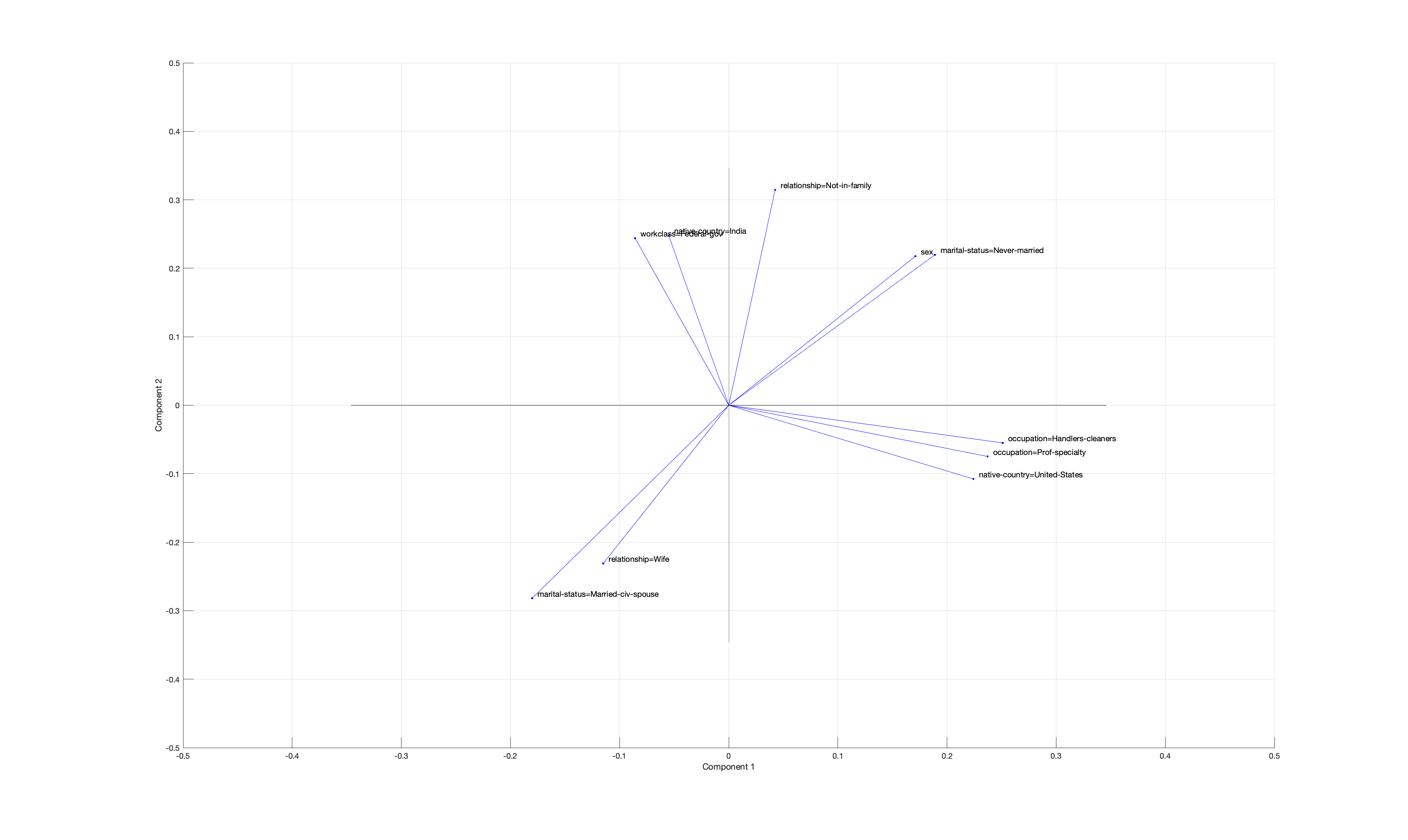}
    \caption{PC biplot of \textsc{FPCA} for Adult income dataset}
    \label{fig:adult-biplot-fpca}
\end{figure*}
\begin{figure*}[!h]
    \centering
    \includegraphics[width=0.9\linewidth]{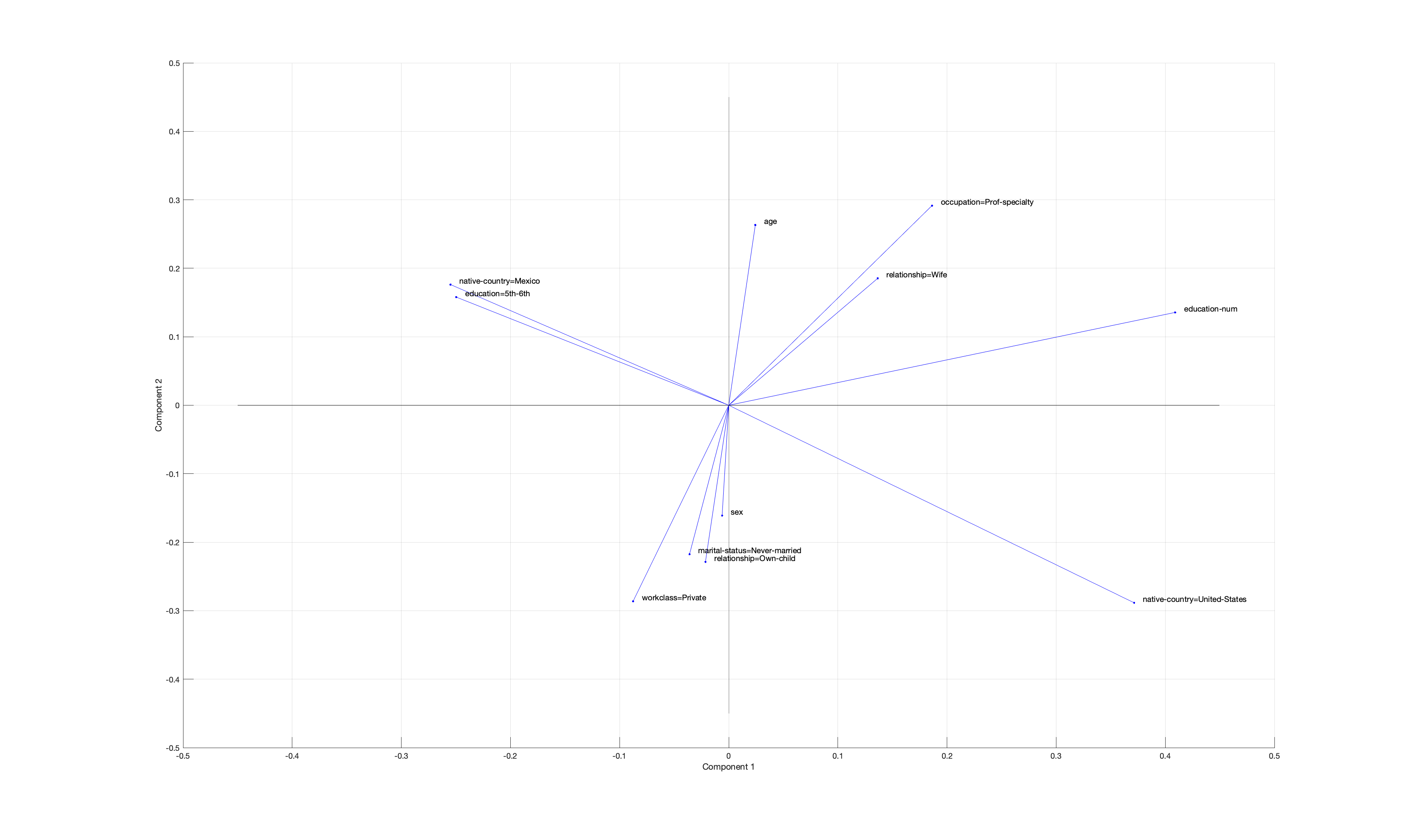}
    \caption{PC biplot of \textsc{MbF-PCA} for Adult income dataset}
    \label{fig:adult-biplot-mbfpca}
\end{figure*}
\vfill

\end{document}